\definecolor{darkblue}{rgb}{0.0, 0.0, 0.55}
\newtheorem{theorem}{Theorem}[section]
\newtheorem{lemma}[theorem]{Lemma}
\newtheorem{proposition}[theorem]{Proposition}
\newtheorem{assumption}{Assumption}
\providecommand{\norm}[1]{\left\lVert#1\right\rVert}
\providecommand{\R}{\mathbb{R}} 
\providecommand{\E}{{\mathbb E}}
\providecommand{\E}[1]{{\mathbb E}\left.#1\right. }        
\providecommand{\EE}[2]{{\mathbb E}_{#1}\left.#2\right. }  
\providecommand{\EEb}[2]{{\mathbb E}_{#1}\left[#2\right] } 
\providecommand{\1}{\mathbf{1}}
\providecommand{\xx}{\mathbf{x}}
\providecommand{\yy}{\mathbf{y}}
\providecommand{\mA}{\mathbf{A}}
\providecommand{\mB}{\mathbf{B}}
\providecommand{\mI}{\mathbf{I}}
\providecommand{\mU}{\mathbf{U}}
\providecommand{\mW}{\mathbf{W}}
\providecommand{\mX}{\mathbf{X}}
\providecommand{\mY}{\mathbf{Y}}
\providecommand{\mLambda}{\boldsymbol{\Lambda}}
\providecommand{\cD}{\mathcal{D}}
\providecommand{\cO}{\mathcal{O}}
\newcommand*{\algrule}[1][\algorithmicindent]{\makebox[#1][l]{\hspace*{.5em}\thealgruleextra\vrule height \thealgruleheight depth \thealgruledepth}}%
\newcommand*{\thealgruleextra}{}
\newcommand*{\thealgruleheight}{.75\baselineskip}
\newcommand*{\thealgruledepth}{.25\baselineskip}
\def\ALG@printindent{%
	\ifnum \theALG@nested>0
	\ifx\ALG@text\ALG@x@notext
	\else
		\unskip
		\addvspace{-1pt}
		\ALG@printindent@tempcnta=1
		\loop
		\algrule[\csname ALG@ind@\the\ALG@printindent@tempcnta\endcsname]%
		\advance \ALG@printindent@tempcnta 1
		\ifnum \ALG@printindent@tempcnta<\numexpr\theALG@nested+1\relax
			\repeat
		\fi
	\fi
}%
\patchcmd{\ALG@doentity}{\noindent\hskip\ALG@tlm}{\ALG@printindent}{}{\errmessage{failed to patch}}
\newbox\statebox
\newcommand{\myState}[1]{%
	\setbox\statebox=\vbox{#1}%
	\edef\thealgruleheight{\dimexpr \the\ht\statebox+1pt\relax}%
	\edef\thealgruledepth{\dimexpr \the\dp\statebox+1pt\relax}%
	\ifdim\thealgruleheight<.75\baselineskip
		\def\thealgruleheight{\dimexpr .75\baselineskip+1pt\relax}%
	\fi
	\ifdim\thealgruledepth<.25\baselineskip
		\def\thealgruledepth{\dimexpr .25\baselineskip+1pt\relax}%
	\fi
	\State #1%
	\def\thealgruleheight{\dimexpr .75\baselineskip+1pt\relax}%
	\def\thealgruledepth{\dimexpr .25\baselineskip+1pt\relax}%
}
\newcommand{\Ximax}{\Xi_{\text{max}}} 
\newcommand{\phiavg}{\phi^{\text{avg}}} 
\newcommand{\phiema}{\phi^{\text{ema}}}
\icmltitlerunning{Consensus Control for Decentralized Deep Learning}
\begin{document}

\twocolumn[
	\icmltitle{Consensus Control for Decentralized Deep Learning}

	\icmlsetsymbol{equal}{*}

	\begin{icmlauthorlist}
		\icmlauthor{Lingjing Kong}{to,equal}
		\icmlauthor{Tao Lin}{to,equal}
		\icmlauthor{Anastasia Koloskova}{to}
		\icmlauthor{Martin Jaggi}{to}
		\icmlauthor{Sebastian U. Stich}{to}
	\end{icmlauthorlist}

	\icmlaffiliation{to}{EPFL, Lausanne, Switzerland}

	\icmlcorrespondingauthor{Tao Lin}{tao.lin@epfl.ch}

	\icmlkeywords{Machine Learning, ICML}

	\vskip 0.3in
]


\printAffiliationsAndNotice{\icmlEqualContribution} 

\setlength{\parskip}{1.5mm}  


\begin{abstract}
	Decentralized training of deep learning models enables on-device learning over networks,
	as well as efficient scaling to large compute clusters.
	Experiments in earlier works reveal that, even in a data-center setup, decentralized training
	often suffers from the degradation in the quality of the model:\
	the training and test performance of models trained in a decentralized fashion is in general worse
	than that of models trained in a centralized fashion,
	and this performance drop is impacted by parameters such as
	network size, communication topology and data partitioning.
	\\%
	We identify the changing consensus distance between devices as a key parameter to explain the gap between centralized and decentralized training.
	We show in theory that
	when the training consensus distance is lower than a critical quantity,
	decentralized training converges as fast as the centralized counterpart.
	We empirically validate that the relation between generalization performance
	and consensus distance is consistent with this theoretical observation.
	Our empirical insights
	allow the principled design of better decentralized training schemes that mitigate the performance drop.
	To this end, we provide practical training guidelines and exemplify its effectiveness on the data-center setup as the important first step.
\end{abstract}

\section{Introduction}

The impressive successes of machine learning, witnessed in the last decade, have been accompanied by a steady increase in the size, complexity, and computational requirements of training systems.
In response to these challenges,
distributed training algorithms (i.e.\ data-parallel large mini-batch SGD)
have been developed for the use in data-centers~\citep{goyal2017accurate,you2018imagenet,shallue2018measuring}.
These state-of-the-art (SOTA) training systems rely on the All-Reduce communication primitive
to perform exact averaging on the local mini-batch gradients computed on different subsets of the data,
for the later synchronized model update.
However, exact averaging with All-Reduce is sensitive to the communication hardware of the training system,
causing the bottleneck in efficient deep learning training.
To address this issue, decentralized training
has become an indispensable training paradigm for efficient large scale training in data-centers~\citep{assran2019stochastic},
alongside its orthogonal benefits on preserving users' privacy for edge AI~\citep{bellet2018personalized,kairouz2019advances}.

Decentralized SGD (D-SGD) implementations %
trade off the exactness of the averaging provided by All-Reduce, with more efficient, but inexact, communication over sparse typologies.
However, this often
results in a severe drop in the training and/or test performance
(i.e.\ generalization gap),
even after hyper-parameter fine-tuning \citep[see our Table~\ref{tab:understanding_the_impact_of_communication_topologies_on_dl} as well as Tables 1--3 in][]{assran2019stochastic}.
This phenomenon is poorly understood even in relatively straightforward i.i.d.\ data distribution scenarios (i.e.\ the data-center case),
to which very few works are dedicated (in fact none of them provide insights into the generalization performance).
\begin{table}[!h]
	\vspace{-0.5em}
	\centering
	\caption{\small
		\textbf{Significant generalization gap for decentralized training} on a sparse ring topology
		(ResNet-20 on CIFAR-10 with $n\!\in\!\{16,32, 64\}$ workers).
		Decentralized SGD (D-SGD) communicates model parameters through the gossip averaging.
		Test top-1 accuracies averaged over three seeds with fine-tuned learning rates.\looseness=-1
	}
	\vspace{-1.em}
	\resizebox{.4\textwidth}{!}{%
		\begin{tabular}{cccc}
			\toprule
			     & AllReduce (complete) & D-SGD (ring)     \\ \midrule
			n=16 & $92.91 \pm 0.12$     & $92.40 \pm 0.10$ \\
			n=32 & $92.82 \pm 0.27$     & $91.81 \pm 0.09$ \\
			n=64 & $92.71 \pm 0.11$     & $89.58 \pm 0.20$ \\
			\bottomrule
		\end{tabular}%
	}
	\vspace{-0.5em}
	\label{tab:understanding_the_impact_of_communication_topologies_on_dl}
\end{table}
In this work,
we investigate the trade-off between
the train/test performance and the exactness of the averaging, measured in terms of consensus distance,
i.e.\ the average discrepancy between each node and the mean of model parameters over all machines.
We identify this consensus distance as the key parameter that captures the joint effect of decentralization.

While one might suspect that a smaller consensus distance would improve performance in any case, we identify several interesting phenomena.
(i) We identify a \emph{diminishing return} phenomenon:\
if the consensus distance stays below a critical value (critical consensus distance),
decreasing the consensus distance further does not yield any additional performance gains.
For the main interests of this work, deep learning training, we (ii) identify the pivotal initial training phase
where the critical consensus distance matters and the training consensus distance heavily influences the final training and generalization performance,
and (iii) large consensus distance in later training phases can even be beneficial.

Our findings have far-reaching consequences for practice:\
By (iv) using consensus control as a principled tool to find, adaptively during training, the appropriate
trade-off between targeted generalization performance and affordable communication resources, it is possible to exploit the efficiency benefits of decentralized methods without sacrificing quality.
While our numerical study,
on Computer Vision (CV) tasks (CIFAR-10 and ImageNet-32)
as well as Natural Language Processing (NLP) tasks (transformer models for machine translation),
mainly focuses on the data-center setting with homogeneous nodes,
our findings also apply to decentralized training over time-varying topologies and the more difficult heterogeneous setting alike.

\section{Related Work}
\subsection{Decentralized Learning}
For general decentralized optimization,
common algorithms are either gradient-based methods with gossip averaging steps~\citep{kempe2003gossip,xiao2004fast,boyd2006randomized},
or problem-structure dependent methods, such as primal-dual methods~\citep{hong2017prox,sun2019distributed}.
In this work, we focus on non-convex decentralized deep learning problems
and only consider gradient-based methods with gossip averaging---%
methods that do not support stochastic gradients (not suitable for deep learning)
are omitted for the discussion.

The convergence rate of gossip averaging towards the consensus among the nodes can be expressed in terms of the (expected) spectral gap
of the mixing matrix.
\citet{lian2017can} combine SGD with gossip averaging for deep learning and show that the leading term in the convergence rate
$\smash{\cO\big(\frac{1}{n\varepsilon^2}\big)}$ %
is consistent with the convergence of the centralized mini-batch SGD~\citep{dekel2012optimal}
and the spectral gap only affects the asymptotically smaller terms.
Similar results have been observed very recently
for related schemes~\citep{scaman2017optimal,scaman2018optimal,%
	koloskova2019choco,koloskova2020decentralized,%
	koloskova2020unified,vogels2020powergossip}.
To reduce the communication overhead  (number of peer-to-peer communications),
sparse topologies have been studied recently~\citep{assran2019stochastic,Wang2019:matcha,wang2020exploring,nadiradze2020swarmsgd}.
Whilst a few recent works focus on the impact of the topology on the optimization performance~\citep{luo2019hop,neglia2020decentralized}, %
we here identify the consensus distance as a more canonical parameter that can characterize the overall effect of decentralized learning, beyond only the topology.
Through this, we are able to
provide deeper understanding of the more fine-grained impact of the evolution of the actual consensus distance on the optimization/generalization performance of deep learning.\looseness=-1
Prior works propose to
either perform a constant number of gossip steps every round~\citep{tsianos2016efficient,scaman2017optimal,Jiang2017:collaborative,%
	Jiang2018:consensus,Sharma2019:repeated} to increase the averaging quality,
or choose carefully tuned learning rates~\citep{Tsitsiklis1985:gossip,Nedic2009:distributedsubgrad,%
	Duchi2012:distributeddualaveragig,Yuan2016:decentralized} to improve the convergence.
However, these works do not analyze the varying effect of consensus distance in the phases of training explicitly.
In contrast, we identify the existence of a \emph{critical} consensus distance, \emph{adapt} gossip step numbers to the target distance on the fly, and provide insights into how consensus distance at different training phases impacts the decentralized deep learning.

Appendix~\ref{sec:connection_with_prior_work} further details the relationship between consensus distance and other training metrics influential to the final performance
(e.g.\ gradient diversity in~\citep{yin2018gradient, johnson2020adascale}).
Besides, we connect the insights into better generalization~\citep{lin2020dont}
with other interpretations in~\citep{izmailov2018averaging, gupta2020stochastic}.

\subsection{Critical Learning Phase in Deep Learning}
The connection between optimization and generalization of deep learning training is not fully understood.
A line of work on understanding the early phase of training
has recently emerged as a promising avenue for studying this connection.
For instance, \citet{keskar2016large, sagun2017empirical, achille2018critical, golatkar2019time, frankle2020the}
point out the existence of a ``critical phase''
for regularizing deep networks, which is decisive for the final generalization ability.
\citet{achille2018critical, jastrzebski2018on, fort2019emergent, jastrzebski2020the}
empirically demonstrate the rapid change in the local shape of the loss surface in the initial training phase.

In this work, we reach a similar conclusion for decentralized deep learning:\
we identify the importance of the initial training phase through the lens of consensus distance.

\section{Theoretical Understanding}\label{sec:theory}
In this section, we study the trade-off between training performance and the exactness of parameter averaging, and establish the notion of critical consensus distance.

For the sake of simplicity, we consider decentralized stochastic gradient descent (D-SGD) without momentum in this section, and focus on the optimization difficulty in our theoretical analysis.
Theoretically analyzing the generalization performance for deep learning is an open problem and not intended in this work.
Instead we provide extensive empirical evaluation, addressing generalization for both D-SGD with and without momentum in Section~\ref{sec:empirical_insights}. All proofs are deferred to Appendix~\ref{sec:proofs}.\looseness=-1

\subsection{Notation and Setting}
The agents are tasked to solve a sum-structured optimization problem $f \colon \R^d \to \R$ of the form
\begin{align}
	f^\star := \textstyle \min_{\xx \in \R^d} \left[ f(\xx) := \frac{1}{n} \sum_{i=1}^n f_i(\xx) \right] \,,
\end{align}
where the components $f_i\colon \R^d \rightarrow \R$ are distributed among the $n$ nodes and are given in stochastic form:
$
	f_i(\xx) := \EEb{\xi \sim \cD_i}{F_i (\xx, \xi)}
$,
where $\cD_i$ denotes the local data distribution on node $i \in [n]$. For data-center settings, where data is re-shuffled periodically among nodes, these distributions are identical, but in other scenarios there can be differences between nodes.
In D-SGD,
each agent $i \in [n]$ maintains local parameters $\smash{\xx_i^{(t)}} \in \R^d$,
and updates them as:
\begin{small}
	\begin{align}\label{eq:d-sgd}
		\textstyle
		\xx_{i}^{(t+1)} = \sum_{j=1}^n w_{ij} \left(\xx_j^{(t)} - \eta \nabla F_j (\xx_j^{(t)},\xi_j^{(t)})\right)\,, \tag{D-SGD}
	\end{align}
\end{small}%
that is, by a stochastic gradient step based on a sample $\xi_i^{(t)}\sim \cD_i$, followed by gossip averaging with neighboring nodes in the network encoded by the mixing weights $w_{ij}$.
As parameters can differ across nodes, we define $\bar{\xx} := \frac{1}{n} \sum_{i=1}^n \xx_i$ and $\mX := \left[ \xx_1, \ldots, \xx_n \right] \in \R^{d \times n}$,
and  $\bar{\mX} := \left[ \bar{\xx}, \ldots, \bar{\xx} \right] \equiv \mX \frac{1}{n} \1 \1^\top$.
\begin{assumption}[Mixing matrix]\label{a:W}
	Every sample of the (possibly randomized) mixing matrix $\mW \!=\! \{ w_{ij} \} \in \R^{n \times n}$
	is doubly stochastic and there exists a parameter $p > 0$ s.t.\
	\begin{small}
		\begin{align}
			\textstyle
			\E_\mW \norm{ \mX \mW - \bar{\mX} }_F^2  \leq (1 - p) \norm{\mX - \bar{\mX}}_F^2
			, \forall \mX \in \R^{d \times n}.
		\end{align}
	\end{small}
\end{assumption}
This assumption covers a broad variety of settings~\citep[see e.g.][]{koloskova2020unified}, such as D\nobreakdash-SGD with fixed (constant) mixing matrix with spectral gap $\rho$, with parameter $p = 1 - (1 - \rho)^2 = \Theta(\rho)$, but also for randomly chosen mixing matrices, for instance random matchings.
\begin{assumption}[$L$-smoothness]\label{a:lsmooth_nc}
	Each function $f_i(\xx) \colon \R^d \to \R$, $i \in [n]$
	is differentiable and there exists a constant $L \geq 0$ such that for each $\xx, \yy \in \R^d$:
	$
		\norm{\nabla f_i(\xx) - \nabla f_i(\yy) } \leq L \norm{\xx -\yy}\,. %
	$
\end{assumption}
\begin{assumption}[Bounded noise $\sigma$ and diversity $\zeta$]\label{a:noise}
	There exists constants $\sigma^2, \zeta^2$ s.t. $\forall \xx_1, \dots \xx_n \in \R^d$\vspace{-1mm}
	\begin{small}
		\begin{align}
			\begin{split}
				\textstyle
				&\frac{1}{n} \sum_{i = 1}^n \EE{\xi_i}{\norm{\nabla F_i(\xx_i, \xi_i) - \nabla f_i(\xx_i)}}^2_2 \leq \sigma^2 \,, \\
				\textstyle
				&\frac{1}{n} \sum_{i = 1}^n \norm{\nabla f_i(\xx_i) - \nabla f(\xx_i)}^2_2 \leq \zeta^2 \,.
			\end{split}
		\end{align}
	\end{small}
\end{assumption}%

\subsection{Decentralized Consensus Optimization}
Under the above standard assumptions in decentralized optimization, the convergence rate of \eqref{eq:d-sgd} has been shown as follows:
\begin{theorem}[\citet{koloskova2020unified}]
	\label{thm:1}
	Let $f_i$ be $L$-smooth and stepsize $\gamma \leq \gamma_{\rm max} = \cO \big( \smash{\frac{p}{L}} \big)$. Then there exists an optimal stepsize $\gamma \leq  \gamma_{\rm max}$ such that $\frac{1}{T} \sum_{t = 0}^{T-1} \E \norm{\nabla f(\bar\xx^{(t)})}_2^2 \leq \varepsilon$ for
	\begin{align*}
		T = \cO\bigg(\frac{\sigma^2}{n\varepsilon^2}  + \frac{\sqrt{p}\sigma + \zeta}{p\varepsilon^{3/2}} + \frac{1}{p\varepsilon}\bigg) \cdot L (f(\xx_0) - f^\star) \,.
	\end{align*}
\end{theorem}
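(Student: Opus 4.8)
The plan is to run the standard two-sequence analysis for decentralized SGD: alongside the averaged iterate $\bar{\xx}^{(t)} = \frac{1}{n}\sum_{i} \xx_i^{(t)}$ I track the expected consensus deviation $\Xi_t := \E\norm{\mX^{(t)} - \bar{\mX}^{(t)}}_F^2$, derive a descent inequality for $f(\bar{\xx}^{(t)})$ coupled to $\Xi_t$, derive a contraction recursion for $\Xi_t$ coupled back to $\E\norm{\nabla f(\bar{\xx}^{(t)})}^2$, close the loop between the two, and finally tune the stepsize.

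\emph{Descent along the average.} Since every sampled $\mW$ is doubly stochastic, averaging commutes with gossip and \eqref{eq:d-sgd} gives $\bar{\xx}^{(t+1)} = \bar{\xx}^{(t)} - \tfrac{\gamma}{n}\sum_{j} \nabla F_j(\xx_j^{(t)},\xi_j^{(t)})$. Using that $f$ is $L$-smooth (Assumption~\ref{a:lsmooth_nc}), taking the conditional expectation over the samples at step $t$, writing $\tfrac{1}{n}\sum_j \nabla f_j(\xx_j^{(t)}) = \nabla f(\bar{\xx}^{(t)}) + \big(\tfrac{1}{n}\sum_j \nabla f_j(\xx_j^{(t)}) - \nabla f(\bar{\xx}^{(t)})\big)$ and bounding the discrepancy by $\tfrac{L^2}{n}\Xi_t$ via Jensen and smoothness, one Young step yields
\begin{align*}
	\E f(\bar{\xx}^{(t+1)}) \le \E f(\bar{\xx}^{(t)}) - \tfrac{\gamma}{2}\E\norm{\nabla f(\bar{\xx}^{(t)})}^2 + \tfrac{\gamma L^2}{2n}\Xi_t + \tfrac{L\gamma^2 \sigma^2}{2n}\,,
\end{align*}
where cross-node independence of the noise (Assumption~\ref{a:noise}) produces the $\tfrac1n$ in the last term and a spurious $\norm{\tfrac{1}{n}\sum_j \nabla f_j(\xx_j^{(t)})}^2$ term is dropped using $\gamma \le 1/L$.

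\emph{Consensus contraction.} Since $\mW \tfrac{1}{n}\1\1^\top = \tfrac{1}{n}\1\1^\top$, one has $\mX^{(t+1)} - \bar{\mX}^{(t+1)} = \big((\mX^{(t)} - \bar{\mX}^{(t)}) - \gamma\,\widetilde{\partial F}^{(t)}\big)\mW$, with $\widetilde{\partial F}^{(t)}$ the column-centered stochastic-gradient matrix, and Assumption~\ref{a:W} contracts the $\mW$-action by $(1-p)$. Splitting the stochastic gradient into mean and mean-zero, cross-node-independent noise, the noise contributes only $\cO(\gamma^2 n\sigma^2)$ \emph{without} a $1/p$ amplification, whereas for the deterministic part a Young step with parameter $\Theta(p)$ turns the leading factor into $1-\tfrac{p}{2}$; bounding $\E\norm{\widetilde{\partial f}^{(t)}}_F^2 \le 3\big(n\zeta^2 + L^2\Xi_t + n\,\E\norm{\nabla f(\bar{\xx}^{(t)})}^2\big)$ via Assumption~\ref{a:noise} and smoothness, and absorbing the $L^2\Xi_t$ term into the contraction factor (which, together with the feedback absorption below, forces $\gamma = \cO(p/L) = \gamma_{\rm max}$), gives $\Xi_{t+1} \le (1-\tfrac{p}{4})\Xi_t + \tfrac{c\gamma^2}{p}\big(n\zeta^2 + n\,\E\norm{\nabla f(\bar{\xx}^{(t)})}^2\big) + c\gamma^2 n\sigma^2$. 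With $\Xi_0 = 0$ (common initialization), unrolling the geometric recursion gives $\tfrac{1}{T}\sum_{t<T}\Xi_t \lesssim \tfrac{\gamma^2 n\sigma^2}{p} + \tfrac{\gamma^2 n\zeta^2}{p^2} + \tfrac{\gamma^2 n}{p^2}\cdot\tfrac{1}{T}\sum_{t<T}\E\norm{\nabla f(\bar{\xx}^{(t)})}^2$.

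\emph{Closing the loop and tuning.} Telescoping the descent inequality over $t=0,\dots,T-1$, dividing by $\gamma T/2$, and substituting the bound on $\tfrac{1}{T}\sum_t\Xi_t$, the gradient-norm feedback term carries coefficient $\cO(L^2\gamma^2/p^2)$; imposing $\gamma \le \gamma_{\rm max} = \cO(p/L)$ makes it at most $\tfrac12$, so it can be absorbed into the left-hand side --- this is exactly the stated $\gamma_{\rm max}$. What remains is $\tfrac{1}{T}\sum_{t<T}\E\norm{\nabla f(\bar{\xx}^{(t)})}^2 \lesssim \tfrac{f(\xx_0)-f^\star}{\gamma T} + \tfrac{L\sigma^2}{n}\gamma + \big(\tfrac{L^2\sigma^2}{p} + \tfrac{L^2\zeta^2}{p^2}\big)\gamma^2$. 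Invoking the standard stepsize-selection lemma (balance $\tfrac{A}{\gamma T}$ against $r_1\gamma + r_2\gamma^2$ subject to $\gamma \le \gamma_{\rm max}$) with $A = f(\xx_0)-f^\star$, $r_1 = L\sigma^2/n$ and $r_2 = \max\{L^2\sigma^2/p,\, L^2\zeta^2/p^2\}$ converts the requirement that this quantity be $\le\varepsilon$ into $T = \cO\big(\tfrac{r_1 A}{\varepsilon^2} + \tfrac{\sqrt{r_2}\,A}{\varepsilon^{3/2}} + \tfrac{A}{\gamma_{\rm max}\varepsilon}\big)$; since $\sqrt{r_2} \asymp \tfrac{L}{p}(\sqrt{p}\,\sigma + \zeta)$, this is exactly the claimed rate. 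I expect the main obstacle to be the $p$-bookkeeping rather than any individual estimate: one must keep $\sigma$ out of the $1/p$-amplified part of the consensus recursion (otherwise the sharp $\sqrt{p}\,\sigma$ degrades to $\sigma$), choose the Young parameter and the thresholds $\gamma \le 1/L$, $\gamma = \cO(p/L)$ consistently, and check that the $\Xi$-to-gradient feedback is genuinely absorbed rather than merely bounded; the rest is routine telescoping plus an off-the-shelf tuning lemma.
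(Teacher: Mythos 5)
Your proposal is correct and follows essentially the same route as the cited source and the paper's own supporting machinery: the descent inequality you derive is the paper's Lemma~\ref{lem:decrease_nc}, your consensus recursion (with the crucial point that the mean-zero noise enters without $1/p$ amplification while the deterministic gradient part picks up the $1/p$ from the Young step) is the paper's Lemma~\ref{lemma:distance}, and the closing step is the standard stepsize-tuning lemma. The paper itself imports Theorem~\ref{thm:1} from \citet{koloskova2020unified} without reproving it, and your reconstruction, including the $\gamma_{\rm max}=\cO(p/L)$ constraint needed to absorb the gradient-norm feedback and the bookkeeping that yields $\sqrt{r_2}\asymp \frac{L}{p}(\sqrt{p}\sigma+\zeta)$, matches that argument.
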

In comparison, for centralized mini-batch SGD (C-SGD) we are allowed to choose a potentially much larger stepsize $\gamma_{\rm max}' = \cO \big( \frac{1}{L} \big)$, and can bound the number of iterations by $ \smash{\cO\bigl(\frac{\sigma^2}{n\varepsilon^2} + \frac{1}{\varepsilon}\bigr)}$. While asymptotically both these rates are equivalent, they differ in the low accuracy setting when $\varepsilon$ is not too small. That is, especially in the first phase of optimization where the lower order terms matter~\cite{bottou-bousquet-2008,keivan2021:critical}.

As our first theoretical contribution, we show that if the individual iterates of the agents stay sufficiently close, then D-SGD can converge as fast as C-SGD.
To measure this difference between agents, we use the \emph{consensus distance}
\begin{align*}\textstyle
	\Xi_t^2 := \frac{1}{n}\sum_{i=1}^n \big\|\bar \xx^{(t)} - \xx_i^{(t)}\big\|^2\,.
\end{align*}
\begin{proposition}[Critical Consensus Distance (CCD)]
	\label{rem:ccd}
	If the consensus distance is bounded by
	\begin{align}\label{eq:want}
		\Xi_t^2 \leq  \bigg( \frac{1}{Ln} \gamma \sigma^2 + \frac{1}{8 L^2}\norm{\nabla f(\bar \xx^{(t)})}^2 =: \Gamma_t^2 \bigg)
	\end{align}
	for all $t$, then in D-SGD we may choose larger stepsizes $\gamma \leq \gamma_{\max}' = \cO \big( \frac{1}{L} \big) $ and recover the convergence rate of C-SGD,
	that is $\smash{\cO\bigl(\frac{\sigma^2}{n\varepsilon^2} + \frac{1}{\varepsilon}\bigr)}$ \citep{dekel2012optimal, bottou2018optimization}.
	We refer to $\Gamma_t^2$ as \emph{critical consensus distance} (CCD).
\end{proposition}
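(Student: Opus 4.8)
The plan is to revisit the standard descent-lemma analysis for \eqref{eq:d-sgd} and isolate exactly where the constraint $\gamma \leq \gamma_{\max} = \cO(p/L)$ enters, then show that the bound \eqref{eq:want} neutralizes that constraint. First I would write the one-step progress inequality for the averaged iterate $\bar\xx^{(t)}$. Since the gossip step preserves the mean, we have $\bar\xx^{(t+1)} = \bar\xx^{(t)} - \gamma \cdot \frac{1}{n}\sum_i \nabla F_i(\xx_i^{(t)},\xi_i^{(t)})$, so applying $L$-smoothness of $f$ and taking expectation over the sampling gives, schematically,
\begin{align*}
	\E f(\bar\xx^{(t+1)}) \leq \E f(\bar\xx^{(t)}) - \frac{\gamma}{2}\E\norm{\nabla f(\bar\xx^{(t)})}^2 - \Big(\frac{\gamma}{2} - \frac{L\gamma^2}{2}\Big)\E\Big\|\tfrac1n\textstyle\sum_i \nabla f_i(\xx_i^{(t)})\Big\|^2 + \frac{L\gamma^2\sigma^2}{2n} + \frac{L\gamma}{2}\,\E\Big[\tfrac1n\textstyle\sum_i\norm{\nabla f_i(\xx_i^{(t)}) - \nabla f_i(\bar\xx^{(t)})}^2\Big]\,.
\end{align*}
The last term is the "consensus error" term; by $L$-smoothness (Assumption~\ref{a:lsmooth_nc}) it is bounded by $\frac{L^3\gamma}{2}\Xi_t^2$. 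In the usual proof one controls $\Xi_t^2$ only in an amortized/telescoped sense via the mixing inequality of Assumption~\ref{a:W}, and it is precisely that recursion which forces $\gamma$ to be small relative to $p$.

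Next I would substitute the hypothesis \eqref{eq:want}, $\Xi_t^2 \leq \frac{\gamma\sigma^2}{Ln} + \frac{1}{8L^2}\norm{\nabla f(\bar\xx^{(t)})}^2$, directly into $\frac{L^3\gamma}{2}\Xi_t^2$. This yields $\frac{L^3\gamma}{2}\Xi_t^2 \leq \frac{L^2\gamma^2\sigma^2}{2n} + \frac{L\gamma}{16}\norm{\nabla f(\bar\xx^{(t)})}^2$. The second piece is absorbed into the $-\frac{\gamma}{2}\E\norm{\nabla f(\bar\xx^{(t)})}^2$ descent term as long as $\frac{L\gamma}{16} \leq \frac{1}{4}$, i.e. $\gamma \leq \cO(1/L)$ — no dependence on $p$ — leaving a net coefficient of at least $\frac{\gamma}{4}$ on $\E\norm{\nabla f(\bar\xx^{(t)})}^2$; and provided also $\gamma \leq 1/L$ the term $-(\frac{\gamma}{2}-\frac{L\gamma^2}{2})\|\cdots\|^2$ is nonpositive and can be dropped. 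The first piece combines with the existing $\frac{L\gamma^2\sigma^2}{2n}$ to give an $\cO(L\gamma^2\sigma^2/n)$ noise term. Rearranging and telescoping $t=0,\dots,T-1$ then gives
\begin{align*}
	\frac{1}{T}\sum_{t=0}^{T-1}\E\norm{\nabla f(\bar\xx^{(t)})}^2 \leq \frac{4(f(\xx_0)-f^\star)}{\gamma T} + \cO\Big(\frac{L\gamma\sigma^2}{n}\Big)\,,
\end{align*}
which is exactly the C-SGD rate: tuning $\gamma = \min\{1/L,\ \Theta(\sqrt{n(f(\xx_0)-f^\star)/(L\sigma^2 T)})\}$ (the standard choice, legitimate because the admissible range is now $\gamma \leq \cO(1/L) = \gamma_{\max}'$) gives $T = \cO\big(\frac{\sigma^2}{n\varepsilon^2} + \frac{1}{\varepsilon}\big)\cdot L(f(\xx_0)-f^\star)$.

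The only genuinely delicate point — the "main obstacle" — is bookkeeping the constants so that the slack in \eqref{eq:want} (the factors $\frac{1}{Ln}$ and $\frac{1}{8L^2}$) is exactly what is needed: the $\frac{1}{8L^2}$ must be small enough that $\frac{L^3\gamma}{2}\cdot\frac{1}{8L^2}\norm{\nabla f}^2 = \frac{L\gamma}{16}\norm{\nabla f}^2$ is dominated by the descent term for all $\gamma \leq \cO(1/L)$, and one must double-check that the constant hidden in $\gamma_{\max}' = \cO(1/L)$ is consistent with both this requirement and with dropping the $-(\frac{\gamma}{2}-\frac{L\gamma^2}{2})\|\cdots\|^2$ term. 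I would also need to be slightly careful that \eqref{eq:want} is assumed to hold for the \emph{actual} (random) iterates at every $t$, so the substitution is valid inside the expectation — i.e. it is an almost-sure or in-expectation bound on $\Xi_t^2$; either reading works since we only ever use it after taking $\E$. Beyond that, the argument is a direct simplification of the proof of Theorem~\ref{thm:1}: the hypothesis removes the need for the mixing-matrix recursion entirely, which is the whole point.
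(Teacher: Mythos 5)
Your proposal follows essentially the same route as the paper: the paper's proof simply invokes the descent lemma of \citet{koloskova2020unified} for the averaged iterate (with consensus penalty proportional to $\gamma L^2\Xi_t^2$) and substitutes the CCD bound, which is exactly what you do after re-deriving that lemma from scratch. The one bookkeeping slip is the coefficient of the consensus-error term: it should be $\tfrac{\gamma}{2}$ times $\tfrac1n\sum_i\norm{\nabla f_i(\xx_i^{(t)})-\nabla f_i(\bar\xx^{(t)})}^2$ (hence $\tfrac{\gamma L^2}{2}\Xi_t^2$ after smoothness), not $\tfrac{L\gamma}{2}$ times it; with the correct coefficient the $\tfrac{1}{8L^2}\norm{\nabla f(\bar\xx^{(t)})}^2$ part of the CCD is absorbed into the descent term automatically, whereas your extra factor of $L$ would, as written, force the dimensionally inconsistent condition $L\le 4$ rather than $\gamma\le\cO(1/L)$ --- precisely the constant-tracking issue you flag as the delicate point.
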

Note that the CCD does not depend on the graph topology and that $\Gamma_t^2>0$,
which means that we do not need perfect consensus between agents to recover the C-SGD rate,
but we allow consensus distance $\Xi_t^2 \geq 0$ (i.e.\ the $\Xi_t^2 = 0$ $\forall t$,
as we have for centralized optimization, is sufficient but not necessary). %
In Section~\ref{sec:empirical_insights},
we empirically examine the existence of the critical consensus distance $\Xi_t^2$ in decentralized deep learning,
as we cannot compute the critical consensus distance in a closed-form (through $L$ and $\sigma^2$).

We now estimate the magnitude of the consensus distance in D-SGD and compare it to the CCD.
\begin{proposition}[Typical consensus distance]
	\label{rem:tcd}
	Let $\phi_t^2 := \frac{1}{n}\sum_{i=1}^n \big\|\nabla f_i(\xx_i^{(t)})\big\|^2$.
	Then under the assumption that $\gamma, p$ are constant, and the $\phi_t$ does not change too fast between iterations,
	i.e.\ not decreasing faster than exponentially:\ $\phi_{t}^2 \leq (1+p/4) \phi_{t+1}^2$, the consensus distance in D-SGD satisfies\vspace{-2mm}
	\begin{align}\label{eq:have}
		\Xi_t^2 = (1-p)\gamma^2 \cdot \cO\!\left(\frac{\phi_t^2}{p^2} + \frac{\sigma^2}{p}\right) \,.
	\end{align}
\end{proposition}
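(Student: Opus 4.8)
\emph{Proof plan.} The plan is to derive a contracting one-step recursion for the expected consensus distance $\E[\Xi_t^2]=\tfrac1n\E\norm{\mX^{(t)}-\bar{\mX}^{(t)}}_F^2$ and then to unroll it, using the slow-variation hypothesis on $\phi_t$ to collapse the resulting geometric sum of past gradient norms. First I would write the \eqref{eq:d-sgd} update in matrix form, $\mX^{(t+1)}=(\mX^{(t)}-\gamma\,\partial F^{(t)})\mW^{(t)}$, where $\partial F^{(t)}$ stacks the stochastic gradients $\nabla F_i(\xx_i^{(t)},\xi_i^{(t)})$ columnwise, and split $\partial F^{(t)}=\mG^{(t)}+\mZ^{(t)}$ into the true gradients $\mG^{(t)}:=[\nabla f_1(\xx_1^{(t)}),\dots,\nabla f_n(\xx_n^{(t)})]$ and the noise $\mZ^{(t)}$, which conditionally on $\mX^{(t)}$ has mean zero and $\E\norm{\mZ^{(t)}}_F^2\le n\sigma^2$ (Assumption~\ref{a:noise}) and which is independent of $\mW^{(t)}$. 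Because $\mW^{(t)}$ is doubly stochastic, $\bar{\mX}^{(t+1)}$ is the column-average of $\mX^{(t)}-\gamma\,\partial F^{(t)}$, so $\mX^{(t+1)}-\bar{\mX}^{(t+1)}=A^{(t)}+B^{(t)}$ with $A^{(t)}:=(\mX^{(t)}-\gamma\mG^{(t)})(\mW^{(t)}-\tfrac1n\1\1^\top)$ and $B^{(t)}:=-\gamma\mZ^{(t)}(\mW^{(t)}-\tfrac1n\1\1^\top)$. Conditioning on $\mX^{(t)}$ and $\mW^{(t)}$, the term $B^{(t)}$ has mean zero, so the cross term vanishes and $\E\norm{\mX^{(t+1)}-\bar{\mX}^{(t+1)}}_F^2=\E\norm{A^{(t)}}_F^2+\E\norm{B^{(t)}}_F^2$. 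Applying Assumption~\ref{a:W} to the matrix $\mZ^{(t)}$ gives $\E\norm{B^{(t)}}_F^2\le(1-p)\gamma^2\E\norm{\mZ^{(t)}}_F^2\le(1-p)\gamma^2 n\sigma^2$, crucially \emph{without} any factor $1/p$; applying Assumption~\ref{a:W} to $\mX^{(t)}-\gamma\mG^{(t)}$ contracts by $(1-p)$, after which Young's inequality with parameter $p/2$ separates $\mX^{(t)}-\bar{\mX}^{(t)}$ from $\gamma\mG^{(t)}$, and using $\norm{\mG^{(t)}-\bar{\mG}^{(t)}}_F^2\le\norm{\mG^{(t)}}_F^2=n\phi_t^2$ together with $(1-p)(1+\tfrac p2)\le1-\tfrac p2$ and $(1-p)(1+\tfrac2p)\le\tfrac3p$ (both valid since $p\le1$) one obtains $\E\norm{A^{(t)}}_F^2\le(1-\tfrac p2)\E\norm{\mX^{(t)}-\bar{\mX}^{(t)}}_F^2+\tfrac{3(1-p)\gamma^2}{p}\,n\,\E[\phi_t^2]$. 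Dividing by $n$,
\[
\E[\Xi_{t+1}^2]\;\le\;\Bigl(1-\tfrac p2\Bigr)\E[\Xi_t^2]+(1-p)\gamma^2\Bigl(\tfrac3p\,\E[\phi_t^2]+\sigma^2\Bigr).
\]

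Next, using that $\gamma,p$ are constant and assuming a common initialization ($\Xi_0=0$; otherwise a negligible transient $(1-\tfrac p2)^t\Xi_0^2$ is added), unrolling the recursion gives
\[
\E[\Xi_t^2]\;\le\;(1-p)\gamma^2\sum_{s=0}^{t-1}\Bigl(1-\tfrac p2\Bigr)^{t-1-s}\Bigl(\tfrac3p\,\E[\phi_s^2]+\sigma^2\Bigr).
\]
The $\sigma^2$ part is at most $(1-p)\gamma^2\sigma^2\sum_{k\ge0}(1-\tfrac p2)^k=\tfrac2p(1-p)\gamma^2\sigma^2=\cO((1-p)\gamma^2\sigma^2/p)$. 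For the $\phi^2$ part I would iterate the hypothesis $\phi_s^2\le(1+\tfrac p4)\phi_{s+1}^2$ to get $\phi_s^2\le(1+\tfrac p4)^{t-s}\phi_t^2$ for $s\le t$; since $(1-\tfrac p2)(1+\tfrac p4)\le1-\tfrac p4$, this yields $\sum_{s=0}^{t-1}(1-\tfrac p2)^{t-1-s}\phi_s^2\le(1+\tfrac p4)\phi_t^2\sum_{k\ge0}(1-\tfrac p4)^k\le\tfrac5p\phi_t^2$, so the $\phi^2$ part is $\cO((1-p)\gamma^2\phi_t^2/p^2)$. Adding the two contributions gives exactly \eqref{eq:have}.

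The delicate point, and the main obstacle, is obtaining the $\sigma^2$ term with only a single factor $1/p$: a naive argument that passes $\partial F^{(t)}=\mG^{(t)}+\mZ^{(t)}$ through the Young split pays one $1/p$ there and a second one upon unrolling, yielding the suboptimal $\sigma^2/p^2$. Avoiding this forces one to isolate the fresh noise $B^{(t)}$ and to exploit its conditional-mean-zero property so that it never enters a Young inequality. Everything else is standard decentralized-optimization bookkeeping (cf.\ the consensus recursion behind Theorem~\ref{thm:1}), and the slow-variation hypothesis plays precisely the role of converting $\sum_s(1-\tfrac p2)^{t-1-s}\phi_s^2$ into $\cO(\phi_t^2/p)$, which then compounds with the $1/p$ picked up in the Young step to produce the stated $\phi_t^2/p^2$.
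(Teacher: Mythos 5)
Your proposal is correct and follows essentially the same route as the paper: you derive the identical one-step recursion $\Xi_{t+1}^2 \le (1-\tfrac p2)\Xi_t^2 + \tfrac{3(1-p)\gamma^2}{p}(\phi_t^2 + p\sigma^2)$ (the paper's Lemma~\ref{lemma:distance}) by isolating the conditionally mean-zero noise before the Young split, and then unroll it using the same observation $(1-\tfrac p2)(1+\tfrac p4)\le 1-\tfrac p4$. The point you flag as delicate---keeping the $\sigma^2$ term at $1/p$ rather than $1/p^2$ by not passing the fresh noise through Young's inequality---is exactly how the paper's proof proceeds as well.
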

While these assumptions do not hold in epochs with learning rate decay, we observe in practice that during epochs of a constant learning rate the gradients indeed do not change too fast (see Figure~\ref{subfig:local_grad_norm}).
Thus these assumptions are reasonable approximations to capture the practical behavior.

\subsection{Controlling the Consensus Distance}
\label{sec:control}
We now investigate scenarios where the typical consensus distance derived in Proposition~\ref{rem:tcd} \emph{can} be smaller than the critical value (CCD). This reveals two orthogonal strategies to control the consensus distance in D-SGD.
We here assume diversity $\zeta=0$ as with i.i.d.\ training data, and that the stepsize $\gamma \leq \cO \big(\frac{1}{L}\big)$ as for C-SGD, and give a more refined discussion in Appendix~\ref{subsec:sufficient_bounds}.

\vspace{-1mm}
\paragraph{Learning rate decay (changing $\gamma$).}
We observe that when $\gamma =\cO\bigl( \frac{p}{n L} \bigr)$ then $\Xi_t^2 \leq \Gamma_t^2$ (if the noise~$\sigma$ is small, especially for $\sigma = 0$, then the weaker assumption $\gamma = \cO\bigl( \frac{p}{L} \bigr)$ is sufficient).
However, choosing too small stepsizes can impact performance in practice. In C-SGD the constraint on the stepsize is loose ($\gamma \leq \frac{1}{L}$). Yet, after sufficient learning rate decay, the desired CCD can be reached.

\vspace{-1mm}
\paragraph{More gossip iterations (changing $p$).}
We observe that when  $\smash{\frac{1}{1-p}} = \cO (1+\gamma L n)$, then $\Xi_t^2 \leq \Gamma_t^2$ (again, when the noise $\sigma$ is small, especially when $\sigma^2=0$, a weaker condition $\smash{\frac{1}{1-p}} = \cO (1+\gamma L)$  is sufficient).
Whilst designing new mixing topologies to control $p$ might not be possible due to practical constraints (fixed network, denser graphs increase latency, etc.), a simple and commonly used strategy is to use repeated gossip steps in every round.
\begin{lemma}[Repeated gossip~\citep{xiao2004fast, boyd2006randomized}]\label{lem:rep_gossip}
	Suppose $\mW=\mW_k\dots \mW_1$, for $k$ (possibly randomized) mixing matrices with parameter $p$ each. Then the mixing parameter for $\mW$ is at least $p_\mW \geq 1-(1-p)^k$. %
\end{lemma}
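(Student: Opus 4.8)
\textbf{Proof plan for Lemma~\ref{lem:rep_gossip}.}
The plan is to reduce the claim about the composite matrix $\mW = \mW_k \cdots \mW_1$ to a telescoping application of Assumption~\ref{a:W} to each factor. The key observation is that for any doubly stochastic $\mW'$ and any $\mX \in \R^{d\times n}$ we have $\mX\mW'\frac{1}{n}\1\1^\top = \mX\frac{1}{n}\1\1^\top$, i.e.\ the averaging operator $P := \frac{1}{n}\1\1^\top$ satisfies $\mW' P = P$ (using $\mW'\1 = \1$), so multiplying on the right by a doubly stochastic matrix does not change the mean $\bar\mX$. Consequently $(\mX\mW' - \bar\mX) = (\mX - \bar\mX)\mW'$ as well, which is the algebraic fact that makes the contraction composable.

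First I would fix an arbitrary $\mX \in \R^{d\times n}$ and define the partial products $\mX^{(0)} := \mX$ and $\mX^{(\ell)} := \mX \mW_\ell \cdots \mW_1$ for $\ell = 1,\dots,k$, noting $\mX^{(k)} = \mX\mW$. Since each $\mW_\ell$ is doubly stochastic, the mean is preserved: $\overline{\mX^{(\ell)}} = \bar\mX$ for all $\ell$. Then I would apply Assumption~\ref{a:W} to the factor $\mW_\ell$ acting on the matrix $\mX^{(\ell-1)}$, using that $\mW_\ell$ has parameter $p$, to get (in expectation over $\mW_\ell$, conditioned on the earlier factors) $\E_{\mW_\ell}\|\mX^{(\ell)} - \bar\mX\|_F^2 \le (1-p)\,\|\mX^{(\ell-1)} - \bar\mX\|_F^2$. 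Iterating this bound from $\ell = k$ down to $\ell = 1$ and taking total expectation (invoking the tower property, since the $\mW_\ell$ may be randomized and, if independent, this is immediate; if not independent one still conditions appropriately) yields $\E_\mW\|\mX\mW - \bar\mX\|_F^2 \le (1-p)^k \|\mX - \bar\mX\|_F^2$. Since this holds for every $\mX$, the composite matrix $\mW$ satisfies Assumption~\ref{a:W} with parameter $1-(1-p)^k$, hence $p_\mW \ge 1-(1-p)^k$.

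The only mildly delicate point — and the one I would be most careful about — is the handling of randomization when the $\mW_\ell$ are not independent: the clean telescoping $\E[(1-p)^{k-\ell}\|\mX^{(\ell-1)}-\bar\mX\|_F^2]$ requires conditioning on $\mW_1,\dots,\mW_{\ell-1}$ before applying the contraction for $\mW_\ell$, which is legitimate because Assumption~\ref{a:W} holds for \emph{every} realization of the conditioning (it is a statement about the marginal law of a single mixing matrix applied to a \emph{fixed} argument, and $\mX^{(\ell-1)}$ is measurable with respect to the conditioning $\sigma$-algebra). Everything else is routine linear algebra: the identity $\mW'\frac{1}{n}\1\1^\top = \frac{1}{n}\1\1^\top$ and double stochasticity of products. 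In the common case where the $\mW_\ell$ are i.i.d.\ (repeated rounds of the same gossip scheme) the argument collapses to a one-line induction on $k$.
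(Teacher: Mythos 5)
Your proof is correct and is essentially the paper's own argument: the paper likewise telescopes Assumption~\ref{a:W} through the product by noting that each doubly stochastic factor preserves the row mean ($\mW_i\tfrac{1}{n}\1\1^\top=\tfrac{1}{n}\1\1^\top$, so $\bar\mY=\bar\mX$ for every partial product $\mY$) and explicitly invokes independence of the $\mW_i$ to peel off one factor at a time, concluding $1-p_\mW=(1-p)^k$. The only shaky point is your aside on non-independent factors: Assumption~\ref{a:W} constrains only the \emph{marginal} law of a single mixing matrix, so the conditional contraction given the earlier factors does genuinely require independence (or a conditional restatement of the assumption), which is precisely what the paper assumes.
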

From this, we see that the mixing parameter can be improved exponentially when applying more gossip steps. To ensure $p_\mW \geq 1- \frac{1}{1+\gamma L n}$, at most $k \leq \frac{ \ln(1+\gamma L n)}{p} = \tilde\cO \big(\frac{1}{p}\big)$ repetitions are required.
\section{Inspecting Consensus Distance for Decentralized Training} \label{sec:empirical_insights}
Our %
analysis in Section~\ref{sec:theory} shows
that we can---at least in theory---recover the convergence behavior of C\nobreakdash-SGD by controlling the consensus distance.
Now, we direct our focus on generalization in decentralized deep learning training.
We show, empirically (not theoretically, see also Appendix~\ref{sec:optimization_vs_generalization}), that the critical consensus distance is an important metric to capture the connection between optimization and generalization in deep learning---%
e.g.\
Figure~\ref{fig:dec_phase_1_learning_curves} in Section~\ref{subsec:findings_cv_tasks} showcases that
by addressing the optimization difficulty in the critical initial training phase
(Figure~\ref{fig:ring_learning_curves_resnet20_cifar10_ring_training_losses}
and Figure~\ref{fig:ring_learning_curves_resnet20_cifar10_ring_training_accs}),
the final generalization gap can be perfectly closed
(Figure~\ref{fig:ring_learning_curves_resnet20_cifar10_ring_test_accs}, Table~\ref{tab:resnet20_cifar10_different_consensus_distances_and_phases_by_constant_on_ring} and Table~\ref{tab:resnet20_3_imagenet32_different_consensus_distances_and_phases_on_ring}).

First we introduce and justify our experimental design in Section~\ref{subsec:exp_design}.
We describe the implementation in Section~\ref{subsec:exp_setup}.
In Section~\ref{subsec:findings_cv_tasks}, we present our findings on image classification benchmark with standard SGD optimizer, which is the main focus of this work;
a preliminary study on Transformer with Adam optimizer and inverse square root learning rate schedule
can be found in Section~\ref{subsec:transformer}.

\subsection{Experiment Design: Controlled Training Phases}\label{subsec:exp_design}
\paragraph{Phase-wise training.}
Since the consensus distance evolves throughout training,
identifying its impact at every training step is infeasible.
However, as the consensus distance and critical consensus distance (CCD)
both significantly depend on the learning rate (Propositions~\ref{rem:ccd} and \ref{rem:tcd}),
we expect rather consistent observations during phases in which the learning rate is kept fixed and more drastic changes between such phases.
On CV tasks, stage-wise learning rate schedule is the common practice for SOTA distributed training as described in Section~\ref{subsec:exp_setup}:\
thus the training can be naturally divided into phases through the learning rate decay\footnote{
	The learning rate warmup is only over a very small fraction of training epochs (e.g.\ 5 out of 300 epochs on CIFAR-10).
	To simplify the analysis, we do not consider it as a separate phase.
},
in each of which training dynamics are significantly different from the others,
such as $\Xi_t$ (Figure~\ref{fig:uncontrolled_consensus_dist}), $\phi_t$ (Figure~\ref{subfig:local_grad_norm}) and $L$-smoothness (Figure~\ref{subfig:grad_lips}).
The transformer (NLP task) has no well-defined training phases due to the conventional inverse square root learning rate,
thus for the sake of simplicity, we consider the entire transformer training as one phase as a preliminary study.

\begin{figure}[!t]
	\centering
	\includegraphics[width=.40\textwidth,]{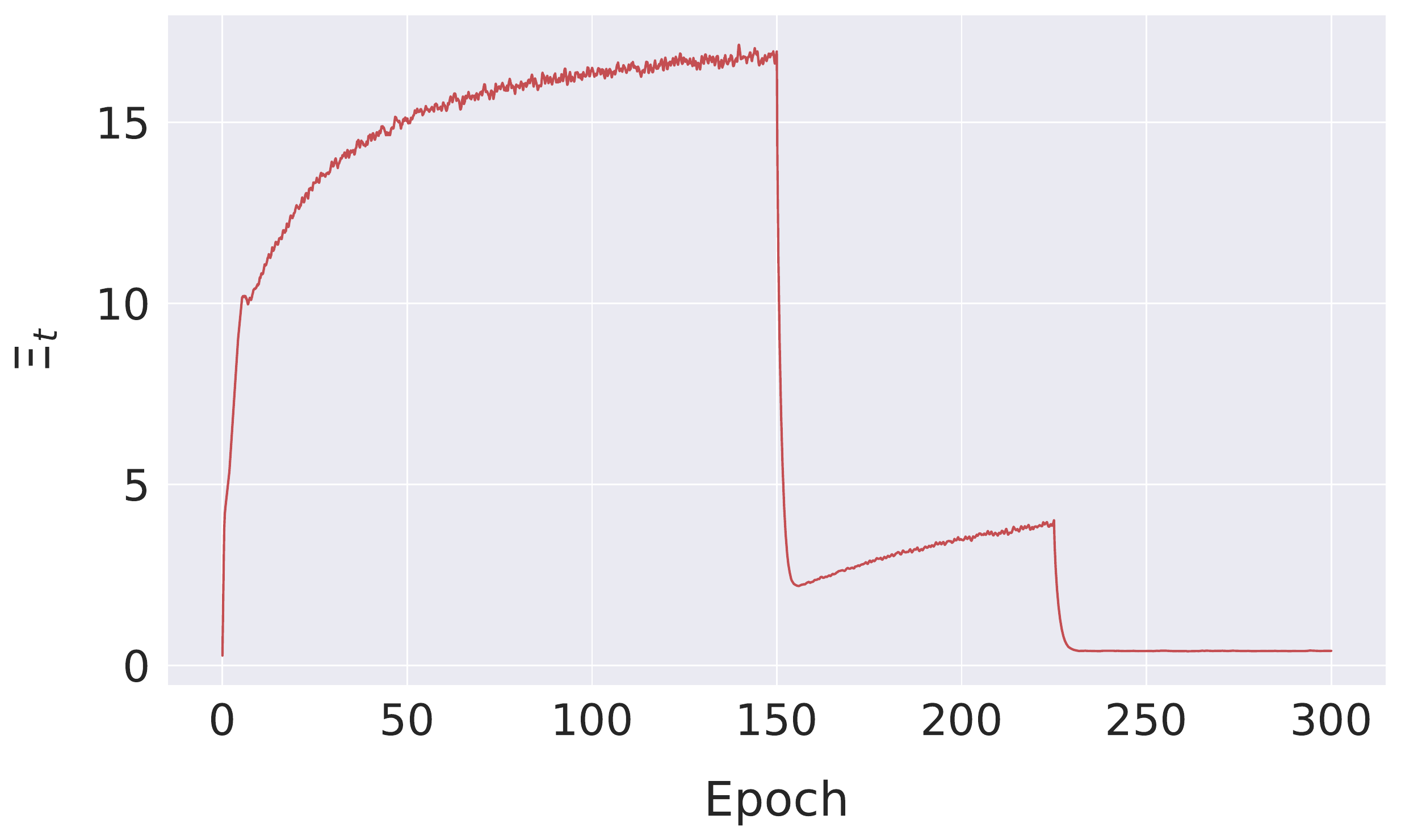}
	\vspace{-1.em}
	\caption{\small
		\textbf{Evolution of the consensus distance} $\Xi$ for ResNet-20 on CIFAR-10 ($n\!=\!32$) with ring topology.
	}
	\vspace{-1.em}
	\label{fig:uncontrolled_consensus_dist}
\end{figure}

\paragraph{Individual phase investigation.}
In order to eliminate the coupling of effects from other phases,
in each experiment we place only one phase under consensus distance control (the control refers to perform multiple gossip steps as in Section~\ref{sec:control} to reach certain distance targets),
while performing exact averaging (All-Reduce for all nodes)
on model parameters for the other unstudied phases.
We demonstrate in Table~\ref{tab:resnet20_cifar10_phase_interference_on_ring} of Section~\ref{subsec:findings_cv_tasks} that the decentralization impacts on different phases are rather orthogonal, which justifies our design of examining one phase at a time.
\\
For the ease of presentation,
the term ``phase-$x$'' refers to a training phase between ${(x\!-\!1)}$-th and $x$-th learning rate decay.
The notation ``dec-phase-$x$'' indicates that
only in ``phase-$x$'' the model is trained with a decentralized communication topology,
while for other phases we perform All-Reduce on model parameters.
We compare the result of each individually decentralized phase with that of All-Reduce centralized training (on all training phases),
so as to identify when (which phase) and how decentralized training influences final generalization performance.

\subsection{Experimental Setup}\label{subsec:exp_setup}
\paragraph{Datasets and models.} We empirically study the decentralized training behavior on the following two tasks, on convolutional neural networks and transformer architectures:
(1) Image Classification for CIFAR-10~\citep{krizhevsky2009learning}
and ImageNet-32 (i.e.\ image resolution of $32$,~\citealp{chrabaszcz2017downsampled}),
with the standard data augmentation and preprocessing scheme~\citep{he2016deep};
and (2) Neural Machine Translation for the Multi30k dataset~\citep{elliott2016multi30k}.
For Image Classification,
ResNet-20~\citep{he2016deep} with different widths are used on
CIFAR (default width of $1$) and ImageNet-32 (width factor of $3$).\footnote{
	It takes $\sim7$h to for 1 round of standard ImageNet-32 training with $n\!=\!16$ V100 on a ring, and the cost increases to  $12$h for our consensus distance controlled experiments.
	Experiments on datasets of larger scales are beyond our computation budget.
}
For Neural Machine Translation,
a down-scaled transformer architecture (by $2$ w.r.t.\ the base model in~\citealp{vaswani2017attention}) is used.
Weight initialization schemes follow~\citet{goyal2017accurate,he2015delving} and~\citet{vaswani2017attention} respectively.
Unless mentioned otherwise, our experiments are repeated over three random seeds.
\vspace{-2mm}

\begin{figure*}[!t]
	\vspace{-0.5em}
	\centering
	\subfigure[Training loss.]{
		\includegraphics[width=.315\textwidth,]{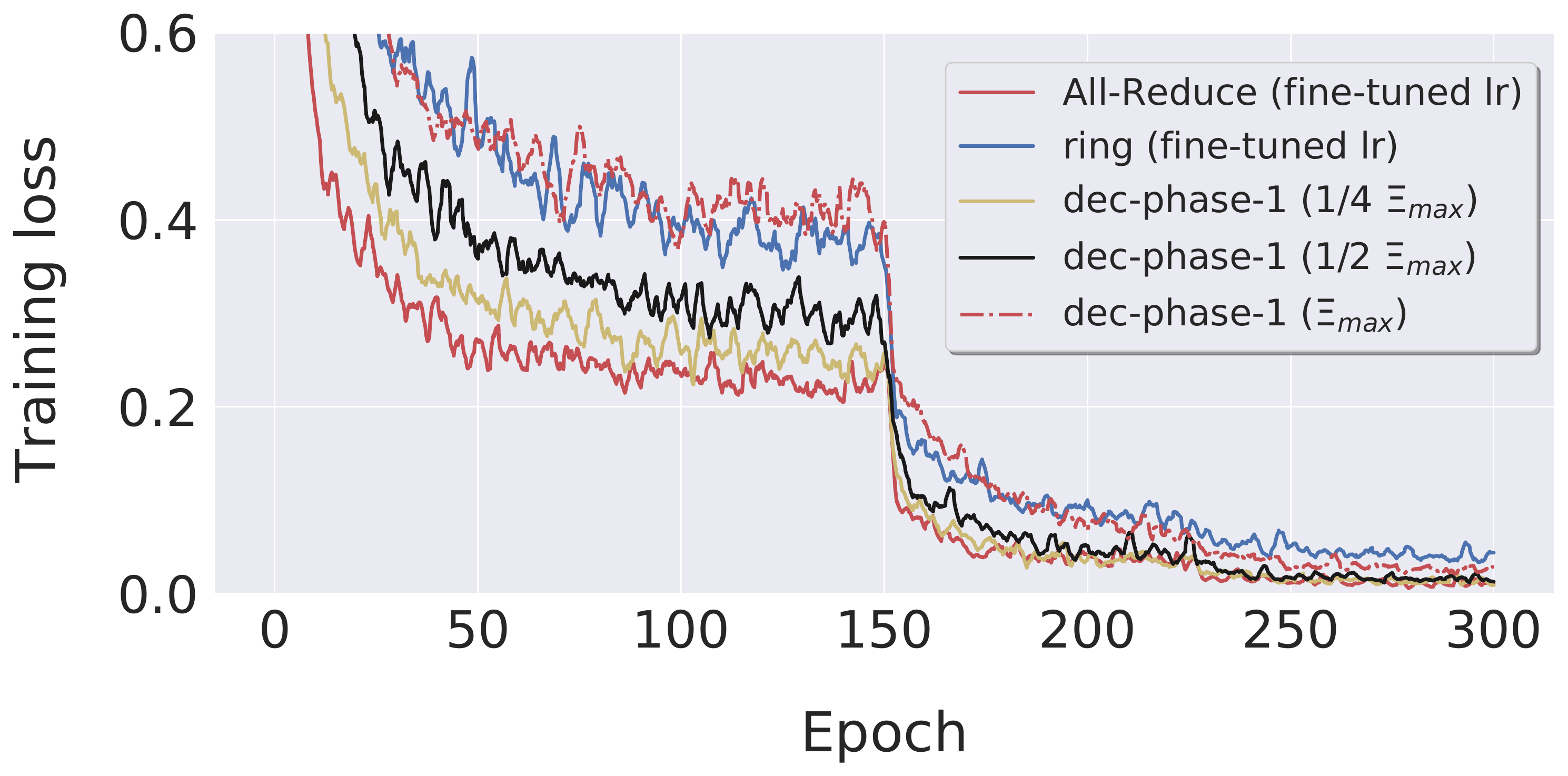}
		\label{fig:ring_learning_curves_resnet20_cifar10_ring_training_losses}
	}
	\hfill
	\subfigure[Training top-1 accuracy.]{
		\includegraphics[width=.315\textwidth,]{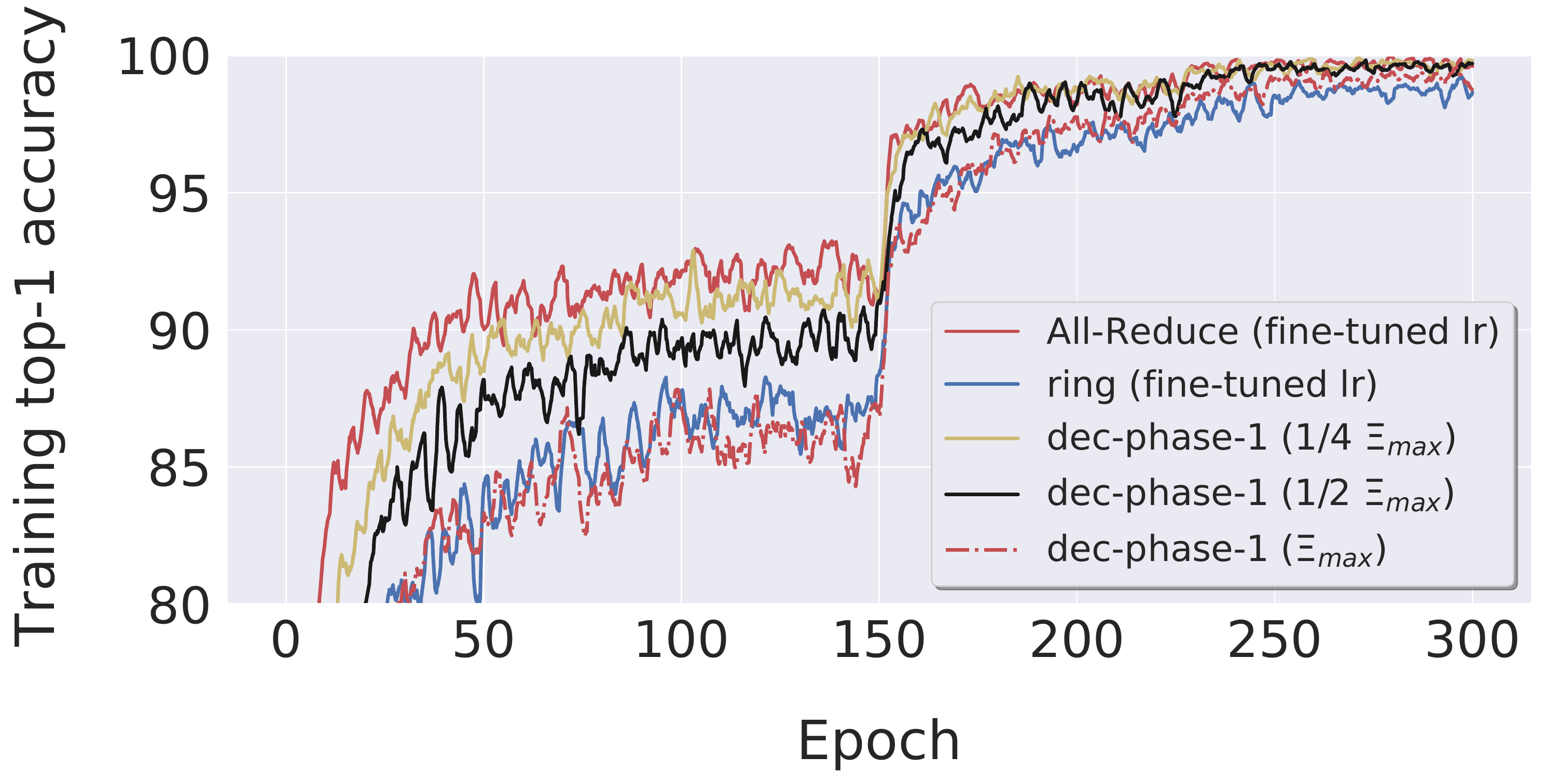}
		\label{fig:ring_learning_curves_resnet20_cifar10_ring_training_accs}
	}
	\hfill
	\subfigure[Test top-1 accuracy.]{
		\includegraphics[width=.315\textwidth,]{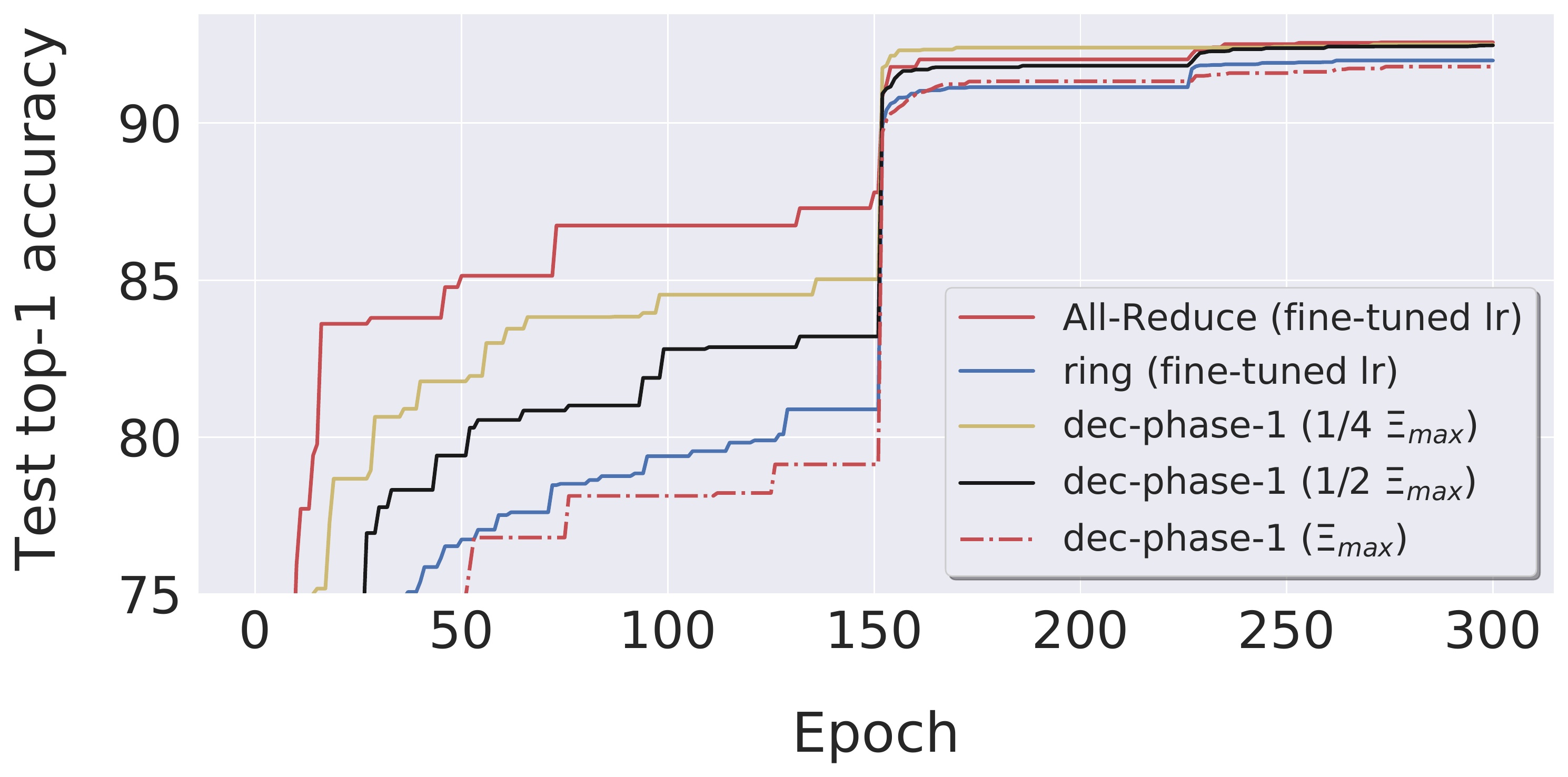}
		\label{fig:ring_learning_curves_resnet20_cifar10_ring_test_accs}
	}
	\vspace{-1em}
	\caption{\small
		Learning curves for ResNet-20 on CIFAR-10 ($n\!=\!32$).
		We compare fine-tuned normal (w/o control) decentralized training (i.e.\ ``ring'') with dec-phase-1 on different target consensus distances.
	}
	\label{fig:dec_phase_1_learning_curves}
\end{figure*}

\begin{table*}[!t]
	\centering
	\caption{\small
		\textbf{The impact of consensus distance of different phases on generalization performance} (test top-1 accuracy)
		of training ResNet-20 on CIFAR-10 on ring.
		The All-Reduce performance for $n\!=\!32$ and $n\!=\!64$
		are $92.82 \pm 0.27$ and $92.71 \pm 0.11$ respectively.
		The fine-tuned normal (w/o control) decentralized training performance for $n\!=\!32$ and $n\!=\!64$ are $91.74 \pm 0.15$ and $89.87 \pm 0.12$ respectively.\looseness=-1
	}
	\vspace{-2mm}
	\label{tab:resnet20_cifar10_different_consensus_distances_and_phases_by_constant_on_ring}
	\resizebox{1.\textwidth}{!}{%
		\huge
		\begin{tabular}{cccc|ccc|ccc}
			\toprule
			\multirow{2}{*}{\diagbox{\# nodes}{target $\Xi$}}  & \multicolumn{3}{c|}{dec-phase-1}            & \multicolumn{3}{c|}{dec-phase-2}                          & \multicolumn{3}{c}{dec-phase-3}                      \\ \cmidrule(lr){2-10}
			     & $\Ximax$         & 1/2 $\Ximax$     & 1/4 $\Ximax$     & $\Ximax$         & 1/2 $\Ximax$     & 1/4 $\Ximax$      & $\Ximax$         & 1/2 $\Ximax$     & 1/4 $\Ximax$     \\ \midrule
			n=32 & $91.78 \pm 0.35$ & $92.36 \pm 0.21$ & $92.74 \pm 0.10$ & $93.04 \pm 0.01$ & $92.99 \pm 0.30$ & $92.87 \pm 0.11$  & $92.60 \pm 0.00$ & $92.82 \pm 0.21$ & $92.85 \pm 0.24$ \\
			n=64 & $90.31 \pm 0.12$ & $92.18 \pm 0.07$ & $92.45 \pm 0.17$ & $93.14 \pm 0.04$ & $92.94 \pm 0.10$ & $92.79 \pm 0.07 $ & $92.23 \pm 0.12$ & $92.50 \pm 0.09$ & $92.60 \pm 0.10$ \\
			\bottomrule
		\end{tabular}%
	}
\end{table*}

\begin{table*}[!t]
	\centering
	\caption{\small
		\textbf{The impact of different consensus distances on generalization for different phases}
		of training ResNet-20-3 on ImageNet-32 on ring.
		The centralized baseline performances for $n\!=\!16$ and $n\!=\!32$ are $51.74 \pm 0.06$ and $51.98 \pm 0.37$ respectively,
		while those of decentralized training (on a fixed ring) are $51.04 \pm 0.06$ and $50.17 \pm 0.04$.
		The reported test top-1 accuracies are over two seeds.
	}
	\vspace{-2mm}
	\label{tab:resnet20_3_imagenet32_different_consensus_distances_and_phases_on_ring}
	\resizebox{1.\textwidth}{!}{%
		\huge
		\begin{tabular}{cccc|ccc|ccc|ccc}
			\toprule
			\multirow{2}{*}{\diagbox{\# nodes}{target $\Xi$}} & \multicolumn{3}{c|}{dec-phase-1}            & \multicolumn{3}{c|}{dec-phase-2}                          & \multicolumn{3}{c|}{dec-phase-3}             & \multicolumn{3}{c}{dec-phase-4}         \\ \cmidrule(lr){2-13}
			     & $\Ximax$         & 1/2 $\Ximax$     & 1/4 $\Ximax$     & $\Ximax$         & 1/2 $\Ximax$     & 1/4 $\Ximax$     & $\Ximax$         & 1/2 $\Ximax$     & 1/4 $\Ximax$     & $\Ximax$         & 1/2 $\Ximax$     & 1/4 $\Ximax$     \\ \midrule
			n=16 & $51.22 \pm 0.08$ & $51.79 \pm 0.10$ & $51.71 \pm 0.03$ & $51.59 \pm 0.02$ & $51.67 \pm 0.01$ & $51.65 \pm 0.13$ & $51.80 \pm 0.10$ & $51.81 \pm 0.13$ & $51.81 \pm 0.04$ & $51.72 \pm 0.02$ & $51.76 \pm 0.01$ & $51.74 \pm 0.06$ \\
			n=32 & $50.76 \pm 0.18$ & $51.27 \pm 0.07$ & $51.60 \pm 0.21$ & $51.39 \pm 0.07$ & $51.59 \pm 0.04$ & $51.66 \pm 0.12$ & $51.79 \pm 0.06$ & $51.73 \pm 0.10$ & $51.77 \pm 0.10$ & $51.70 \pm 0.02$ & $51.71 \pm 0.02$ & $51.70 \pm 0.02$ \\
			\bottomrule
		\end{tabular}%
	}
\end{table*}

\paragraph{Training schemes.}
We use mini-batch SGD with a Nesterov momentum %
of $0.9$ without dampening for image classification task
(we confirm our findings in Section~\ref{subsec:findings_cv_tasks} for SGD without momentum),
and Adam is used for neural machine translation task.
Unless mentioned otherwise we use the optimal learning rate (lr)
from centralized training for our decentralized experiments\footnote{
	We find that fine-tuning the learning rate for decentralized experiments does not change our conclusions.
	I.e.., no significant difference can be found for the curves at phase-1 for ``ring (fine-tuned lr)'' and ``dec-phase-1 ($\Ximax$)''
	in Figure~\ref{fig:ring_learning_curves_resnet20_cifar10_ring_training_losses} and~\ref{fig:ring_learning_curves_resnet20_cifar10_ring_training_accs}.
	We have similar observations in Table~\ref{tab:resnet20_cifar10_phase1_finetuned_learning_rates_on_ring}
	after the sufficient learning rate tuning on phase-1.
}
in order to observe the impact of \textit{decentralization} on normal \textit{centralized} training.

\begin{itemize}[nosep,leftmargin=12pt]
	\item For image classification experiments,
	      unless mentioned otherwise, the models are trained for $300$ and $90$ epochs for CIFAR-10 and ImageNet-32 respectively;
	      the local mini-batch size are set to $32$ and $64$.
	      By default, all experiments follow the SOTA learning rate scheme in the distributed deep learning literature~\citep{goyal2017accurate, he2019bag}
	      with learning rate scaling and warmup scheme.
	      The learning rate is always gradually warmed up from a relatively small value (i.e.\ $0.1$) for the first $5$ epochs.
	      Besides, the learning rate will be divided by $10$
	      when the model has accessed specified fractions of the total number of training samples~\citep{he2016deep};
	      we use $\{ \frac{1}{2}, \frac{3}{4} \}$ and $\{ \frac{1}{3}, \frac{2}{3}, \frac{8}{9} \}$ for CIFAR and ImageNet respectively.
	      All results in tables are test top-1 accuracy.

	\item For experiments on neural machine translation,
	      we use standard inverse square root learning rate schedule~\citep{vaswani2017attention}
	      with local mini-batch size $64$.
	      The warm-up step is set to $4000$ for the mini-batch size of $64$
	      and is linearly scaled down by the global mini-batch size.\looseness=-1
\end{itemize}

\paragraph{Consensus distance control.}
For consensus control, we adopt the ``more gossip iterations'' strategy introduced in Section~\ref{sec:control}.
That is, we perform multiple gossip steps (if needed) until reaching the desired target consensus distance value.
Two metrics are considered to set the consensus distance target value
during the specified training phase:
\begin{itemize}[nosep,leftmargin=12pt]
	\item constant target distance (main approach\footnote{
		      We use this one primarily since we can directly regulate the magnitude of consensus distance.
		      In experiments, $\text{target}\, \Xi = \Ximax$ refers to the normal (i.e.\ uncontrolled) decentralized training.
	      }):
	      the target consensus distance $\Xi$ for a phase is the \emph{maximum consensus distance} $\Ximax$
	      of the \emph{current phase} in normal (uncontrolled) decentralized training, multiplied by a factor.
	      For a given topology, the smaller the factor, the tighter the consensus.

	\item adaptive target distance (in Appendix~\ref{subsec:optimal_consensus_for_better_generalization}):
	      the target consensus distance $\Xi$ for the current step is the
	      averaged local gradient norm $\phiavg_t$ scaled by a factor.
	      For stability,
	      we use the exponentially moving averaged value $\phiema_t$ of $\phiavg_t$ (accumulated during the corresponding phase).
\end{itemize}

We use a ring as the main decentralized communication topology,
as it is a particularly hard instance with a small spectral gap (cf.\ Table~\ref{tab:spectral_gap_node_degree}) which allows us to study
a wide range of target consensus distances by modifying the number of gossip steps (in appendix we show consistent findings on time varying exponential topology in Table~\ref{tab:resnet20_cifar10_phase_interference_on_undirected_exponential_graph} and~\ref{tab:resnet20_cifar10_phase2_on_undirected_time_varying_exponential_graph}).%
.\looseness=-1

\subsection{Findings on Computer Vision Tasks} %
\label{subsec:findings_cv_tasks}
In this section we present our empirical findings and provide insights into how consensus distance at different phases
impacts the training generalization for CV tasks (i.e.\ CIFAR-10, Imagenet-32).

\paragraph{Critical consensus distance exists in the initial training phase---consensus distance below this critical threshold ensures good optimization and generalization.}
In the initial training phase, both training and generalization performance are heavily impacted by the consensus distance
(``dec-phase-1'' in Figure~\ref{fig:dec_phase_1_learning_curves} and Table~\ref{tab:resnet20_cifar10_different_consensus_distances_and_phases_by_constant_on_ring}).
A smaller consensus distance in the early phase results in considerably faster optimization (training loss) and higher generalization performance (test accuracy),
and these advantages persist over the entire training.

When the consensus distance is larger (i.e.\ 1/2 $\Ximax$ for CIFAR-10),
the optimization (training performance) can eventually catch up with the centralized convergence
(c.f.\ Figure~\ref{fig:ring_learning_curves_resnet20_cifar10_ring_training_losses} and~\ref{fig:ring_learning_curves_resnet20_cifar10_ring_training_accs})
but a considerable generalization gap still remains ($92.36$ v.s.\ $92.82$ for the setup in Figure~\ref{fig:dec_phase_1_learning_curves})
as shown in Table~\ref{tab:resnet20_cifar10_different_consensus_distances_and_phases_by_constant_on_ring}.
A consistent pattern can be found in ImageNet-32 experiments\footnote{
	1/2 $\Ximax$ has already been tight enough to recover the centralized performance
	for ImageNet-32 ($n\!=\!32$),
	while a significant performance drop can be observed between $\Ximax$ and 1/2 $\Ximax$.
}, as shown in Table~\ref{tab:resnet20_3_imagenet32_different_consensus_distances_and_phases_on_ring}.
These observations to some extent are consistent with the insights of the critical learning phase described
in~\citep{golatkar2019time,jastrzebski2020the,frankle2020the} for centralized training,
where it is argued that the initial learning phase is crucial for the final generalization.

Notably, perfect consensus distance is not required to recover the centralized training performance.
For instance, 1/4\nobreakspace$\Ximax$ is sufficient in CIFAR-10 experiments to approach the optimal centralized training performance in both optimization and \emph{generalization} at the end.
Smaller distances (e.g.\ 1/8 $\Ximax$, 1/16 $\Ximax$) do not bring significant gain ($92.77$ and $92.72$ respectively in Table~\ref{tab:elaborated_resnet20_cifar10_different_consensus_distances_and_phases_by_constant_on_ring}).
The performance saturates (c.f.\ $92.74$ for 1/4 $\Ximax$) with significantly increased communication overhead
(e.g.\ Figure~\ref{fig:resnet20_cifar10_consensus_distance_vs_comm_rounds} of Appendix~\ref{subsec:consensus_averaging_problem}).
This confirms that our analysis and discovery in Section~\ref{sec:theory} are sensible in the initial training phase:
\textit{there exists a critical consensus distance for the training, below which the impact of decentralization is negligible.}

\paragraph{A non-negligible consensus distance at middle phases can improve generalization over centralized training.}
Surprisingly, it is not always the case that the generalization performance improves with a shrinking consensus distance.
We observe that at the phase right after the initial training plateaus (e.g.\ phase-2 for CIFAR-10, phase-3 for Imagenet-32),
a non-negligible consensus distance\footnote{
	Table~\ref{tab:resnet20_cifar10_phase2_on_undirected_time_varying_exponential_graph}
	of Appendix~\ref{subsec:optimal_consensus_for_better_generalization}
	shows that there exists optimal consensus distance at middle phases,
	beyond which the gain in generalization (brought by noise injection) starts to diminish.
}
actually boosts the generalization performance over the centralized training which
has been deemed optimal.
In CIFAR-10 dec-phase-2 experiments (Table~\ref{tab:resnet20_cifar10_different_consensus_distances_and_phases_by_constant_on_ring}),
the generalization performance increases monotonically with the evaluated consensus distance
and is consistently superior to that of the centralized training (e.g.\ $93.04$, $92.99$, $92.87$ over $92.82$ for $n\!=\!32$).
Analogous observation can be obtained in Imagenet-32 dec-phase-3 experiments (Table~\ref{tab:resnet20_3_imagenet32_different_consensus_distances_and_phases_on_ring}).

\begin{table*}[!t]
	\centering
	\caption{\small
		\textbf{The impact of consensus distance on generalization performance with vanilla SGD (without momentum)} (test top-1 accuracy)
		of training ResNet-20 on CIFAR-10 on ring.
		The All-Reduce performance for $n\!=\!32$ and $n\!=\!64$
		are $90.64 \pm 0.19$ and $90.58 \pm 0.26$ respectively.
		The fine-tuned normal (w/o control) decentralized training performance for $n\!=\!32$ and $n\!=\!64$ are $90.30 \pm 0.14$ and $88.92 \pm 0.23$ respectively.
		We repeat experiments for $n\!=\!32$ for 3 seeds and $n\!=\!64$ for 2 seeds.
	}
	\vspace{-2mm}
	\label{tab:resnet20_cifar10_ring_distance_control_without_momentum}
	\resizebox{.85\textwidth}{!}{%
		\huge
		\begin{tabular}{cccc|ccc}
			\toprule
			\multirow{2}{*}{\diagbox{\# nodes}{target $\Xi$}}  & \multicolumn{3}{c|}{dec-phase-1}            & \multicolumn{3}{c}{dec-phase-2}  \\  \cmidrule(lr){2-7}
			       & $\Ximax$         & $1/2 \Ximax$     & $1/4 \Ximax$     & $\Ximax$         & $1/2 \Ximax$     & $1/4 \Ximax$     \\ \hline
			$n=32$ & $90.51 \pm 0.05$ & $90.74 \pm 0.14$ & $90.88 \pm 0.37$ & $90.64 \pm 0.18$ & $90.55 \pm 0.19$ & $90.57 \pm 0.17$ \\
			$n=64$ & $88.80 \pm 0.03$ & $89.89 \pm 0.03$ & $90.43 \pm 0.05$ & $90.63 \pm 0.37$ & $90.46 \pm 0.15$ & $90.63 \pm 0.25$ \\
			\bottomrule
		\end{tabular}%
	}
	\vspace{-0.5em}
\end{table*}

This coincides with the observations firstly introduced in post-local SGD~\citep{lin2020dont},
where for better generalization, consensus distance is created among local machines by less frequent model parameter synchronization (All-Reduce)
in late training phases (e.g.\ phase-2, phase-3 for CIFAR).
Thus non-negligible consensus distance at middle phases can be viewed as a means of injecting proper noise as argued in~\citep{lin2020dont},
which reduces communication cost and in the meanwhile benefits generalization.

\paragraph{At the last phase of training, the consensus distance only marginally impacts the generalization performance.}
Similar to the initial training phase, the final convergence phase seems to favor small consensus distances in CIFAR-10 experiments.
However, its impact is less prominent in comparison:
for dec-phase-3, performance of a smaller consensus distance (1/4 $\Ximax$) is only $0.25\%$ and $0.37\%$ higher
than that of $\Ximax$ for $n\!=\!32, 64$ respectively (Table~\ref{tab:resnet20_cifar10_different_consensus_distances_and_phases_by_constant_on_ring}).
In Imagenet-32 experiments, dec-phase-3 performance is not even affected by changes in consensus.

\paragraph{Quality propagation across phases.}
Our previous experiments only consider a single phase of decentralized training. We now evaluate the lasting impact of consensus across the sequence of multiple phases.
In Table~\ref{tab:resnet20_cifar10_phase_interference_on_ring},
we control the consensus distance for both phase-1 and phase-2 when training on CIFAR-10.
Our previous findings hold when we view each controlled phase separately.
For instance, when we apply 1/2 $\Ximax$ consensus control to phase-2 (the middle column in Table~\ref{tab:resnet20_cifar10_phase_interference_on_ring}),
we can still observe that a smaller consensus distance in phase-1 results in a higher performance as in our previous finding.
Hence our previous findings are valid in more general cases of decentralized training.

\begin{table}[!b]
	\caption{\small
		\textbf{Quality propagation across training phases with different consensus distances} on ResNet-20 for CIFAR-10 (Ring with $n\!=\!32$).
		In phase-1 and phase-2, the model parameters reach inexact consensus of different target consensus distance $\Xi$,
		while phase-3 performs All-Reduce on model parameters.
	}
	\label{tab:resnet20_cifar10_phase_interference_on_ring}
	\vspace{-1em}
	\centering
	\resizebox{.5\textwidth}{!}{%
		\begin{tabular}{cccc}
			\toprule
			\diagbox{phase-1}{phase-2} & $\Ximax$         & 1/2 $\Ximax$     & 1/4 $\Ximax$     \\ \midrule
			1/2 $\Ximax$               & $92.48 \pm 0.19$ & $92.46 \pm 0.11$ & $92.31 \pm 0.23$ \\
			1/4 $\Ximax$               & $92.73 \pm 0.11$ & $92.66 \pm 0.08$ & $92.69 \pm 0.19$ \\
			1/8 $\Ximax$               & $93.10 \pm 0.22$ & $92.88 \pm 0.15$ & $92.91 \pm 0.06$ \\
			\bottomrule
		\end{tabular}%
	}
	\vspace{-0.5em}
\end{table}

\begin{table}[!t]
	\centering
	\caption{\small
		\textbf{The impact of different numbers of training epochs (at phase-1)} on generalization,
		for training ResNet-20 on CIFAR-10 (dec-phase-1 with $n\!=\!32$).
		The number of epochs at phase-1 is chosen from $\{ 150, 200, 250 \}$,
		while the other training setting is identical to that of dec-phase-1 in Table~\ref{tab:resnet20_cifar10_different_consensus_distances_and_phases_by_constant_on_ring}.
	}
	\label{tab:resnet20_cifar10_impact_of_n_training_epochs_for_phase1_on_ring}
	\vspace{-2mm}
	\resizebox{.5\textwidth}{!}{%
		\begin{tabular}{cccc}
			\toprule
			\multirow{2}{*}{ target $\Xi$} & \multicolumn{3}{c}{training epochs at phase-1}  \\ \cmidrule{2-4}
			             & 150              & 200              & 250              \\ \midrule
			$\Ximax$     & $91.78 \pm 0.35$ & $91.91 \pm 0.19$ & $92.04 \pm 0.14$ \\
			1/2 $\Ximax$ & $92.36 \pm 0.21$ & $92.55 \pm 0.07$ & $92.67 \pm 0.13$ \\
			1/4 $\Ximax$ & $92.74 \pm 0.10$ & $92.91 \pm 0.15$ & $92.84 \pm 0.20$ \\
			\bottomrule
		\end{tabular}%
	}
	\vspace{-1em}
\end{table}

\paragraph{Longer training cannot close the generalization gap caused by large consensus distances in the initial training phase.}
As discussed above, large consensus distances in the initial phase can result in significant generalization loss.
Table~\ref{tab:resnet20_cifar10_impact_of_n_training_epochs_for_phase1_on_ring}
investigates whether a prolonged training on the initial phase can address this difficulty:\
we prolong the phase-1 for CIFAR-10 with a range of consensus distances
and leave the other training phases centralized.
We can observe that although longer training is beneficial for each consensus distance,
it cannot recover the generalization gap resulting from large consensus distance.
For instance, the maximum gain (among all evaluated cases) of increasing the epoch number from 150 to 250 is $0.31\%$ at 1/2 $\Ximax$,
which is lower than the average gain (around $0.6\%$) of merely reducing the consensus distance from $\Ximax$ to 1/2 $\Ximax$.
Table~\ref{tab:resnet20_cifar10_ring_prolonged_phase_2_3} in Appendix~\ref{sec:more_understanding}
evaluates cases where dec-phase-2 and dec-phase-3 are prolonged.
We find longer training in these two phases brings about negligible performance gain.

\begin{figure*}[!t]
	\vspace{-0.5em}
	\centering
	\subfigure[\small
		Different target $\Xi$s.
	]{
		\includegraphics[width=.315\textwidth,]{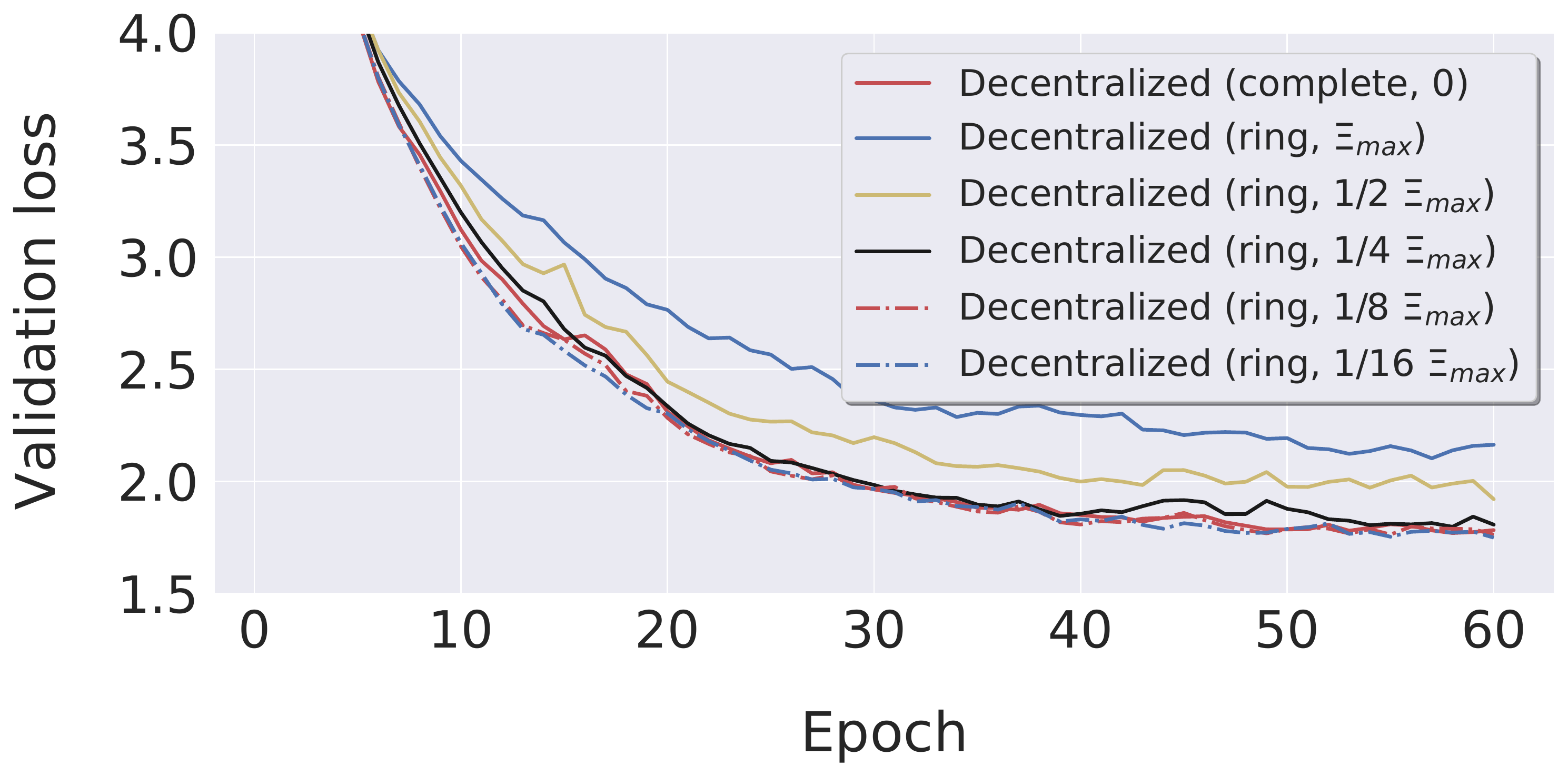}
		\label{fig:transformer_multi30k_controlled_first_60epochs}
	}
	\hfill
	\subfigure[\small
		Decentralized baseline.
	]{
		\includegraphics[width=.315\textwidth,]{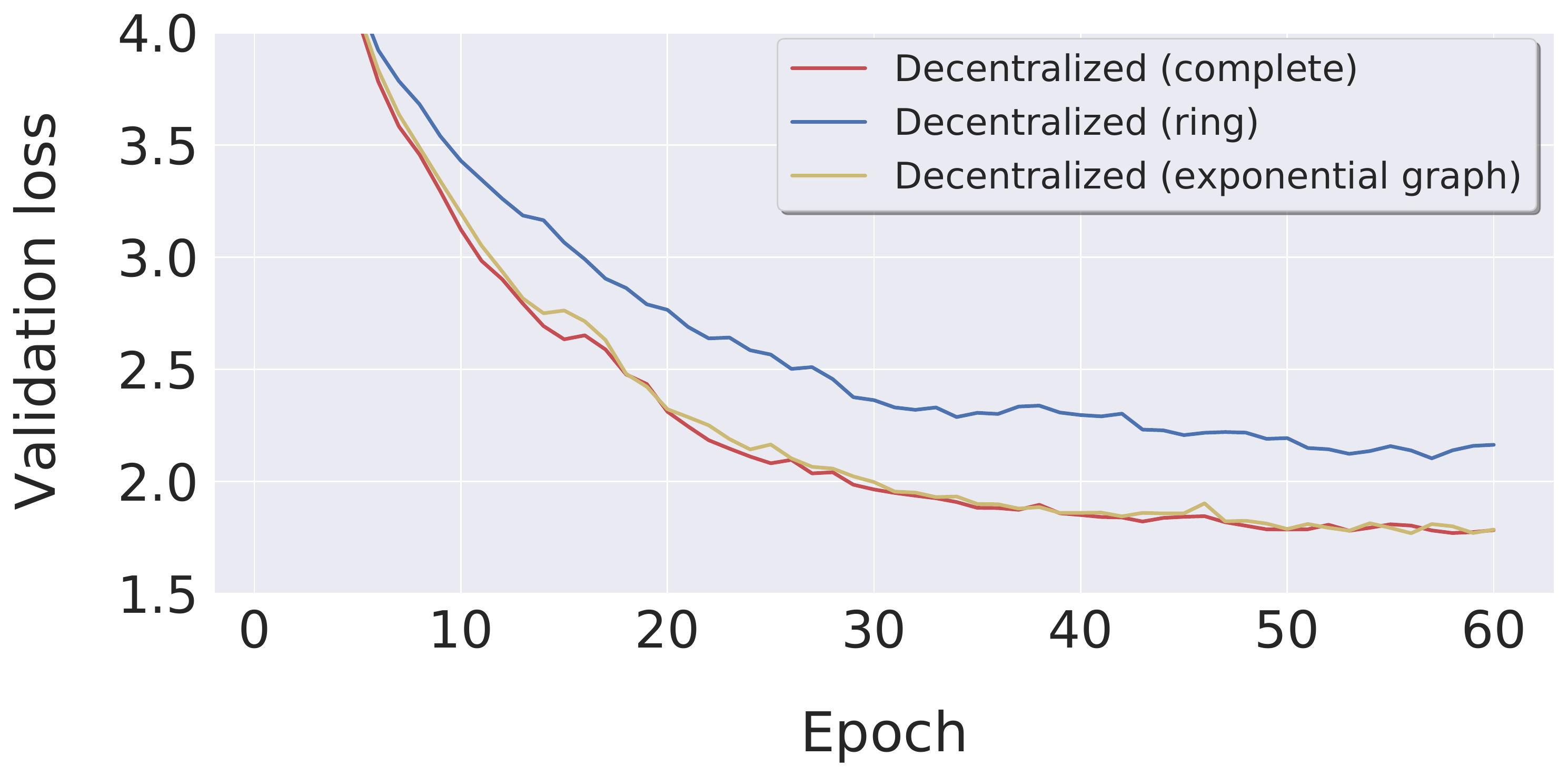}
		\label{fig:transformer_multi30k_baseline_decentralized_only}
	}
	\hfill
	\subfigure[\small
		Consensus distance $\Xi$.
	]{
		\includegraphics[width=.315\textwidth,]{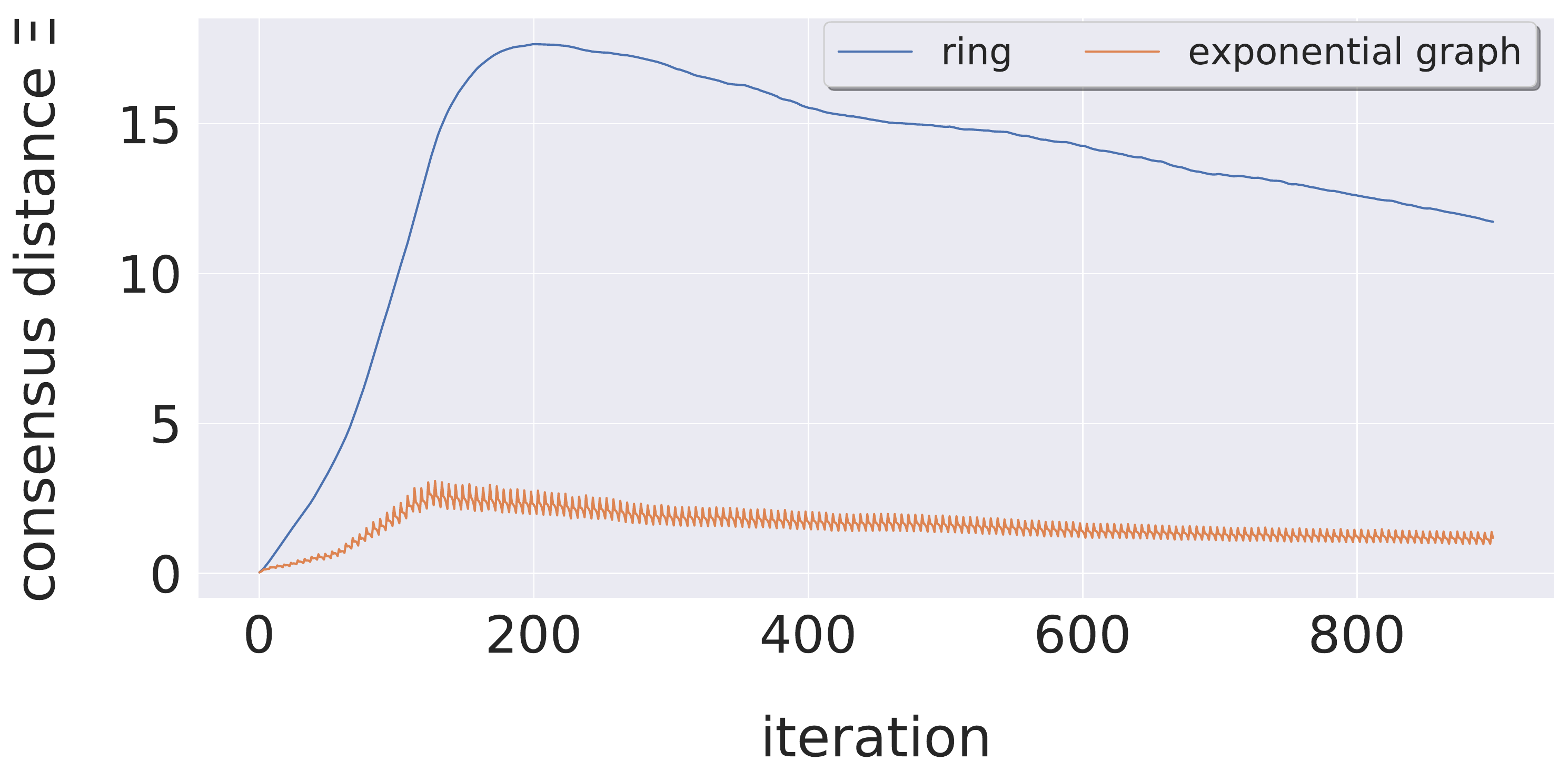}
		\label{fig:transformer_multi30k_consensus_distance_for_ring_and_exponential_graph}
	}
	\vspace{-1em}
	\caption{\small
		Learning curves for training Transformer on Multi30k ($n\!=\!32$).
	}
	\label{fig:preliminary_transformer}
\end{figure*}

\begin{table*}[!t]
	\centering
	\caption{\small
		\textbf{The importance of phase-1} for training ResNet-20 on CIFAR-10 ($n\!=\!32$),
		in terms of \textbf{(1) target consensus distance} and \textbf{(2) the number of training epochs}.
		In phase-2 and phase-3, we perform \emph{decentralized} training (w/o consensus distance control).
	}
	\vspace{-1em}
	\label{tab:resnet20_cifar10_n32_phase_1_importance_with_uncontrolled_other_phases}
	\resizebox{.8\textwidth}{!}{%
		\begin{tabular}{cccccc}
			\toprule
			\diagbox{\# of epochs}{target $\Xi$} & $\Ximax$         & 1/2 $\Ximax$     & 1/4 $\Ximax$     & 1/8 $\Ximax$     & 0 $\Ximax$       \\ \midrule
			150                                  & $91.74 \pm 0.15$ & $92.31 \pm 0.12$ & $92.81 \pm 0.22$ & $92.91 \pm 0.15$ & $92.94 \pm 0.07$ \\
			200                                  & $91.81 \pm 0.22$ & $92.88 \pm 0.20$ & $93.00 \pm 0.18$ & $93.01 \pm 0.10$ & $92.90 \pm 0.17$ \\
			250                                  & $92.09 \pm 0.23$ & $92.74 \pm 0.11$ & $93.15 \pm 0.26$ & $92.99 \pm 0.24$ & $93.31 \pm 0.06$ \\
			\bottomrule
		\end{tabular}%
	}
	\vspace{-0.5em}
\end{table*}

\paragraph{Consistent findings on decentralized SGD without momentum.}
To validate the coherence between our theory and experiments,
we perform similar consensus distance control experiments on vanilla SGD optimizer
(i.e.\ without momentum)
for dec-phase-1 and dec-phase-2 on CIFAR-10.
The patterns illustrated in Table~\ref{tab:resnet20_cifar10_ring_distance_control_without_momentum}
are consistent with our previous observations in
Table~\ref{tab:resnet20_cifar10_different_consensus_distances_and_phases_by_constant_on_ring}
and Table~\ref{tab:resnet20_3_imagenet32_different_consensus_distances_and_phases_on_ring},
supporting the claim on the relation between consensus distance and generalization performance
(which stands regardless of the use of momentum).

\subsection{
	Preliminary study on training transformer models %
}
\label{subsec:transformer}
\paragraph{The critical consensus distance also exists in NLP tasks.}
Figure~\ref{fig:transformer_multi30k_controlled_first_60epochs} demonstrates that
1/4 $\Ximax$ target control on a ring is sufficient to recover the centralized training performance.
Besides, the target consensus distance in this case can be reached by exponential graph (and thus target test performance,
as shown in Figure~\ref{fig:transformer_multi30k_baseline_decentralized_only} and~\ref{fig:transformer_multi30k_consensus_distance_for_ring_and_exponential_graph}).
These justify the importance of designing an efficient communication topology/scheme in practice
so as to effectively reach the CCD.\looseness=-1

\section{Impact on Practice}~\label{sec:impact_on_practice}
~\vspace{-0.8cm}
\paragraph{Practical guidelines: prioritizing the initial training phase.}
Apart from effectiveness (generalization/test performance), efficiency (time) stands as the other crucial goal in deep learning,
and thus
how to allocate communication resource over the training becomes a relevant question.

\begin{figure}[!h]
	\centering
	\includegraphics[width=.45\textwidth,]{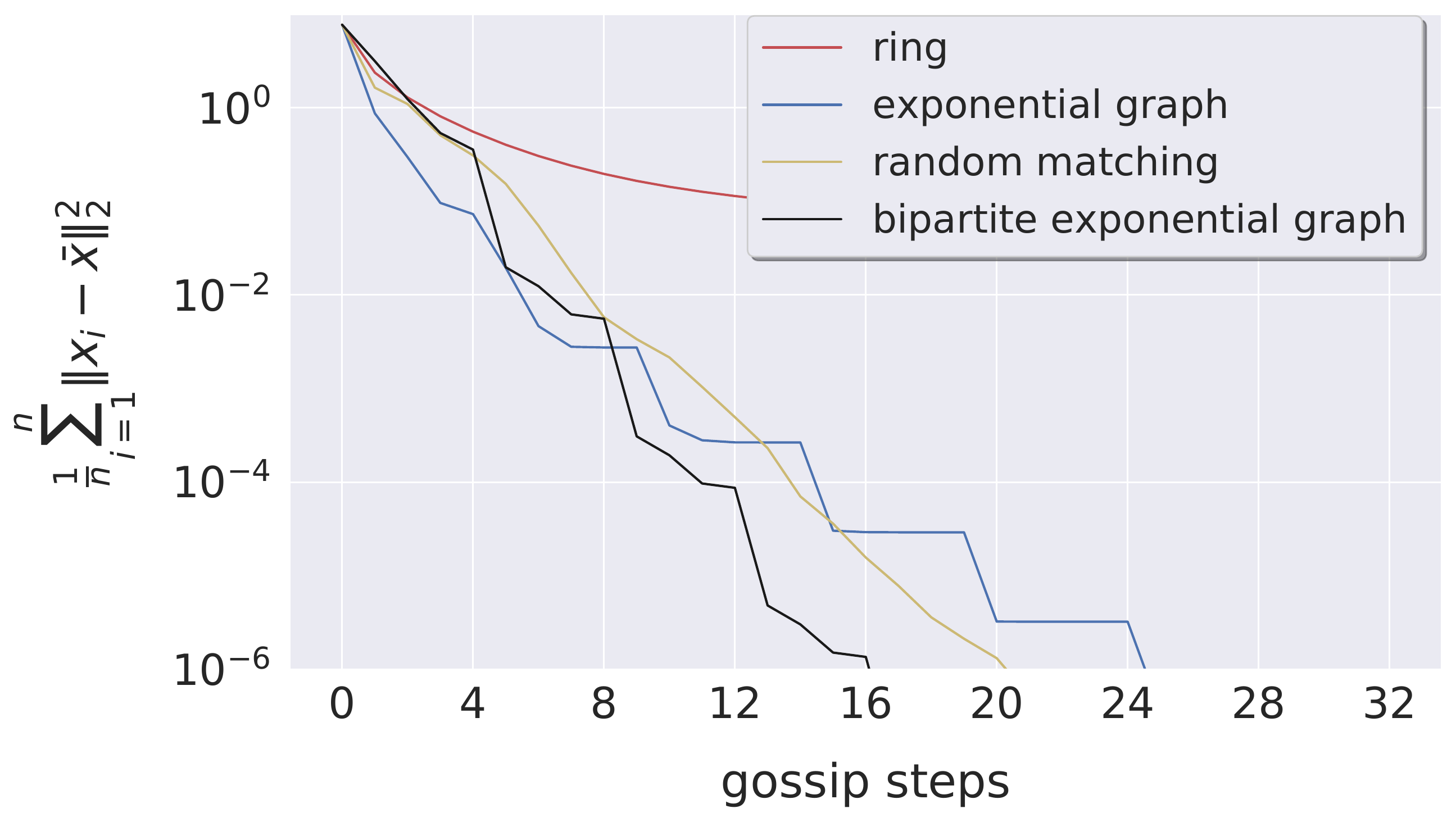}
	\vspace{-1.em}
	\caption{\small
		\textbf{Consensus distance evolution against the number of gossip steps} on different topologies ($n\!=\!32$).
		The initial $x_{i}$'s are sampled uniformly from $[0, 10]$.
		Results on different topology scales are deferred to Appendix~\ref{subsec:consensus_averaging_problem}.
	}
	\vspace{-0.5em}
	\label{fig:understanding_the_importance_of_communication_topologies}
\end{figure}

As indicated by our first empirical finding (and theory in Section~\ref{sec:theory}),
the initial training phase bears the greatest importance over all other training phases;
therefore the communication expenditure should be concentrated on the initial phase
to maintain a consensus distance lower than the CCD.
We suggest a list of communication topologies with superior spectral properties,
i.e.\ exponential graph~\citep{assran2019stochastic}
and random matching~\citep{nadiradze2020swarmsgd} in Figure~\ref{fig:understanding_the_importance_of_communication_topologies}
(the definition of the topology is detailed in Appendix~\ref{subsec:consensus_averaging_problem}),
which can be utilized to achieve fast convergence in gossip averaging.\looseness=-1

The late training phases should be less prioritized for communication resources,
due to the generalization benefits from a reasonable consensus distance in the middle phases.
Providing a rigorous way to quantify the optimal consensus distance is non-trivial,
and is left as future work.

In Table~\ref{tab:resnet20_cifar10_n32_phase_1_importance_with_uncontrolled_other_phases}
we show that the above-mentioned guideline is practically feasible:\
as long as the quality of the initial phase is ensured,
we can afford to slacken the consensus control for later phases, in particular the middle phase.
For instance, when the number of epochs is 150, a consensus control of 1/4 $\Ximax$ in the initial phase
with uncontrolled middle and final phase
is adequate to recover the centralized training performance ($92.81$ v.s.\ $92.82$).
Note that here the noise injection from the uncontrolled middle phase also contributes positively to the performance.
Table~\ref{tab:resnet20_cifar10_phase_interference_on_undirected_exponential_graph} in Appendix~\ref{subsec:optimal_consensus_for_better_generalization}
additionally justifies the applicability of applying this guideline on exponential graphs.

\paragraph{Practical implementation of Consensus Control in Data-Centers.}
Computing the exact consensus distance
requires the average of all model parameters in $\R^{d}$, which is prohibitively expensive (All-Reduce).
We propose therefore to use the following efficient estimator \vspace{-2mm}
\begin{small}
	\begin{align*}
		\Theta_t^2 & := \frac{1}{n}\sum_{i = 1}^n \theta_i^{(t)} &
		           & \text{with}                                 &
		\theta_i^{(t)} &:=  \bigg\| \sum_{j=1}^n w_{ij} \xx_j^{(t)} - \xx_i^{(t)} \bigg\|_2^2 \,,
	\end{align*}
\end{small}%
\par\vspace{-0.4cm}
instead (in Lemma~\ref{lem:upper_bound_CD} we prove that $\Xi_t \leq \smash{\frac{2}{p}} \Theta_t$ is an upper-bound of consensus distance and thus a valid control parameter, see also Section~\ref{sec:eff-impl-exp} for numerical validation).
The values $\smash{\theta_i^{(t)}} \in \R$ can be computed \emph{locally} on each node when updating the parameters at negligible cost (compared to gradient computations), and computing $\Theta_t$ requires only averaging of scalars. While this can be implemented efficiently in data-centers (the cost of averaging these scalar values is negligible compared to averaging high-dimensional parameter vectors in the gossip steps), this might not be efficient over arbitrary decentralized network. \\%
Table~\ref{tab:local_estimate_control_dec_phase_1} and~\ref{tab:local_estimate_control_guidline} in Appendix~\ref{sec:eff-impl-exp} show the feasibility of integrating the control of $\Theta_t$ with our practical guidelines for efficient training in data-centers,
which serves as a strong starting point for designing decentralized training algorithms with a desired balance between communication cost and training performance.

\section{Conclusion}
In this work, we theoretically identify the consensus distance as an essential factor for decentralized training.
We show the existence of a critical consensus distance, below which the consensus distance does not hinder optimization.
Our deep learning experiments validate our theoretical findings and extend them to the generalization performance.
Based on these insights, we propose practical guidelines %
for favorable generalization performance with low communication expenses, on arbitrary communication networks.\\%
While we focused in this work on data-center training with iid data as an important first step, consensus control may be of even greater importance in non-i.i.d.\ scenarios~\citep[such as in][]{Hsieh2020:nonIID}.

\section*{Acknowledgements}
We acknowledge funding from a Google Focused Research Award, Facebook, and European Horizon 2020 FET Proactive Project DIGIPREDICT.

\begin{small}
\bibliography{icml2021}
\bibliographystyle{configuration/icml2021}
\end{small}


\clearpage
\appendix

\onecolumn
{
	\hypersetup{linkcolor=black}
	\parskip=0em
	\renewcommand{\contentsname}{Contents of Appendix}
	\tableofcontents
	\addtocontents{toc}{\protect\setcounter{tocdepth}{3}}
}

\section{Efficient Implementation of Consensus Control for Data-Center Training}\label{sec:efficient_consensus_control}

In our theoretical and experimental investigations in Sections~\ref{sec:theory} and \ref{sec:empirical_insights}, in order to understand the effect of decentralization on the final performance, we focused on the controlling the consensus distance $\Xi_t^2$. This quantity was inspired by theoretical analysis and naturally measures the decentralization level. In practice, in order to control the consensus distance, one need to know the exact value of it at every iteration. Computing the exact value of the $\Xi_t^2$ requires all-to-all communications of the parameter vectors $\xx_i$, which is costly and would cancel all the practical benefits of using decentralized algorithms.

In this section we give a more practical way to control the consensus distance without compromising the final test performance.
We mainly focus on the data-center training scenario,
the most common use case of large-scale deep learning training for both academic and industry.
Though the prior arts use All-Reduce to compute the exactly averaged model parameters,
recent trends show promising faster training results by using decentralized training with gossip averaging~\citep{assran2019stochastic,koloskova2020decentralized},
especially for the highly over-parameterized SOTA neural networks with large number of model parameters.

\subsection{Theoretical Justification}

We upper bound the consensus distance $\Xi_t^2$ with a quantity that is efficiently computable in our scenario and control only this quantity. This quantity additionally requires the centralized all-reduce applied only to one dimensional numbers, that is fast to perform, and it utilizes the information available after decentralized communications step of parameters $\xx_i$ performed by the \eqref{eq:d-sgd} algorithm. For simplicity, in this section we only analyze the case of the fixed topology, i.e. mixing matrix $W$ is constant. Our result can be generalized for the randomized mixing matrix (Assumption~\ref{a:W}) and in later sections we provide the proofs under the general Assumption~\ref{a:W}.
\begin{lemma}[Upper bound on the consensus distance]\label{lem:upper_bound_CD}
	Let $\Theta_t^2 := \frac{1}{n}\sum_{i=1}^n \norm{\sum_{j=1}^n w_{ij} \xx_j^{(t)} - \xx_i^{(t)}}_2^2 = \frac{1}{n} \sum_{i=1}^n \theta_i^{(t)}$, where $w_{ij}$ are the weights of the (fixed) mixing matrix $W$. We can upper bound the consensus distance as
	\begin{align*}
		\Xi_t \leq \frac{2}{p} \Theta_t, \qquad\qquad \forall \xx_1^{(t)}, \dots, \xx_n^{(t)} \in \R^d,
	\end{align*}
	where $p$ is consensus rate of matrix $W$ (Assumption~\ref{a:W}).
\end{lemma}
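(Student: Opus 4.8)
The plan is to relate the two quantities $\Xi_t^2 = \frac{1}{n}\norm{\mX^{(t)} - \bar\mX^{(t)}}_F^2$ and $\Theta_t^2 = \frac{1}{n}\norm{\mX^{(t)}W - \mX^{(t)}}_F^2$ through the mixing operator $W$, exploiting the contraction property in Assumption~\ref{a:W}. First I would rewrite everything in matrix form: since $\bar\mX^{(t)} = \mX^{(t)}\frac{1}{n}\1\1^\top$, set $\mM := \mX^{(t)} - \bar\mX^{(t)}$, so that $n\Xi_t^2 = \norm{\mM}_F^2$. The key observation is that because $W$ is doubly stochastic, $W\frac{1}{n}\1\1^\top = \frac{1}{n}\1\1^\top = \frac{1}{n}\1\1^\top W$, hence $\bar\mX^{(t)} W = \bar\mX^{(t)}$ and therefore $\mX^{(t)}W - \mX^{(t)} = (\mX^{(t)} - \bar\mX^{(t)})W - (\mX^{(t)} - \bar\mX^{(t)}) = \mM(W - \mI)$. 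Thus $n\Theta_t^2 = \norm{\mM(W-\mI)}_F^2$.

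Next I would lower-bound $\norm{\mM(W-\mI)}_F^2$ in terms of $\norm{\mM}_F^2$. Using $\norm{\mM(W-\mI)}_F \geq \norm{\mM}_F - \norm{\mM W}_F$ (triangle inequality applied row-wise / via the reverse triangle inequality in Frobenius norm) and the contraction bound from Assumption~\ref{a:W}, $\norm{\mM W}_F^2 = \norm{\mX^{(t)}W - \bar\mX^{(t)}}_F^2 \leq (1-p)\norm{\mM}_F^2$, so $\norm{\mM W}_F \leq \sqrt{1-p}\,\norm{\mM}_F$. Combining, $\norm{\mM(W-\mI)}_F \geq (1 - \sqrt{1-p})\norm{\mM}_F$. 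Since $1 - \sqrt{1-p} \geq p/2$ for $p \in (0,1]$ (because $\sqrt{1-p} \leq 1 - p/2$, which follows from squaring: $1-p \leq 1 - p + p^2/4$), we get $\norm{\mM(W-\mI)}_F \geq \tfrac{p}{2}\norm{\mM}_F$, i.e.\ $\sqrt{n}\,\Theta_t \geq \tfrac{p}{2}\sqrt{n}\,\Xi_t$, which rearranges to $\Xi_t \leq \tfrac{2}{p}\Theta_t$ as claimed.

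The main obstacle — or rather the step requiring the most care — is the lower bound $\norm{\mM(W-\mI)}_F \geq (1-\sqrt{1-p})\norm{\mM}_F$. The contraction property of Assumption~\ref{a:W} only tells us that $\norm{\mM W}_F$ is \emph{at most} $\sqrt{1-p}\norm{\mM}_F$; it does not directly control $\norm{\mM(W-\mI)}_F$ from below, which is what we need. The reverse-triangle-inequality route sketched above is clean and avoids having to argue about eigenvalues of $W$ (which would be awkward anyway since $W$ need not be symmetric and Assumption~\ref{a:W} is stated only through the contraction bound, not spectrally). One subtlety: when $p = 1$ (perfect averaging in one step) the bound $\Xi_t \leq 2\Theta_t$ still holds and the argument goes through since $1 - \sqrt{1-p} = 1 \geq 1/2$; the elementary inequality $\sqrt{1-p} \leq 1 - p/2$ should be verified at the endpoint $p=1$ as well, where it reads $0 \leq 1/2$. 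For the general randomized Assumption~\ref{a:W} (deferred to later sections, as the statement notes), one would apply the same row-wise triangle inequality pointwise for each sample $W$ and then take expectations, using Jensen where needed; but for this lemma the fixed-$W$ case suffices.
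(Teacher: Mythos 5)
Your proof is correct, and it takes a genuinely different route from the paper's. Both proofs start from the same reduction: writing $\mM := \mX^{(t)} - \bar\mX^{(t)}$ and observing that double stochasticity gives $\mX^{(t)}\mW - \mX^{(t)} = \mM(\mW - \mI)$ (the paper writes the second factor as $\mW - \frac{\1\1^\top}{n} - \mI$, which is the same thing after multiplication by $\mM$ since $\mM\1\1^\top = 0$). From there the paper goes spectral: it invokes the inequality $\norm{\mA\mB}_F \geq \abs{\lambda_{\min}(\mB)}\,\norm{\mA}_F$ for symmetric $\mB$, and proves in a separate lemma --- by diagonalizing $\mW$ and deriving a contradiction from a hypothetical eigenvalue exceeding $\sqrt{1-p}$ --- that $\lambda_{\min}\bigl(\mW - \frac{\1\1^\top}{n} - \mI\bigr) \geq 1 - \sqrt{1-p} \geq \frac{p}{2}$. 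You instead apply the reverse triangle inequality $\norm{\mM\mW - \mM}_F \geq \norm{\mM}_F - \norm{\mM\mW}_F$ and feed in the contraction bound $\norm{\mM\mW}_F = \norm{\mX^{(t)}\mW - \bar\mX^{(t)}}_F \leq \sqrt{1-p}\,\norm{\mM}_F$ directly from Assumption~\ref{a:W}, landing on the same constant $1-\sqrt{1-p} \geq p/2$. Your argument is shorter and strictly more general: it never uses symmetry of $\mW$ (which the paper's appendix quietly assumes even though Assumption~\ref{a:W} does not state it) and, as you note, it extends to randomized mixing matrices with only a Jensen step, whereas the paper's eigen-decomposition argument is tied to a fixed symmetric $\mW$. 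What the paper's route buys in exchange is the sharper spectral fact that every non-leading eigenvalue of $\mW$ is bounded by $\sqrt{1-p}$ in absolute value, which is reusable information beyond this lemma.
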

To ensure small consensus distance $\Xi_t^2$ it is sufficient to make small the quantity $\Theta_t^2$. In particular by ensuring that  $\Theta_t^2 \leq \frac{p^2}{4} \Gamma_t^2$ we automatically get that CCD condition holds: $\Xi_t^2 \leq \Gamma_t^2$ (Proposition~\ref{rem:ccd}).

\paragraph{Practical way to compute $\Theta_t^2$. }
Recall that $\Theta_t^2 = \frac{1}{n} \sum_{i=1}^n \norm{\sum_{j=1}^n w_{ij} \xx_j^{(t)} - \xx_i^{(t)}}_2^2$. Each term $i$, $i \in \{1, \dots n\}$ of this sum is locally available to the node $i$ after one round of decentralized communication with mixing matrix $W$ because $w_{ij} \neq 0$ only for the neighbours $j$ of the node $i$. So each node $i$ can locally compute the norm $\norm{\sum_{j=1}^n w_{ij} \xx_j^{(t)} - \xx_i^{(t)}}_2^2$ and then obtain the average $\Theta_t^2$ using centralized all-reduce on only 1-dimensional numbers, that is much faster than averaging full vectors from $\R^d$.

\subsection*{Proof of the Lemma~\ref{lem:upper_bound_CD} }
\begin{proof}
	Using matrix notation we can re-write $\Xi_t^2 = \frac{1}{n}\norm{\mX^{(t)} - \bar \mX^{(t)}}_F^2$ and $\Theta_t^2 = \frac{1}{n} \norm{\mX^{(t)} \mW - \mX^{(t)}}_F^2$.

	Since $\bar \mX^{(t)} \mW = \bar \mX^{(t)}$ and $ \mX^{(t)} \frac{\1\1^\top}{n}  =  \bar \mX^{(t)}\frac{\1\1^\top}{n}  = \bar \mX^{(t)}$ we have that
	\begin{align*}
		\mX^{(t)} \mW - \mX^{(t)} = \left( \mX^{(t)} - \bar \mX^{(t)}\right)\left(\mW - \frac{\1\1^\top}{n} - \mI\right)
	\end{align*}
	Using Frobenius norm property \eqref{eq:frob_lowe_bound},
	\begin{align*}
		\norm{\mX^{(t)} \mW - \mX^{(t)}}_F & \geq \lambda_{\min}\left(\mW - \frac{\1\1^\top}{n} - \mI\right) \norm{ \mX^{(t)} - \bar \mX^{(t)} }_F \stackrel{\eqref{eq:eigval_lower_b}}{\geq} \frac{p}{2} \norm{ \mX^{(t)} - \bar \mX^{(t)} }_F \qedhere
	\end{align*}
\end{proof}
\subsection*{Used Inequalities}
\begin{lemma}
	For $\mA \in \R^{d\times n}, \mB \in \R^{n \times n}$, and $\mB$ symmetric
	\begin{align}\label{eq:frob_lowe_bound}
		\norm{\mA \mB}_F \geq |\lambda_{\min}(\mB)| \norm{\mA}_F,
	\end{align}
	where $\lambda_{\min}(\mB)$ is the smallest eigenvalue by the absolute value.

\end{lemma}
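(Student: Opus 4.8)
The plan is to diagonalize $\mB$ through its spectral decomposition and then exploit the invariance of the Frobenius norm under right multiplication by an orthogonal matrix. Since $\mB$ is symmetric, I would write $\mB = \mQ \mLambda \mQ^\top$ with $\mQ$ orthogonal and $\mLambda = \diag(\lambda_1, \dots, \lambda_n)$ collecting the real eigenvalues. The first ingredient is the orthogonal invariance: because $\norm{\mM}_F^2 = \Tr(\mM \mM^\top)$ and $\mQ \mQ^\top = \mI$, for any conformable $\mM$ one has $\norm{\mM \mQ}_F^2 = \Tr(\mM \mQ \mQ^\top \mM^\top) = \Tr(\mM \mM^\top) = \norm{\mM}_F^2$, so right multiplication by $\mQ$ (or $\mQ^\top$) preserves the Frobenius norm.

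With this in hand I would rewrite $\norm{\mA \mB}_F = \norm{\mA \mQ \mLambda \mQ^\top}_F = \norm{\mA \mQ \mLambda}_F$, dropping the trailing $\mQ^\top$ by invariance. Setting $\mC := \mA \mQ$ with columns $\cc_1, \dots, \cc_n$, the right factor $\mLambda$ merely rescales the $j$-th column of $\mC$ by $\lambda_j$, so that $\norm{\mC \mLambda}_F^2 = \sum_{j=1}^n \lambda_j^2 \norm{\cc_j}_2^2$. Bounding each $\lambda_j^2$ below by $\min_j \lambda_j^2$ and pulling this constant out of the sum yields $\norm{\mC \mLambda}_F^2 \geq (\min_j \lambda_j^2) \sum_{j=1}^n \norm{\cc_j}_2^2 = (\min_j \lambda_j^2)\, \norm{\mC}_F^2$.

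Finally I would identify $\min_j \lambda_j^2 = |\lambda_{\min}(\mB)|^2$ — the smallest squared eigenvalue equals the square of the eigenvalue that is smallest in absolute value — and invoke the invariance once more via $\norm{\mC}_F = \norm{\mA \mQ}_F = \norm{\mA}_F$. Taking square roots then gives $\norm{\mA \mB}_F \geq |\lambda_{\min}(\mB)|\, \norm{\mA}_F$, which is exactly the claim. There is no substantive obstacle in this argument; the only point requiring care is the bookkeeping between the eigenvalue indexed by \emph{absolute value} and the ordinary signed minimum. This distinction is genuinely needed, since in the intended application of Lemma~\ref{lem:upper_bound_CD} the relevant matrix $\mW - \frac{\1\1^\top}{n} - \mI$ has negative eigenvalues, and it is their magnitude (bounded below via \eqref{eq:eigval_lower_b}) that controls the inequality.
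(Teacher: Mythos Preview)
Your argument is correct. The paper itself does not supply a proof for this lemma; it states the inequality as a known fact and immediately applies it in the proof of Lemma~\ref{lem:upper_bound_CD}. Your spectral-decomposition approach---reducing to the diagonal case via orthogonal invariance of the Frobenius norm and then bounding the column-wise scaling---is the standard way to establish this, and nothing is missing.
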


\begin{lemma}\label{lem:eigval_lower_b}
	Let $\mW$ be a fixed mixing matrix satisfying Assumption~\ref{a:W}. Then,
	\begin{align}\label{eq:eigval_lower_b}
		\lambda_{\min}\left(\mW - \frac{\1\1^\top}{n} - \mI\right) \geq \frac{p}{2} \,.
	\end{align}
\end{lemma}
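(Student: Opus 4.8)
\textbf{Proof plan for Lemma~\ref{lem:eigval_lower_b}.}
The plan is to translate the variance-contraction property from Assumption~\ref{a:W} into a spectral statement about the symmetric matrix $\mW$, and then read off the eigenvalue bound for the shifted matrix $\mW - \frac{\1\1^\top}{n} - \mI$. First I would note that, since $\mW$ is symmetric and doubly stochastic, it has eigenvalue $1$ with eigenvector $\frac{1}{\sqrt n}\1$, and all its eigenvalues lie in $[-1,1]$. Writing $\lambda_2 \geq \dots \geq \lambda_n$ for the remaining eigenvalues (those orthogonal to $\1$), the contraction inequality $\norm{\mX\mW - \bar\mX}_F^2 \leq (1-p)\norm{\mX - \bar\mX}_F^2$ applied to rank-one choices of $\mX$ forces $\lambda_i^2 \leq 1-p$ for all $i \in \{2,\dots,n\}$, i.e.\ $|\lambda_i| \leq \sqrt{1-p}$.

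Next I would analyze the matrix $\mM := \mW - \frac{\1\1^\top}{n} - \mI$. It is symmetric, and it is simultaneously diagonalizable with $\mW$. On the direction $\frac{1}{\sqrt n}\1$, $\mM$ acts as $1 - 1 - 1 = -1$. On the orthogonal complement, $\frac{\1\1^\top}{n}$ vanishes, so $\mM$ acts as $\mW - \mI$ with eigenvalues $\lambda_i - 1$ for $i \in \{2,\dots,n\}$. Since $\lambda_i \in [-\sqrt{1-p}, \sqrt{1-p}]$, we have $\lambda_i - 1 \in [-1-\sqrt{1-p},\, -1+\sqrt{1-p}]$; in particular each such eigenvalue is negative, with absolute value at least $1 - \sqrt{1-p}$. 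Comparing with the eigenvalue $-1$ coming from the $\1$ direction (absolute value $1 \geq 1-\sqrt{1-p}$), the smallest eigenvalue by absolute value is therefore $|\lambda_{\min}(\mM)| \geq 1 - \sqrt{1-p}$.

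Finally I would verify the arithmetic inequality $1 - \sqrt{1-p} \geq \frac{p}{2}$ for $p \in (0,1]$. This follows since $\sqrt{1-p} \leq 1 - \frac{p}{2}$, which in turn holds because squaring the right-hand side gives $1 - p + \frac{p^2}{4} \geq 1-p$; alternatively it is the standard concavity bound $\sqrt{1-p} \leq 1 - \frac{p}{2}$. Chaining these gives $\lambda_{\min}\bigl(\mW - \frac{\1\1^\top}{n} - \mI\bigr) \geq \frac{p}{2}$, where here $\lambda_{\min}$ is interpreted (consistently with its use in the proof of Lemma~\ref{lem:upper_bound_CD}) as the minimum over absolute values of eigenvalues. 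I do not anticipate a serious obstacle; the only point requiring care is the bookkeeping on the $\1$-direction — making sure that the eigenvalue $-1$ it contributes to $\mM$ does not undercut the bound, which it does not since $1 \geq p/2$. One should also be slightly careful that Assumption~\ref{a:W} as stated allows randomized $\mW$, but here we are in the fixed-matrix case, so the expectation is trivial and the eigenvalue extraction via rank-one test matrices is legitimate.
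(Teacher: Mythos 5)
Your proposal is correct and follows essentially the same route as the paper's proof: diagonalize $\mW$, observe that $\mW - \frac{\1\1^\top}{n} - \mI$ has eigenvalues $-1$ and $\lambda_i - 1$ for $i\geq 2$, use rank-one test matrices in Assumption~\ref{a:W} to get $|\lambda_i|\leq\sqrt{1-p}$, and finish with $1-\sqrt{1-p}\geq p/2$. Your handling of possibly negative $\lambda_i$ is in fact slightly more careful than the paper's contradiction argument, which only explicitly treats $\lambda_i>\sqrt{1-p}$, but the substance is identical.
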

\begin{proof}
	Let $\mU\mLambda \mU$ be SVD-decomposition of $\mW$. By Assumption~\ref{a:W}, $\mW$ is symmetric and doubly stochastic. Because of the stochastic property of $\mW$, one of the eigenvalues is equal to $1$ with the corresponding eigenvector $u_1 = \frac{1}{\sqrt{n}} \1$.

	We can represent the matrix $\frac{\1\1^\top}{n}$ and $\mI$ as
	\begin{align*}
		\frac{\1\1^\top}{n} = u_1 u_1^\top = \mU \begin{pmatrix}
			1 & 0 & \dots & 0 \\
			0 & 0 & \dots & 0 \\
			\dots\\
			0 & 0 & \dots & 0
		\end{pmatrix} \mU^\top
		  &   & \mI = \mU \mU^\top \,.
	\end{align*}
	Therefore,
	\begin{align*}
		W - \frac{\1\1^\top}{n} - \mI = \mU \left[ \mLambda - \begin{pmatrix}
				1 & 0 & \dots & 0 \\
				0 & 0 & \dots & 0 \\
				\dots\\
				0 & 0 & \dots & 0
			\end{pmatrix}  - \mI   \right]\mU^\top = \mU \begin{pmatrix}
			- 1 & 0             & \dots & 0             \\
			0   & \lambda_2 - 1 & \dots & 0             \\
			\dots\\
			0   & 0             & \dots & \lambda_n - 1
		\end{pmatrix} \mU^\top
	\end{align*}
	\begin{align*}
		\lambda_{\min}\left( \mW - \frac{\1\1^\top}{n} - \mI \right) = \min\{ 1, | 1 - \lambda_i(\mW)| \}, &   & i \geq 2
	\end{align*}
	We will prove now that every $\lambda_i(\mW), i \geq 2$ is smaller than $\sqrt{1 - p}$. Then it would follow that $\lambda_{\min}\left(\mW - \frac{\1\1^\top}{n} - \mI\right)\geq 1 - \sqrt{1 - p} \geq \frac{p}{2}$ for $0 \leq  p \leq 1$.

	Lets assume that one of the $\lambda_i(\mW)$ is greater than $\sqrt{1 - p}$. W.l.o.g. lets call this eigenvalue $\lambda_2$. Its corresponding eigenvector is $u_2$. Since the eigenvectors are orthogonal to each other and the first is $u_1 = \1$, it should hold that $u_1^2\top u_2 = \1^\top u_2 = 0$. Lets take $\mX$ such that its every column is equal to $\nicefrac{\1}{n} + u_2$. Then $\bar \mX = \frac{\1\1^\top}{n}$ and $(\mI - \frac{\1\1^\top}{n}) \mX = \mX - \bar \mX = \begin{pmatrix}
			u_2, \dots, u_2
		\end{pmatrix} =: \mU_2$ is a matrix with all columns equal to $u_2$.
	\begin{align*}
		\norm{\mW \mX -  \bar \mX }_F^2 = \norm{ \left(\mW - \frac{\1\1^\top}{n}\right)\left(\mI - \frac{\1\1^\top}{n}\right) \mX}_F^2 = \norm{\left(\mW - \frac{\1\1^\top}{n}\right) \mU_2}_F^2 = \norm{\lambda_2 \mU_2}_F^2 = \lambda_2^2 \norm{\mX -  \bar \mX}_F^2
	\end{align*}
	Since the Assumption~\ref{a:W} holds for the $\mW$ for all $\mX$, it should also hold for our chosen above $\mX$ and
	\begin{align*}
		\lambda_2^2 \norm{\mX -  \bar \mX}_F^2 = \norm{\mW \mX -  \bar \mX }_F^2 = \norm{ \left(\mW - \frac{\1\1^\top}{n}\right)\left(\mI - \frac{\1\1^\top}{n}\right) \mX}_F^2 \leq (1 - p) \norm{\left(\mI - \frac{\1\1^\top}{n}\right) \mX}_F^2
	\end{align*}
	We got a contradiction which concludes the proof.
	\qedhere

\end{proof}

\subsection{Experiments with the Efficient Consensus Control Scheme}\label{sec:eff-impl-exp}
We implement the efficient consensus control scheme to train ResNet-20 on CIFAR-10 with a ring topology ($n=32$).
We compute $\Theta$ after each gossip step as an indicator of the exact consensus distance $\Xi$.
The gossip continues until $\Theta < q \phiema $,
where $q$ is the control factor and $\phiema$ is the exponential moving average estimator
of the average norm of local gradients $\phi$.
Please refer to Section~\ref{subsec:exp_setup} for other training details.

We validate Lemma~\ref{lem:upper_bound_CD} by Figure~\ref{fig:theta_xi_fine} and Figure~\ref{fig:theta_xi_coarse}.
In Figure~\ref{fig:theta_xi_fine}, we can observe that during an arbitrary interval of the control phase
the high correlation between $\Xi$ and $\Theta$ over gossips steps.
In Figure~\ref{fig:theta_xi_coarse},
we can observe that this corrected behavior also manifests in a large span of iterations.
These observations justify our claim that the $\Theta$ can act as a decent and much more inexpensive estimator of $\Xi$.
We also plot $\phi$ over iterations in Figure~\ref{fig:ema_local_updates} to demonstrate that
the critical consensus distance $\Gamma$ stays relatively constant within each training phase.

In Table~\ref{tab:local_estimate_control_dec_phase_1},
we show the test performance of the dec-phase-1 under the control of this efficient implementation.
The pattern is consistent with the discovery in the main text.
Moreover, in Table~\ref{tab:local_estimate_control_guidline}, we follow the ``prioritizing the initial training phase'' guideline in Section~\ref{sec:impact_on_practice}.
Specifically, we control only the initial phase (phase-1) with the local estimate, while leaving the other phases uncontrolled (normal decentralized training).
We can observe that with our guideline, we can recover and surpass the centralized training baseline with only the control on the initial phase.
Therefore, combining the insights into the effect of consensus distance and this efficient implementation,
we open up the opportunities for practical decentralized training schemes
with a desired balance between communication cost and training performance.
We leave more sophisticated design for future work.

\begin{figure*}[!t]
	\vspace{-0.5em}
	\centering
	\subfigure[Consensus distance $\Xi$ (top figure) and the local estimator $\Theta$ (bottom figure) over gossip steps.]{
		\includegraphics[width=.305\textwidth,]{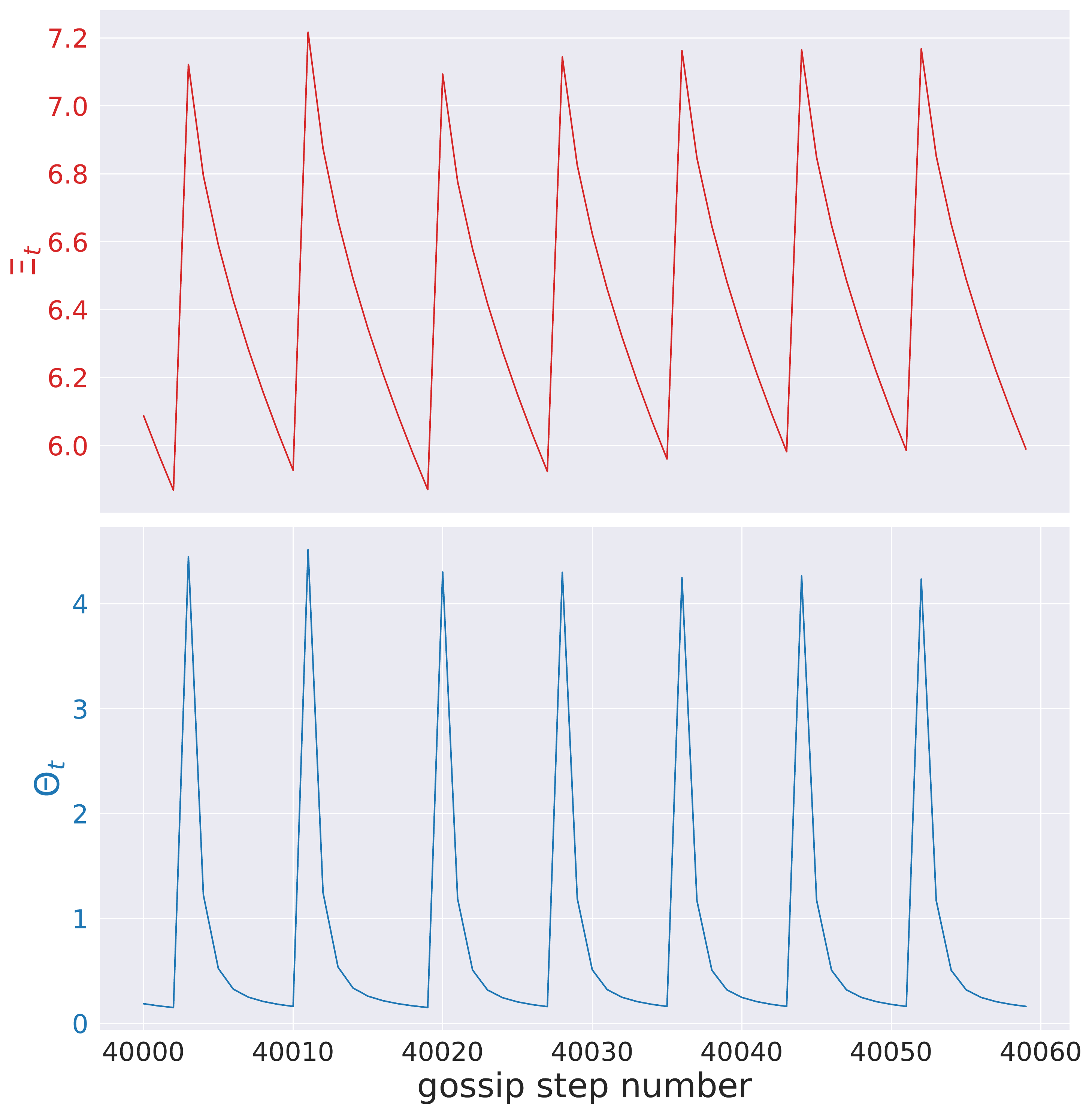}
		\label{fig:theta_xi_fine}
	}
	\hfill
	\subfigure[Consensus distance $\Xi$ (top figure) and the local estimator $\Theta$ (bottom figure) over training iterations of phase-1.]{
		\includegraphics[width=.305\textwidth,]{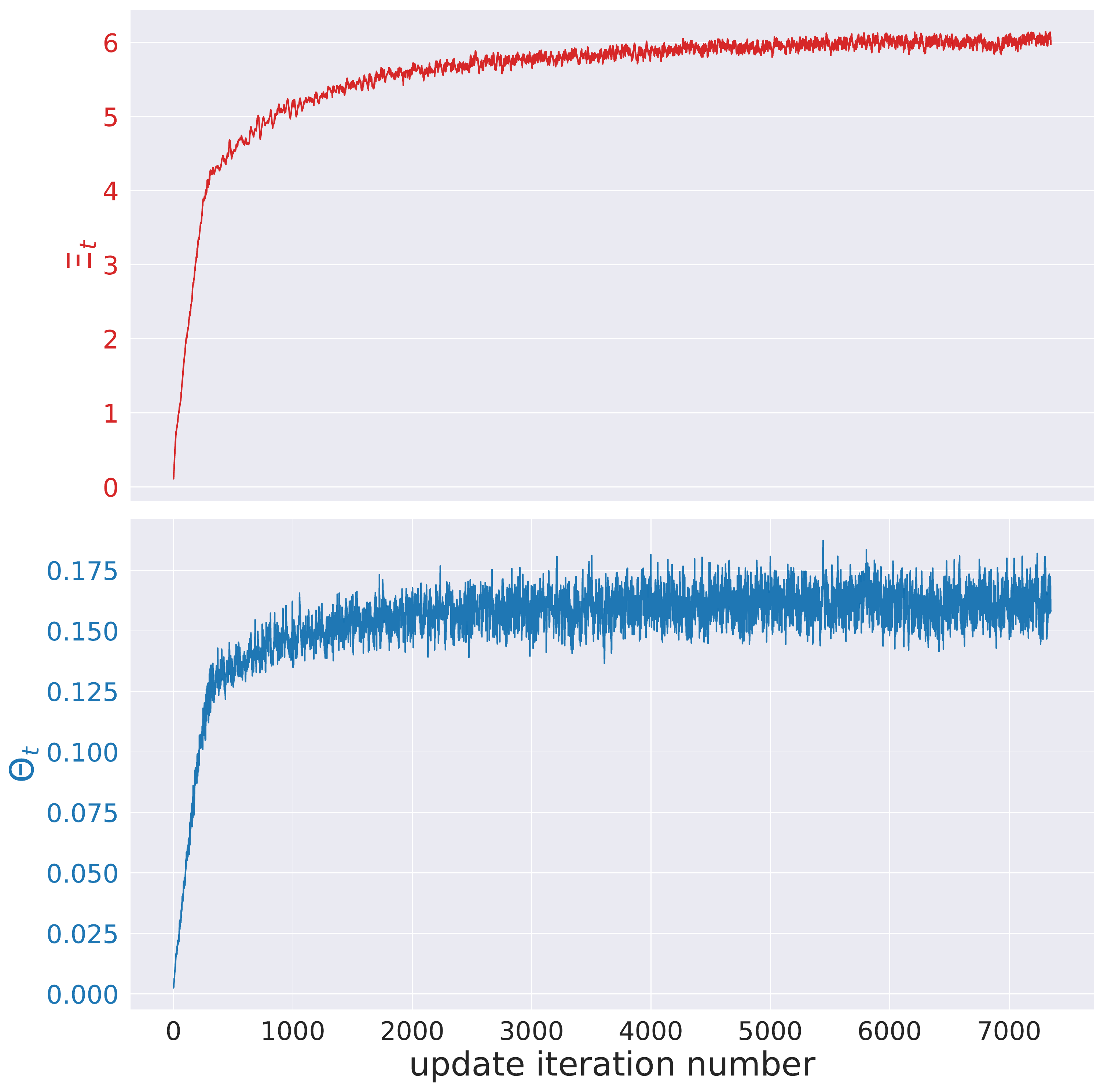}
		\label{fig:theta_xi_coarse}
	}
	\hfill
	\subfigure[The average norm of local gradients $\phi$]{
		\includegraphics[width=.305\textwidth,]{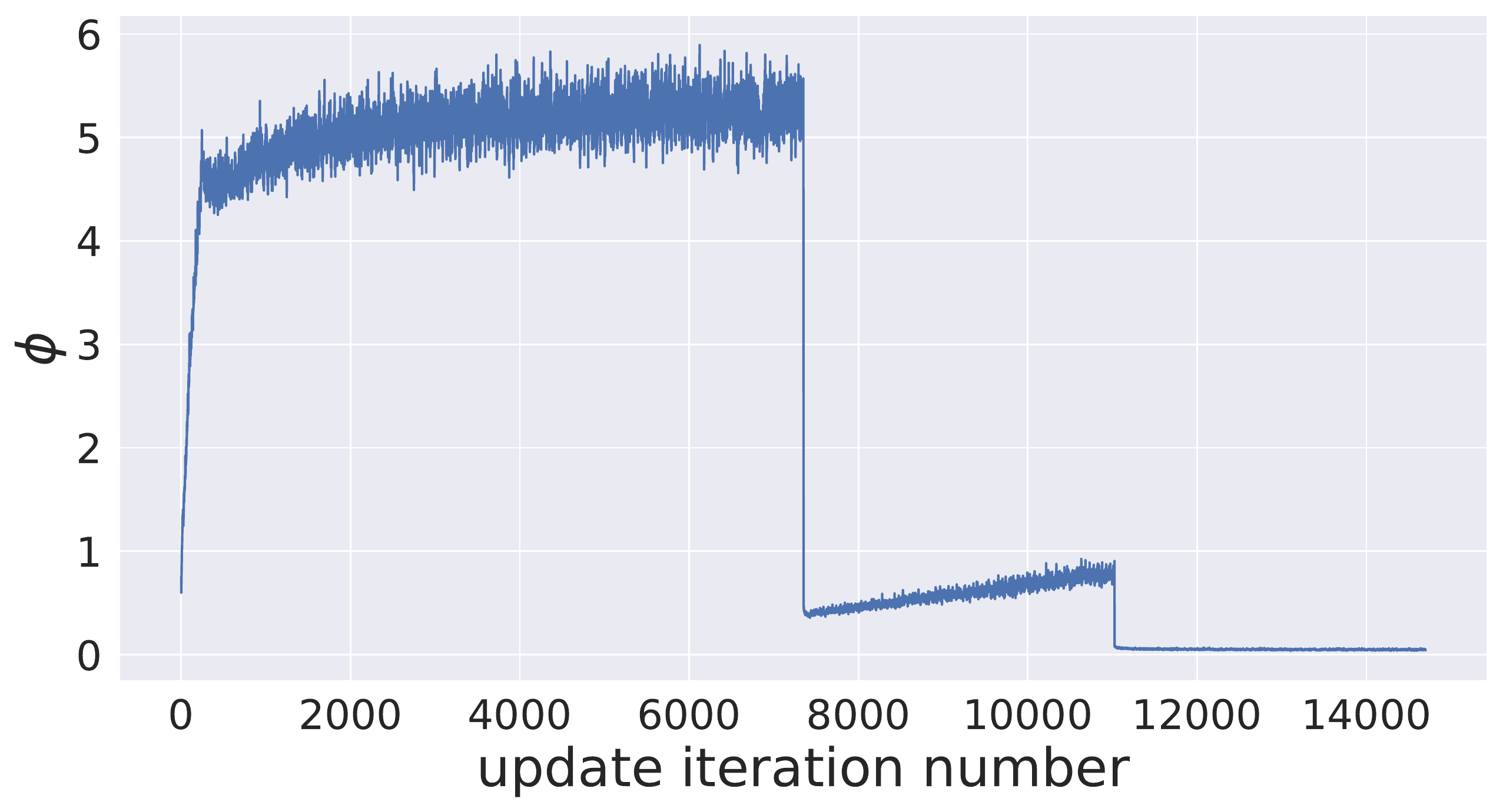}
		\label{fig:ema_local_updates}
	}
	\vspace{-1em}
	\caption{\small
		Dec-phase-1: ResNet-20 on CIFAR-10 with a ring topology ($n=32$), under the efficient implementation of the consensus control with the ratio $q = 1e{-}3$.
		To illustrate the evolution of the consensus distance, the plots are made over gossip steps.
		Note, typically several gossip steps correspond to one training iteration for consensus control.
		In Figure~\ref{fig:theta_xi_coarse}, both $\Xi$ and $\Theta$ are plotted over update iterations which correspond to the last gossip steps of all iterations (i.e.\ the troughs in Figure~\ref{fig:theta_xi_fine}); the gossip steps in Figure~\ref{fig:theta_xi_fine} correspond to iteration steps $ 4824 - 4831 $ in Figure~\ref{fig:theta_xi_coarse},
		and we only showcase an arbitrary interval in Figure~\ref{fig:theta_xi_fine} due to the consistent pattern over the entire phase 1.
	}
	\label{fig:local_estimates}
	\vspace{-0.5em}
\end{figure*}

\begin{table}[!h]
	\centering
	\caption{\small
		\textbf{
			Efficient consensus control of dec-phase-1 with local estimates} of training ResNet-20 on CIFAR-10 ($n\!=\!32$).
	}
	\vspace{-1em}
	\label{tab:local_estimate_control_dec_phase_1}
	\resizebox{.8\textwidth}{!}{%
		\begin{tabular}{c|ccccc}
			\toprule
			            & Centralized        & $q=1e{-}4$       & $q=1e{-}3$       & $q=1e{-}2$       & w/o control      \\ \midrule
			dec-phase-1 & $ 92.82 \pm 0.27 $ & $92.85 \pm 0.16$ & $92.69 \pm 0.31$ & $92.44 \pm 0.02$ & $91.78 \pm 0.35$ \\
			\bottomrule
		\end{tabular}%
	}
\end{table}

\begin{table}[!h]
	\centering
	\caption{\small
		\textbf{Efficient consensus control for data-center training---combining practical guideline with local estimates}---for training ResNet-20 on CIFAR-10 ($n\!=\!32$).
		Based on our practical guideline (Section~\ref{sec:impact_on_practice}), we control only the initial phase (phase-1) with the local estimate ($\Theta$), while leaving the other phases uncontrolled (normal decentralized training).
	}
	\vspace{-1em}
	\label{tab:local_estimate_control_guidline}
	\resizebox{.8\textwidth}{!}{%
		\begin{tabular}{c|ccccc}
			\toprule
			          & Centralized        & $q=1e{-}4$      & $q=1e{-}3$       & $q=1e{-}2$       & w/o control      \\ \midrule
			guideline & $ 92.82 \pm 0.27 $ & $93.10 \pm	0.13$ & $92.79 \pm 0.17$ & $92.64 \pm 0.14$ & $91.78 \pm 0.35$ \\
			\bottomrule
		\end{tabular}%
	}
\end{table}

\section{Related Work} \label{sec:connection}
\subsection{Connection with Prior Work} \label{sec:connection_with_prior_work}
\paragraph{Connection with gradient diversity.}
The connections between the consensus distance and gradient diversity measure~\citep{yin2018gradient,johnson2020adascale}
are not obvious and is an interesting direction for future works.
On the one hand, low gradient diversity could cause generalization degradation of decentralized methods as in the centralized case;
one the other hand, high gradient diversity increases the difficulty of reaching a low consensus distance.

\paragraph{Connection with other methods like SWA/SWAP.}
Our empirical insights bear similarity to the ones in SWA~\citep{izmailov2018averaging}, SWAP~\citep{gupta2020stochastic},
and Post-local SGD~\citep{lin2020dont}, but none of them considers decentralized deep learning.

In SWA, models are sampled from the later stages of an SGD training run:
the average of these model parameters result in a model with much higher generalization performance.
SWAP extends SWA to a parallel fashion: it uses a large batch size to train the model until close to convergence and
then switches to several individual runs with a small mini-batch size.
These individual runs serve as a way of sampling from a posterior distribution
and sampled models are averaged for better generalization performance (i.e. the idea of SWA).

Post-local SGD, SWA, SWAP, as well as the empirical insights presented in our paper,
are closely related: we first need sufficient small consensus distance to guarantee the optimization quality
(in post-local SGD, SWA, and SWAP, the consensus distance equals 0),
and thus different model averaging choices can be utilized in the later training phase for better generalization.
Considering the later training phase, our empirical observations in decentralized learning
suggest that we can improve the generalization through the simultaneous SGD with gossip averaging.
This is analogous to SWA and SWAP that sample model independently (i.e., perform SGD)
from the well-trained model and average over sampled models;
and close to Post-local SGD which performs simultaneous SGD steps with infrequent averaging.

\subsection{Discussion on ``Convergence analysis v.s.\ generalization performance''} \label{sec:optimization_vs_generalization}

\paragraph{From convergence analysis to better understand generalization.}
A line of recent research reveals the interference between initial training (optimization)~\citep{jastrzebski2020the, golatkar2019time, achille2018critical}
and the later reached local minima (generalization)~\citep{neyshabur2017implicit, lin2020dont, lin2020extrapolation, gupta2020stochastic, izmailov2018averaging, keskar2016large}:
the generalization of the deep nets cannot be studied alone via vacuous generalization bounds,
and its practical performance is contingent on the critical initial learning (optimization) phase,
which can be characterized by the conventional convergence analysis~\citep{achille2018critical, izmailov2018averaging, golatkar2019time, lin2020dont, gupta2020stochastic, jastrzebski2020the}.

This motivates us to derive the metric (i.e.\ critical consensus distance) from the convergence analysis,
for the examination of the consensus distance (on different phases) in decentralized deep learning training.
For example, (1) we identify the impact of different consensus distances
at the critical learning phase on the quality of initial optimization,
and the final generalization~\citep{jastrzebski2020the,golatkar2019time,achille2018critical,lin2020dont}
(i.e.\ our studied case of dec-phase-1),
and (2) we reveal similar observations as in~\citep{lin2020dont,lin2020extrapolation,gupta2020stochastic,izmailov2018averaging}
when the optimization is no longer a problem (our studied case of dec-phase-2),
where the existence of consensus distance can act as a form of noise injection~\citep{lin2020dont}
or sampling models from the posterior distribution~\citep{gupta2020stochastic,izmailov2018averaging} as discussed above.

\section{Theory}~\label{sec:proofs}
In this section, we prove the claims from Section~\ref{sec:theory}.

\subsection{Proof of Proposition~\ref{rem:ccd}, Critical Consensus Distance}\label{apx:ccd}
The proof of this claim follows by the following Lemma:
\begin{lemma}[\citet{koloskova2020unified}, Descent lemma for non-convex case]\label{lem:decrease_nc} Under the given assumptions, and for any stepsize $\gamma < \frac{1}{4 L}$, the iterates of D-SGD satisfy
	\begin{align*}
		\EE{t + 1}{f(\bar{\xx}^{(t + 1)})} & \leq f(\bar{\xx}^{(t)}) - \frac{\eta}{4}\norm{\nabla f(\bar{\xx}^{(t)})}_2^2 + \gamma \Xi_t^2 + \frac{L}{n} \gamma^2 \hat{\sigma}^2.
	\end{align*}
\end{lemma}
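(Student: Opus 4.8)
The plan is to obtain the one-step progress inequality by applying the quadratic ($L$-smoothness) upper bound along the trajectory of the \emph{averaged} iterate $\bar\xx^{(t)}$, and then to control the two error terms that appear: the variance of the per-node stochastic gradients (contributing the $\tfrac{L}{n}\gamma^2\hat\sigma^2$ term) and the mismatch between gradients evaluated at the local models $\xx_i^{(t)}$ versus at $\bar\xx^{(t)}$ (contributing the $\Xi_t^2$ term). Throughout I write $\gamma$ for the stepsize denoted $\eta$ in \eqref{eq:d-sgd}.

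First I would use that every sample of $\mW$ is doubly stochastic: right-multiplying the \eqref{eq:d-sgd} recursion by $\tfrac1n\1$ and using $\mW\tfrac1n\1 = \tfrac1n\1$ collapses it to $\bar\xx^{(t+1)} = \bar\xx^{(t)} - \gamma\gg^{(t)}$, where $\gg^{(t)} := \tfrac1n\sum_{j=1}^n \nabla F_j(\xx_j^{(t)},\xi_j^{(t)})$ has conditional mean $\bar\gg^{(t)} := \tfrac1n\sum_{j=1}^n \nabla f_j(\xx_j^{(t)})$. Since $f=\tfrac1n\sum_i f_i$ is $L$-smooth by Assumption~\ref{a:lsmooth_nc}, the smooth descent lemma gives
\[
f(\bar\xx^{(t+1)}) \le f(\bar\xx^{(t)}) - \gamma\lin{\nabla f(\bar\xx^{(t)}),\,\gg^{(t)}} + \tfrac{L\gamma^2}{2}\norm{\gg^{(t)}}^2 .
\]
Taking $\EE{t+1}{\cdot}$ over the mutually independent samples $\xi_1^{(t)},\dots,\xi_n^{(t)}$ replaces $\gg^{(t)}$ by $\bar\gg^{(t)}$ in the linear term, while a bias--variance split of the quadratic term together with the noise bound of Assumption~\ref{a:noise} yields $\EE{t+1}{\norm{\gg^{(t)}}^2} \le \norm{\bar\gg^{(t)}}^2 + \tfrac{\hat\sigma^2}{n}$.

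The key step connects $\bar\gg^{(t)}$ to the target gradient $\nabla f(\bar\xx^{(t)})$. Applying the polarization identity $-\lin{a,b} = -\tfrac12\norm{a}^2 - \tfrac12\norm{b}^2 + \tfrac12\norm{a-b}^2$ with $a=\nabla f(\bar\xx^{(t)})$, $b=\bar\gg^{(t)}$ turns the cross term into $-\tfrac{\gamma}{2}\norm{\nabla f(\bar\xx^{(t)})}^2 - \tfrac{\gamma}{2}\norm{\bar\gg^{(t)}}^2 + \tfrac{\gamma}{2}\norm{\nabla f(\bar\xx^{(t)}) - \bar\gg^{(t)}}^2$, and the last piece is exactly where the consensus distance enters: by Jensen's inequality and $L$-smoothness of each $f_j$,
\[
\norm{\nabla f(\bar\xx^{(t)}) - \bar\gg^{(t)}}^2 = \Big\|\tfrac1n\textstyle\sum_j \big(\nabla f_j(\bar\xx^{(t)}) - \nabla f_j(\xx_j^{(t)})\big)\Big\|^2 \le \tfrac1n\textstyle\sum_j L^2\norm{\bar\xx^{(t)} - \xx_j^{(t)}}^2 = L^2\Xi_t^2 .
\]
Collecting the pieces leaves a spare term $-\tfrac{\gamma}{2}(1-L\gamma)\norm{\bar\gg^{(t)}}^2$, which is non-positive under $\gamma < \tfrac1{4L}$ and is dropped; the same stepsize restriction lets me tame the residual $\tfrac{L\gamma^2}{2}\norm{\bar\gg^{(t)}}^2$ coming from the variance bound while keeping a constant fraction of $-\tfrac{\gamma}{2}\norm{\nabla f(\bar\xx^{(t)})}^2$ in reserve, producing the stated $-\tfrac{\gamma}{4}\norm{\nabla f(\bar\xx^{(t)})}^2$, the $\Xi_t^2$ contribution (up to absorbing $L$-factors into the constant), and $\tfrac{L}{n}\gamma^2\hat\sigma^2$.

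I expect the only real subtlety to be the gradient-evaluation mismatch just described: because each $\nabla F_j$ is taken at the local model $\xx_j^{(t)}$ rather than at $\bar\xx^{(t)}$, both the descent and the noise terms must be ``transported'' to $\bar\xx^{(t)}$, and it is this transport --- controlled by $L$-smoothness and the definition $\Xi_t^2 = \tfrac1n\sum_i\norm{\bar\xx^{(t)}-\xx_i^{(t)}}^2$ --- that generates the $\Xi_t^2$ term. (In the fully general noise model of \citet{koloskova2020unified} an additional $\norm{\nabla f(\bar\xx^{(t)})}^2$-proportional term appears here and is folded into $\hat\sigma^2$; under Assumption~\ref{a:noise} as stated one simply has $\hat\sigma^2 = \sigma^2$.) Everything else --- double stochasticity of $\mW$, the smooth descent lemma, independence of the per-node noise, and one use each of polarization and Jensen --- is routine.
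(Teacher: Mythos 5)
Your proof is correct and is essentially the standard argument from \citet{koloskova2020unified}; the paper itself does not prove this lemma but imports it, so there is no in-paper proof to compare against. All the steps check out: double stochasticity collapses \eqref{eq:d-sgd} to $\bar\xx^{(t+1)}=\bar\xx^{(t)}-\gamma\gg^{(t)}$, the smoothness descent lemma plus the bias--variance split gives the $\tfrac{L}{n}\gamma^2\sigma^2$ term, and polarization plus Jensen plus per-component smoothness gives the consensus term.

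One caveat on your closing remark about ``absorbing $L$-factors into the constant'': your derivation correctly produces a consensus term of order $\gamma L^2\Xi_t^2$, and that factor of $L^2$ is dimensional and cannot be absorbed. The lemma as printed ($\gamma\Xi_t^2$ with no $L^2$) is in fact a typo in the paper: the downstream use in the proof of Proposition~\ref{rem:ccd} only produces $-\tfrac{\gamma}{8}\norm{\nabla f(\bar\xx^{(t)})}^2+\tfrac{2L}{n}\gamma^2\sigma^2$ from the CCD bound $\Gamma_t^2=\tfrac{1}{Ln}\gamma\sigma^2+\tfrac{1}{8L^2}\norm{\nabla f(\bar\xx^{(t)})}^2$ if the consensus term is $\gamma L^2\Xi_t^2$, exactly as you derived it. So your version is the correct one; you should state it with the explicit $L^2$ rather than waving it into a constant.
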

\begin{proof}
	By replacing $\Xi_t$ in the above inequality with~\eqref{eq:want}, we simplify:
	\begin{align*}
		\EE{t + 1}{f(\bar{\xx}^{(t + 1)})} & \leq f(\bar{\xx}^{(t)}) - \frac{\eta}{8}\norm{\nabla f(\bar{\xx}^{(t)})}_2^2 + \frac{2L}{n} \gamma^2 \hat{\sigma}^2.
	\end{align*}
	This inequality now matches (up to differences in the constants) the standard recursion that one can derive for C-SGD~\citep{dekel2012optimal, bottou2018optimization, stich2019unified, Stich2019:error-feedback}.
\end{proof}

\subsection{Proof of Proposition~\ref{rem:tcd}, typical consensus distance}\label{apx:tcd}
We need an auxiliary (but standard) lemma, to estimate the change of the consensus distance between iterations.
\begin{lemma}[Consensus distance] It holds
	\label{lemma:distance}
	\begin{align*}
		\Xi_{t+1}^2 \leq (1-p/2) \Xi_t^2 +  \frac{3(1-p)\gamma^2}{p} \left( \phi_t^2 + p \sigma^2\right) \,.
	\end{align*}
\end{lemma}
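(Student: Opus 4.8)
The plan is to start from the matrix form of the \eqref{eq:d-sgd} update and expand the consensus distance at time $t+1$. Writing $\mG^{(t)} := [\nabla F_1(\xx_1^{(t)},\xi_1^{(t)}), \dots, \nabla F_n(\xx_n^{(t)},\xi_n^{(t)})]$, the update is $\mX^{(t+1)} = (\mX^{(t)} - \gamma \mG^{(t)})\mW$, so that $\mX^{(t+1)} - \bar\mX^{(t+1)} = (\mX^{(t)} - \gamma \mG^{(t)})\mW (\mI - \tfrac{1}{n}\1\1^\top) = (\mX^{(t)} - \bar\mX^{(t)} - \gamma(\mG^{(t)} - \bar\mG^{(t)}))\mW(\mI - \tfrac1n\1\1^\top)$, using $\bar\mX^{(t)}\mW = \bar\mX^{(t)}$. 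First I would take $\E_t$ (conditioning on iteration $t$), split the stochastic gradient into its mean $\bar\nabla := [\nabla f_1(\xx_1^{(t)}),\dots,\nabla f_n(\xx_n^{(t)})]$ plus zero-mean noise, and use Assumption~\ref{a:W} together with the noise bound in Assumption~\ref{a:noise} ($\E_t\norm{\mG^{(t)} - \bar\nabla}_F^2 \le n\sigma^2$). Assumption~\ref{a:W} contributes the contraction factor $(1-p)$ on the deterministic part, while the noise term (being mean-zero and independent of the projection structure) picks up a factor at most $(1-p)$ as well.

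Next I would apply the standard ``decoupling'' inequality $\norm{\aa + \bb}^2 \le (1+\alpha)\norm{\aa}^2 + (1+\alpha^{-1})\norm{\bb}^2$ with the choice $\alpha = p/2$ to separate the already-contracted consensus term from the gradient-perturbation term. Concretely, after applying Assumption~\ref{a:W} one gets a bound of the form $(1-p)\big(\norm{\mX^{(t)}-\bar\mX^{(t)}}_F + \gamma\norm{\bar\nabla - \text{(its row-mean)}}_F\big)^2 + (1-p)\gamma^2 n\sigma^2$ in expectation; expanding the square with $\alpha=p/2$ turns $(1-p)(1+p/2) \le 1 - p/2$ (valid for $p\in[0,1]$, since $(1-p)(1+p/2) = 1 - p/2 - p^2/2 \le 1-p/2$) on the consensus part, and produces the coefficient $(1-p)(1 + 2/p)\gamma^2 \le \tfrac{3(1-p)\gamma^2}{p}$ on the gradient part (using $1 + 2/p \le 3/p$ for $p \le 1$). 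Finally I would bound $\norm{\bar\nabla - \text{row-mean}}_F^2 \le \norm{\bar\nabla}_F^2 = n\phi_t^2$ by definition of $\phi_t^2$, which absorbs the centering harmlessly. Combining the two error contributions $\tfrac{3(1-p)\gamma^2}{p}\phi_t^2$ and $(1-p)\gamma^2 n\sigma^2 \le \tfrac{3(1-p)\gamma^2}{p}\cdot p\sigma^2$ (the latter since $3(1-p)/p \cdot p = 3(1-p) \ge 1-p \ge \dots$ — I would check the precise constant, it works out with the stated $3$) gives the claimed bound after dividing by $n$.

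The main obstacle I anticipate is bookkeeping the constants so that both error terms fit under the single prefactor $\tfrac{3(1-p)\gamma^2}{p}$ with the $(\phi_t^2 + p\sigma^2)$ grouping — in particular making sure the noise term's factor is exactly $(1-p)$ and not something weaker, which requires that the zero-mean gradient noise, after being hit by $\mW(\mI - \tfrac1n\1\1^\top)$, still satisfies the Assumption~\ref{a:W} contraction in expectation. This is true because the noise is independent across coordinates of the bound and mean-zero, so $\E_t\norm{\gamma(\mG^{(t)} - \bar\nabla)\mW(\mI-\tfrac1n\1\1^\top)}_F^2 \le (1-p)\gamma^2 \E_t\norm{\mG^{(t)}-\bar\nabla}_F^2 \le (1-p)\gamma^2 n\sigma^2$ directly from Assumption~\ref{a:W} applied to the (random) matrix $\gamma(\mG^{(t)}-\bar\nabla)$. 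Everything else is routine: Young's inequality with a carefully chosen parameter, the definition of $\phi_t$, and the elementary bound $(1-p)(1+p/2)\le 1-p/2$. This is a standard lemma in the decentralized-optimization literature (cf.\ \citet{koloskova2020unified}), so I would expect no conceptual difficulty, only the constant-chasing.
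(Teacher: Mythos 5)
Your proposal is correct and follows essentially the same route as the paper's proof: write the update in matrix form, insert the projection $\bigl(\mI - \tfrac{1}{n}\1\1^\top\bigr)$, apply the mixing contraction of Assumption~\ref{a:W} to gain the factor $(1-p)$, split the stochastic gradient into its mean and zero-mean noise (so the cross term vanishes in expectation and the noise contributes $(1-p)\gamma^2 n\sigma^2$), and then apply Young's inequality with $\alpha = p/2$ to obtain $(1-p)(1+p/2)\le 1-p/2$ on the consensus term and $(1-p)(1+2/p)\le 3(1-p)/p$ on the gradient term. Your constant bookkeeping, including absorbing the noise term under the common prefactor via $(1-p)\gamma^2\sigma^2 \le \tfrac{3(1-p)\gamma^2}{p}\cdot p\sigma^2$, matches the paper exactly.
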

We give the proof of this statement shortly below. First, let us consider how this lemma allows the proof of the claim. For this, we first consider a particular special case, and assume $\phi_t \leq \phi$, for a constant $\phi$. In this case, we can easily verify by unrolling the recursion:
\begin{align*}
	\Xi_t^2 \leq \sum_{i=0}^{t-1} (1-p/2)^{i} \frac{3(1-p)\gamma^2 (\phi^2 + p \sigma^2)}{p} \leq 6 (1-p) \gamma^2 \left( \frac{\phi^2}{p^2} + \frac{\sigma^2}{p} \right)\,.
\end{align*}
Now, for the claim in the main text, we use assumption that $\phi_t$ are changing slowly, that is, not decreasing faster than exponentially: $\phi_{t}^2 \leq (1+p/4) \phi_{t+1}^2$.
With this assumption, and observing $(1-p/2)^i(1+p/4)^i \leq (1-p/4)^i$, we can unroll as before
\begin{align*}
	\Xi_t^2 & \leq \sum_{i=0}^{t-1} (1-p/2)^{i} \frac{3(1-p)\gamma^2 (\phi_{t-i-1}^2 + p \sigma^2)}{p} \\
	        & \leq \sum_{i=0}^{t-1} (1-p/4)^{i} \frac{3(1-p)\gamma^2 (\phi_{t-1}^2 + p \sigma^2)}{p}
	\leq 12 (1-p) \gamma^2 \left( \frac{\phi_{t-1}^2}{p^2} + \frac{\sigma^2}{p} \right)\,.
\end{align*}

\begin{proof}[Proof of Lemma~\ref{lemma:distance}]
	We use the following matrix notation here
	\begin{align*}
		\mX^{(t)}                      & := \left[ \xx_1^{(t)},\dots, \xx_n^{(t)}\right] \in \R^{d\times n}\,,                                     \\
		\bar{\mX}^{(t)}                & := \left[ \bar{\xx}^{(t)},\dots, \bar{\xx}^{(t)}\right] = \mX^{(t)} \frac{1}{n} \1\1^\top\,,              \\
		\nabla F(\mX^{(t)}, \xi^{(t)}) & := \left[\nabla F_1(\xx_{1}^{(t)}, \xi_1^{(t)}), \dots,  \nabla F_n(\xx_{n}^{(t)}, \xi_n^{(t)})\right]\,, \\
		\nabla f(\mX^{(t)})            & := \left[\nabla f_1(\xx_{1}^{(t)}), \dots,  \nabla f_n(\xx_{n}^{(t)})\right]\,.
	\end{align*}
	As a reminder, $\Xi_t^2 := \frac{1}{n}\sum_{i=1}^n \norm{\bar \xx^{(t)} - \xx_i^{(t)}}^2$, and $\phi_t^2 := \frac{1}{n}\sum_{i=1}^n \norm{\nabla f_i(\xx_i^{(t)})}^2$.
	\begin{align*}
		n \Xi_{t + 1}^2 & = \norm{\bar{\mX}^{(t +1)} - \mX^{(t + 1)}}_F^2 = \norm{(\mX^{(t)} - \gamma \nabla F (\mX^{(t)}, \xi^{(t)}))\left(\frac{1}{n} \1\1^\top - \mW\right) }_F^2                                      \\
		                & = \norm{(\mX^{(t)} - \gamma \nabla F (\mX^{(t)}, \xi^{(t)}))\left(\frac{1}{n} \1\1^\top - \mI \right) \left(\frac{1}{n} \1\1^\top - \mW\right) }_F^2                                            \\
		                & \leq (1 - p)\norm{(\mX^{(t)} - \gamma \nabla F (\mX^{(t)}, \xi^{(t)}))\left(\frac{1}{n} \1\1^\top - \mI \right)}_F^2                                                                            \\
		                & \leq (1 - p)\norm{(\mX^{(t)} - \gamma \nabla f (\mX^{(t)}))\left(\frac{1}{n} \1\1^\top - \mI \right)}_F^2  + (1 - p) \gamma^2 \norm{\nabla f (\mX^{(t)}) - \nabla F (\mX^{(t)}, \xi^{(t)})}_F^2 \\
		                & \leq (1 - p)(1 + \alpha) \norm{\mX^{(t)}\left(\frac{1}{n} \1\1^\top - \mI \right)}_F^2 + (1 - p) (1 + \alpha^{-1}) \gamma^2 \norm{ \nabla f (\mX^{(t)})}_F^2 + (1 - p) \gamma^2 \sigma^2 n      \\
		                & \stackrel{\alpha = \frac{p}{2}}{\leq} \left(1 - \frac{p}{2}\right) n \Xi_t^2 + \frac{3 (1 - p)}{p} \gamma^2 \norm{ \nabla f (\mX^{(t)})}_F^2 + (1 - p) \gamma^2 \sigma^2 n\,. \qedhere
	\end{align*}
\end{proof}

\subsection{Sufficient bounds to meet critical consensus distance}~\label{subsec:sufficient_bounds}
In this section, we show that the claimed bounds in Section~\ref{sec:control} are sufficient conditions to reach the CCD.

According to Proposition~\ref{rem:tcd}, there exists an absolute constant $C$, (w.l.o.g. $C \geq 2$) such that
\begin{align*}
	\Xi_t^2 \leq C (1-p) \gamma^2 \left(\frac{\phi_t^2}{p^2} + \frac{\sigma^2}{p}\right) \,.
\end{align*}
By smoothness,
\begin{align*}
	\phi_t^2 & = \frac{1}{n}\sum_{i=1}^n \norm{\nabla f_i(\xx_i^{(t)})}^2                                                                                                                                                                                \\
	         & \leq  \frac{3}{n}\sum_{i=1}^n \norm{\nabla f_i(\xx_i^{(t)}) - \nabla f(\xx_i^{(t)}) }^2 + \frac{3}{n}\sum_{i=1}^n \norm{\nabla f(\xx_i^{(t)}) - \nabla f(\bar \xx^{(t)}) }^2 + \frac{3}{n}\sum_{i=1}^n \norm{\nabla f(\bar \xx^{(t)}) }^2 \\
	         & \leq 3 \zeta^2  + 3 L^2 \Xi_t^2 + 3 \norm{\nabla f(\bar \xx^{(t)}) }^2\,.
\end{align*}
Supposing $(1-p)\gamma^2 \leq \frac{p^2}{6 C L^2}$, we can therefore estimate
\begin{align*}
	\Xi_t^2 & \leq C (1-p) \gamma^2 \left(\frac{3 \norm{\nabla f(\bar \xx^{(t)}) }^2 + 3L^2 \Xi_t^2 + 3 \zeta^2}{p^2} + \frac{\sigma^2}{p}\right)      \\
	        & \leq 3C (1-p) \gamma^2 \left(\frac{ \norm{\nabla f(\bar \xx^{(t)}) }^2 + \zeta^2}{p^2} + \frac{\sigma^2}{p}\right) + \frac{1}{2} \Xi_t^2 \,,
\end{align*}
and hence
\begin{align}
	\Xi_t^2 & \leq 6C (1-p) \gamma^2 \left(\frac{ \norm{\nabla f(\bar \xx^{(t)}) }^2}{p^2} + \frac{\zeta^2}{p^2}+ \frac{\sigma^2}{p}\right) \,. \label{eq:last}
\end{align}
The claimed bounds can now easily be verified, by plugging the provided values into~\eqref{eq:last}. For simplicity in the main text we assume that $\zeta=0$ (we are in the datacenter training scenario).
\paragraph{Small $\gamma$.} By choosing $\gamma \leq \frac{p}{4 n L C}$, we check that our previous constraint $\gamma^2 \stackrel{C\geq 2}{\leq} \frac{p^2}{6 C L^2}$ is satisfied, and
\begin{align*}
	\eqref{eq:last} \leq  \frac{ \norm{\nabla f(\bar \xx^{(t)}) }^2}{ 4 n^2 C L^2} +  \frac{\gamma  \sigma^2}{nL} \stackrel{C\geq 2}{\leq} \eqref{eq:want} \,.
\end{align*}

\paragraph{Small $p$.} By choosing $1-p \leq \frac{1}{5C(1+\gamma L n)}$, we note that $p \stackrel{C\geq 2}{\geq} \frac{9}{10}$.
Moreover, our previous constraint $(1-p)\gamma^2 \leq \frac{\gamma^2}{5 C} \leq \frac{p^2}{6L^2 C}$ is satisfied (note that $\gamma \leq \frac{1}{4L}$ throughout).
Hence
\begin{align*}
	\eqref{eq:last} \leq \frac{4 \gamma^2}{5(1+\gamma Ln)} \left( \frac{  100 \norm{  \nabla f(\bar \xx^{(t)}) }^2 }{ 81 } + \frac{10 \sigma^2}{9}\right) \stackrel{\gamma \leq 1/(4L)}{\leq} %
	\eqref{eq:want}
\end{align*}

In the above calculations we for the simplicity assumed that $\zeta = 0$. For the general non-iid data case when $\zeta > 0$ we can calculate similar bounds on $\gamma$, $p$. These bounds would have similar dependence on parameters, and would be stricter. Indeed, the typical consensus distance would be also influenced by non-iidness of the data $\zeta$ and it is therefore harder to satisfy the CCD condition.

\subsection{Proof of Lemma~\ref{lem:rep_gossip}, repeated gossip}~\label{subsec:repeated_gossip}
By the assumption stated in the lemma, it holds for each component $\mW_i$ of the product $\mW=\mW_k\dots \mW_1$, $i \in [1, k]$ that
\begin{align*}
	\textstyle
	\E_{\mW_i} \norm{ \mX \mW_i - \bar{\mX} }_F^2  \leq (1 - p) \norm{\mX - \bar{\mX}}_F^2
	, \forall \mX \in \R^{d \times n}.
\end{align*}
Now lets estimate the parameter $p_{\mW}$. Using that $\mW_i$ are independent
\begin{align*}
	\textstyle
	\E_{\mW} \norm{ \mX \mW - \bar{\mX} }_F^2 & = \E_{\mW_1\dots \mW_k} \norm{ \mX \mW_k\dots \mW_1 - \bar{\mX} }_F^2 =  \\
	                                          & {=} \E_{\mW_2\dots \mW_k}  \E_{\mW_1}\norm{ \mY \mW_1 - \bar{\mY} }_F^2,
\end{align*}
where we defined $\mY = \mX \mW_k\dots\mW_2$ and used that $\mW_i \frac{1}{n}\1\1^\top = \frac{1}{n}\1\1^\top$, so
\begin{align*}
	\bar \mY = \mX \mW_k\dots\mW_2 \frac{1}{n}\1\1^\top = \mX \frac{1}{n}\1\1^\top = \bar \mX.
\end{align*}
Therefore,
\begin{align*}
	\textstyle
	\E_{\mW} \norm{ \mX \mW - \bar{\mX} }_F^2 & \leq (1 - p)\E_{\mW_2\dots \mW_k}  \norm{ \mX \mW_k\dots\mW_2 - \bar{\mX} }_F^2.
\end{align*}
Applying the same calculations to the rest, we conclude that $1 - p_\mW = (1 - p)^k$.

\clearpage

\section{Detailed Experimental Setup} \label{sec:detailed_exp_setup}
\paragraph{Comments on large-batch training.}
Coupling the quality loss issue of the decentralized training with the large-batch training difficulty
is non-trivial and is out of the scope of this paper.
Instead, we use reasonable local mini-batch sizes (together with the number of workers (denoted as $n$)),
as well as the well-developed large-batch training techniques~\citep{goyal2017accurate},
to avoid the difficulty of extreme large-batch training.

\paragraph{Multi-phase experiment justification.}
The averaged local gradient norm $\phi_{t}$ as well as the $L$-smoothness
of ResNet-20 on CIFAR-10 for a ring and a complete graph ($n\!=\!32$)
are shown in Figure~\ref{fig:experiment_setup_motivation_ring} and Figure~\ref{fig:experiment_setup_motivation_complete_graph} respectively.

The estimation procedure is analogous to that in~\cite{santurkar2018does,lin2020extrapolation}:
we take 8 additional steps long the direction of current update, each with $0.2$ of normal step size.
This is calculated at every 8 training steps.
The smoothness is evaluated as the maximum value of $L$ satisfying Assumption~\ref{a:lsmooth_nc}.

\begin{figure*}[!h]
	\centering
	\vspace{-1.em}
	\subfigure[The consensus distance for decentralized training.]{
		\includegraphics[width=.305\textwidth,]{figures/three_phase_motivation/consensus_distance_cifar_ring_32.pdf}~\label{subfig:uncontrolled_consensus_dist}
	}
	\hfill
	\subfigure[The averaged norm of the local gradients for decentralized training.]{
		\includegraphics[width=.305\textwidth,]{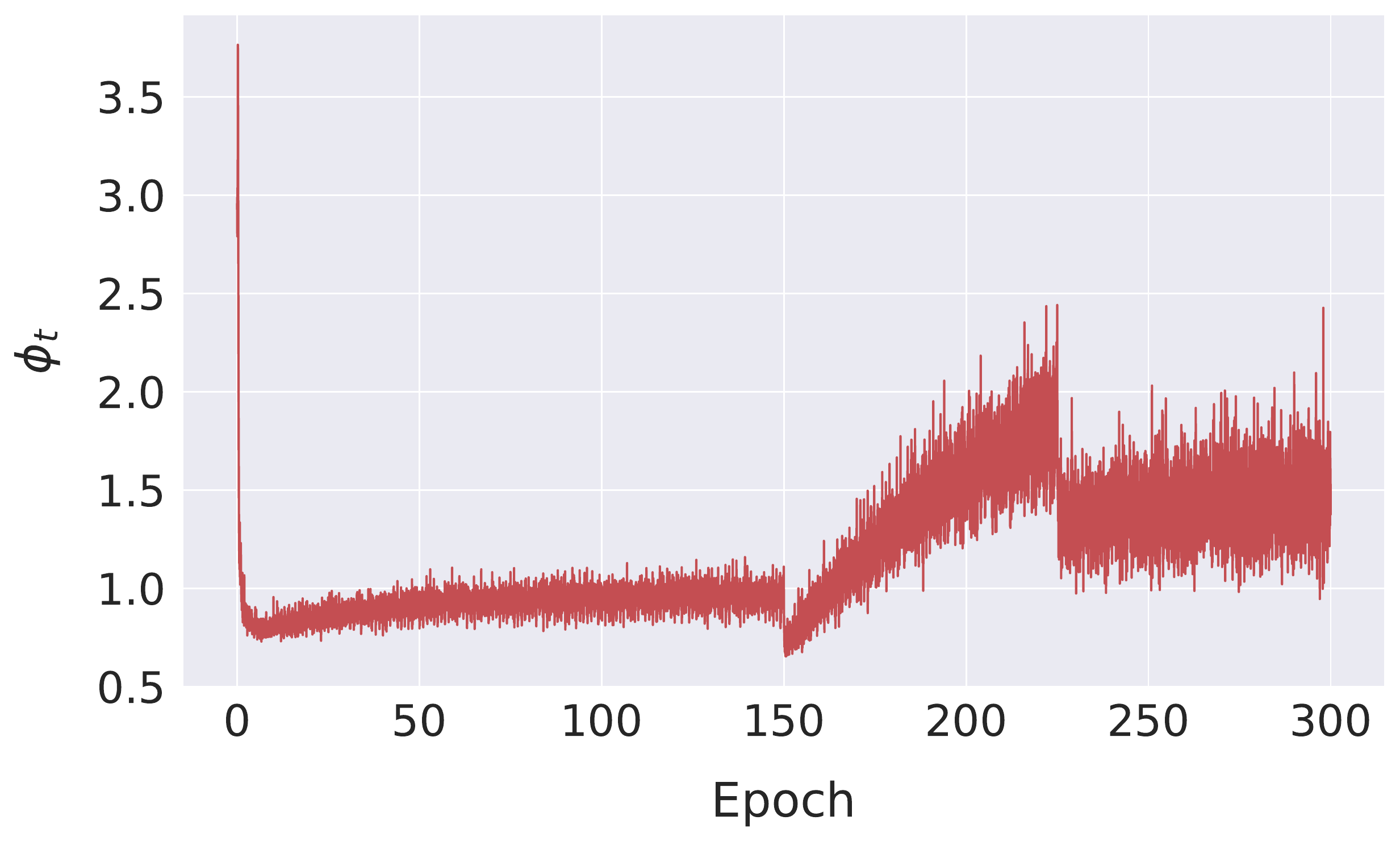}~\label{subfig:local_grad_norm}
	}
	\hfill
	\subfigure[The gradient Lipschitz curve for decentralized training.]{
		\includegraphics[width=.305\textwidth,]{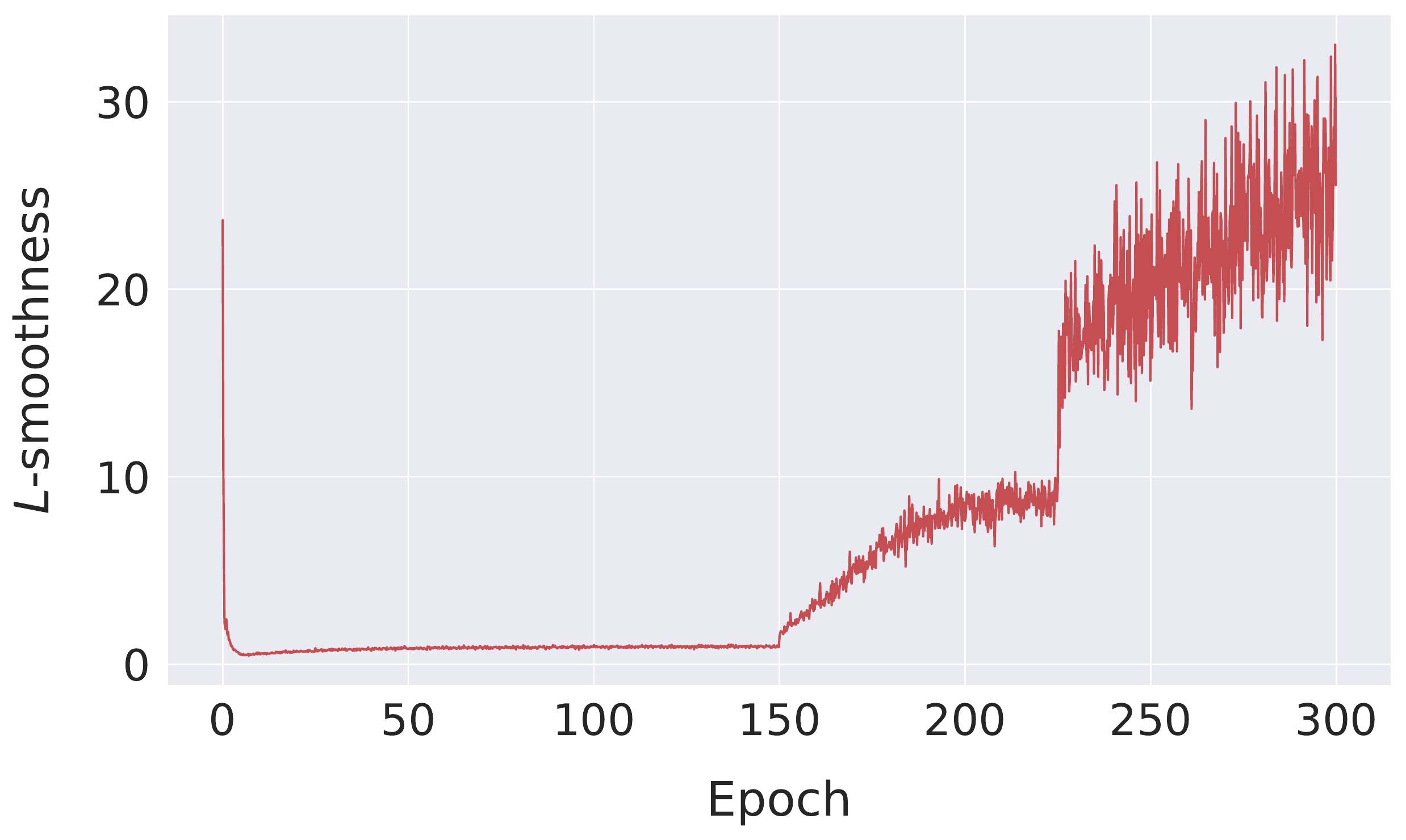}~\label{subfig:grad_lips}
	}
	\vspace{-1em}
	\caption{\small
		\textbf{Justification for our multiple-phase experimental design choice} (on ring graph).
		We run ResNet-20 on CIFAR-10 ($n\!=\!32$) with the ring topology.
		We can observe the three quantities most relevant to optimization all naturally form three phases, dictated by the learning rate schedule.
	}
	\vspace{-0.5em}
	\label{fig:experiment_setup_motivation_ring}
\end{figure*}

\begin{figure*}[!h]
	\centering
	\subfigure[The averaged norm of the local gradients for centralized training.]{
		\includegraphics[width=.405\textwidth,]{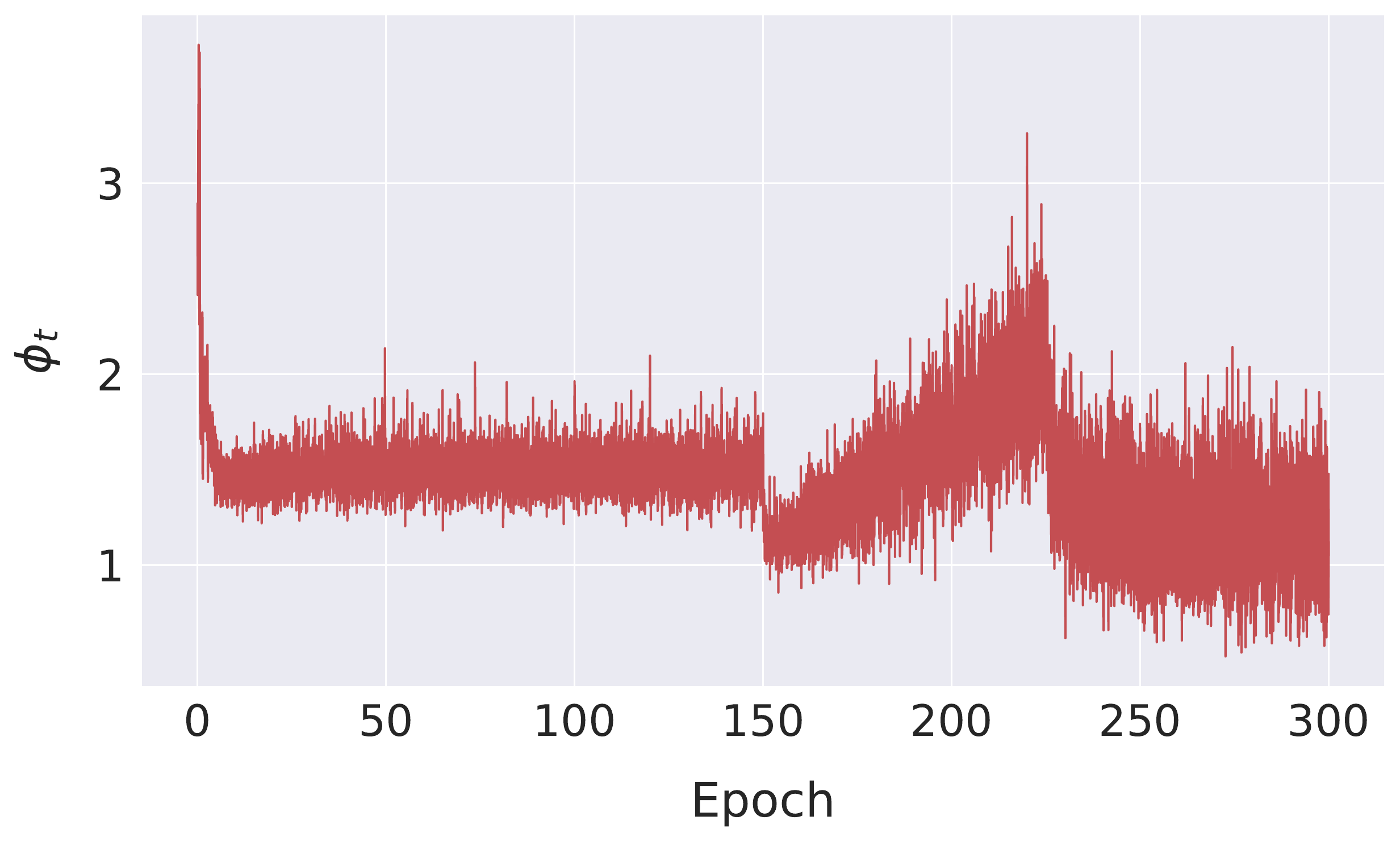}
	}
	\hfill
	\subfigure[The gradient Lipschitz curve for centralized training.]{
		\includegraphics[width=.405\textwidth,]{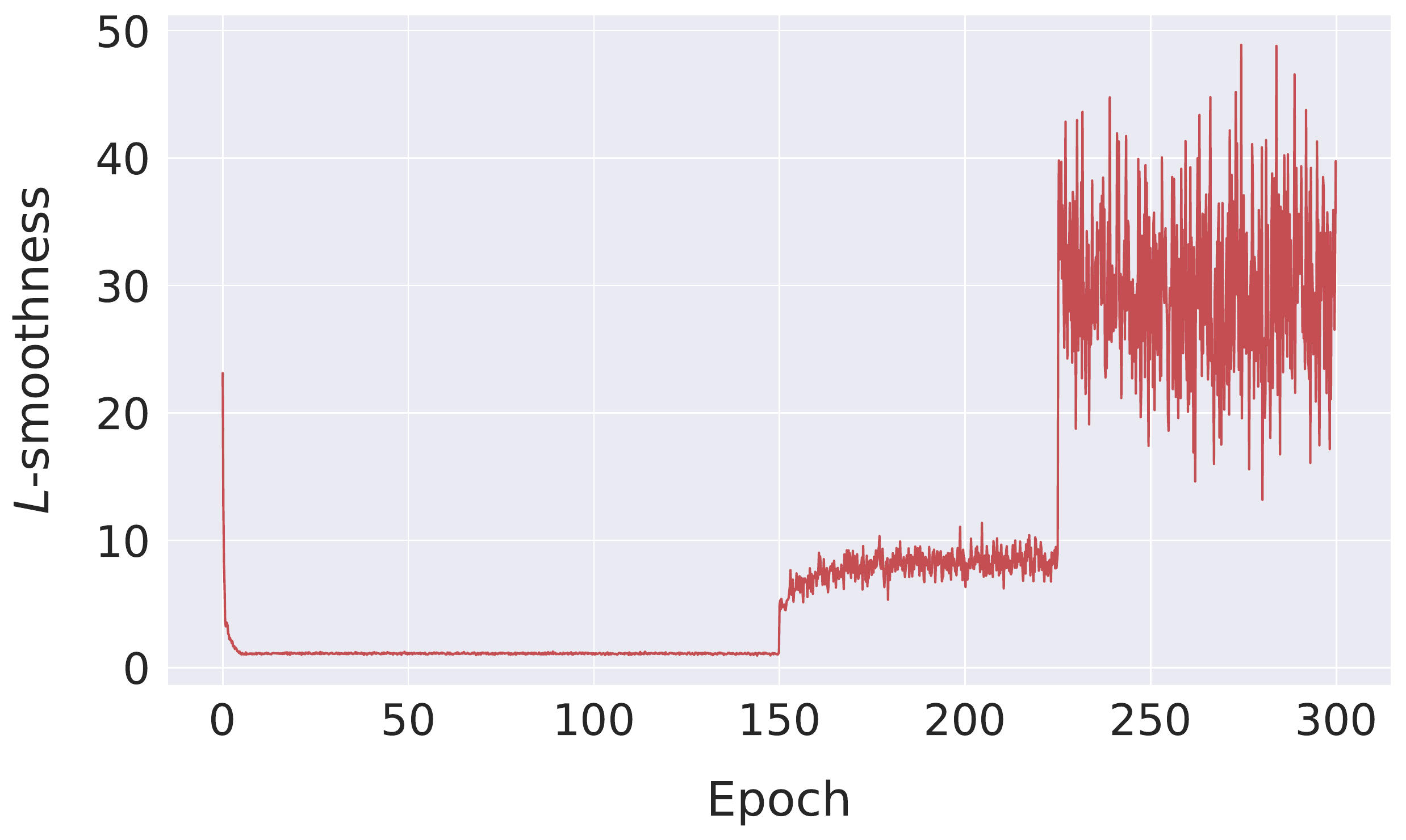}
	}
	\vspace{-1em}
	\caption{\small
		\textbf{Justification for our multiple-phase experimental design choice} (on complete graph).
		We run ResNet-20 on CIFAR-10 ($n\!=\!32$) with the complete topology.
		We can again observe the three quantities most relevant to optimization all naturally form three phases, dictated by the learning rate schedule.
	}
	\vspace{-0.5em}
	\label{fig:experiment_setup_motivation_complete_graph}
\end{figure*}

\section{Additional Results}
\subsection{Understanding on Consensus Averaging Problem} \label{subsec:consensus_averaging_problem}
We study a host of communication topologies: (1) deterministic topologies (ring, and complete graph)
and (2) undirected time-varying topologies (illustrated below).
\begin{itemize}[nosep,leftmargin=12pt]
	\item \textbf{Random matching}~\citep{boyd2006randomized}.
	      At each communication step, all nodes are divided into non-overlapping pairs randomly.
	      Each node connects all other nodes with equal probability.

	\item \textbf{Exponential graph}~\citep{assran2019stochastic}.
	      Each is assigned a rank from $0$ to $n-1$.
	      Each node $i$ periodically communicates with a list nodes with rank $i + 2^0, i + 2^1, \ldots, i + 2^{\lfloor \log_{2} (n-1) \rfloor}$.
	      In the one-peer-per-node experiments, each node only communicates to one node by cycling through its list.
	      The formed graph is undirected,
	      i.e., both transmission and reception take place in each communication.

	\item \textbf{Bipartite exponential graph}~\citep{lian2018asynchronous,assran2019stochastic}.
	      In order to avert deadlocks~\citep{lian2018asynchronous},
	      the node with an odd rank $i$ cycles through nodes with even ranks $i + 2^0 - 1, i + 2^1 - 1, \ldots, i + 2^{\lfloor \log_{2} (n-1) \rfloor} - 1$
	      by transmitting a message and waiting for a response.
	      while the nodes with even ranks only await messages and reply upon reception.
\end{itemize}

Table~\ref{tab:spectral_gap_node_degree} displays the spectral gap and node degree of studied topologies,
and Figure~\ref{fig:understanding_on_consensus_averaging_extended}
provides the convergence curves for different communication topologies on graph scales.
Figure~\ref{fig:understanding_on_consensus_averaging_spectral_gap} in addition
visualizes the spectral gap (in expectation) for different communication topologies.

\begin{table}[!h]
	\centering
	\caption{\small
		Spectral gap and node degree of studied topologies.
	}
	\vspace{-1em}
	\label{tab:spectral_gap_node_degree}
		\begin{tabular}{c|cc}
			\toprule
			Topologies                  & Spectral Gaps (in expectation) & Node degrees ($n$ nodes) \\ \midrule
			Complete                    & $1$                            & $n$                      \\
			Fixed ring                  & $\cO (\frac{1}{n^2})$          & $2$                      \\
			Exponential graph           & $\cO (1)$                      & $2$                      \\
			Bipartite exponential graph & $\cO (1)$                      & $1$                      \\
			Random matching             & $\cO (1)$                      & $1$                      \\
			\bottomrule
		\end{tabular}%
\end{table}

\begin{figure*}[!h]
	\centering
	\subfigure[$n=16$]{
		\includegraphics[width=.315\textwidth,]{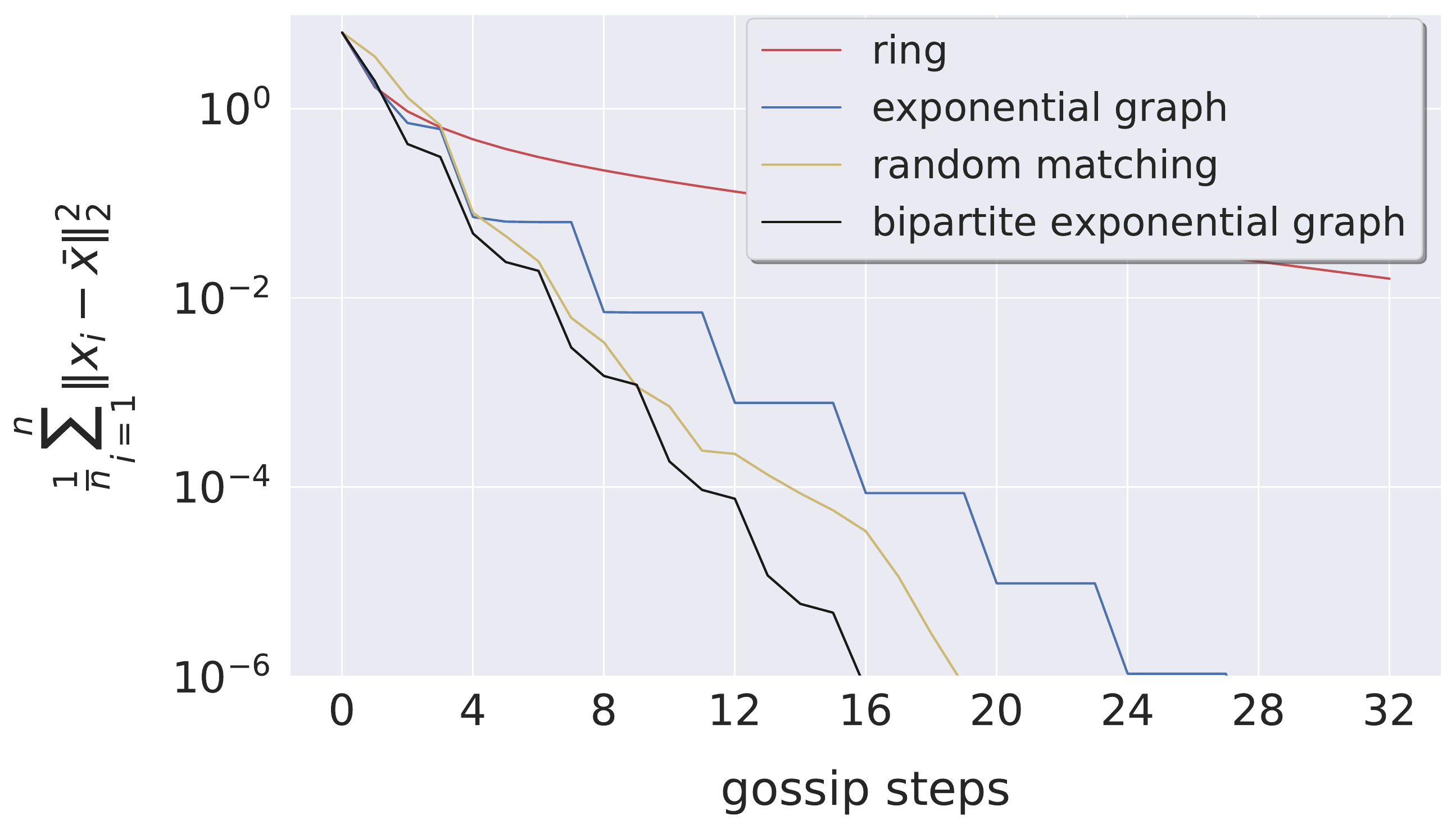}
	}
	\hfill
	\subfigure[$n=32$]{
		\includegraphics[width=.315\textwidth,]{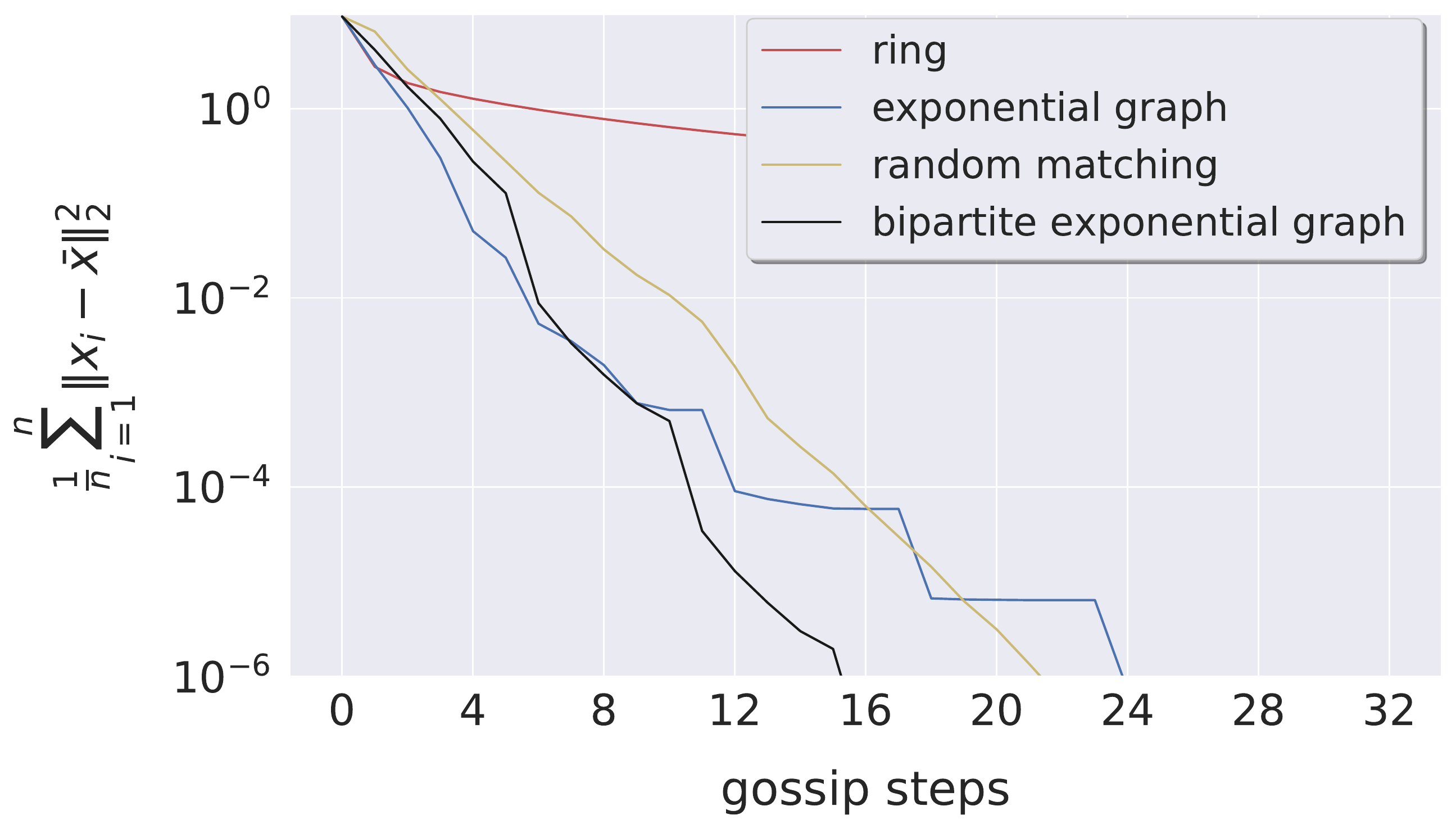}
	}
	\hfill
	\subfigure[$n=64$]{
		\includegraphics[width=.315\textwidth,]{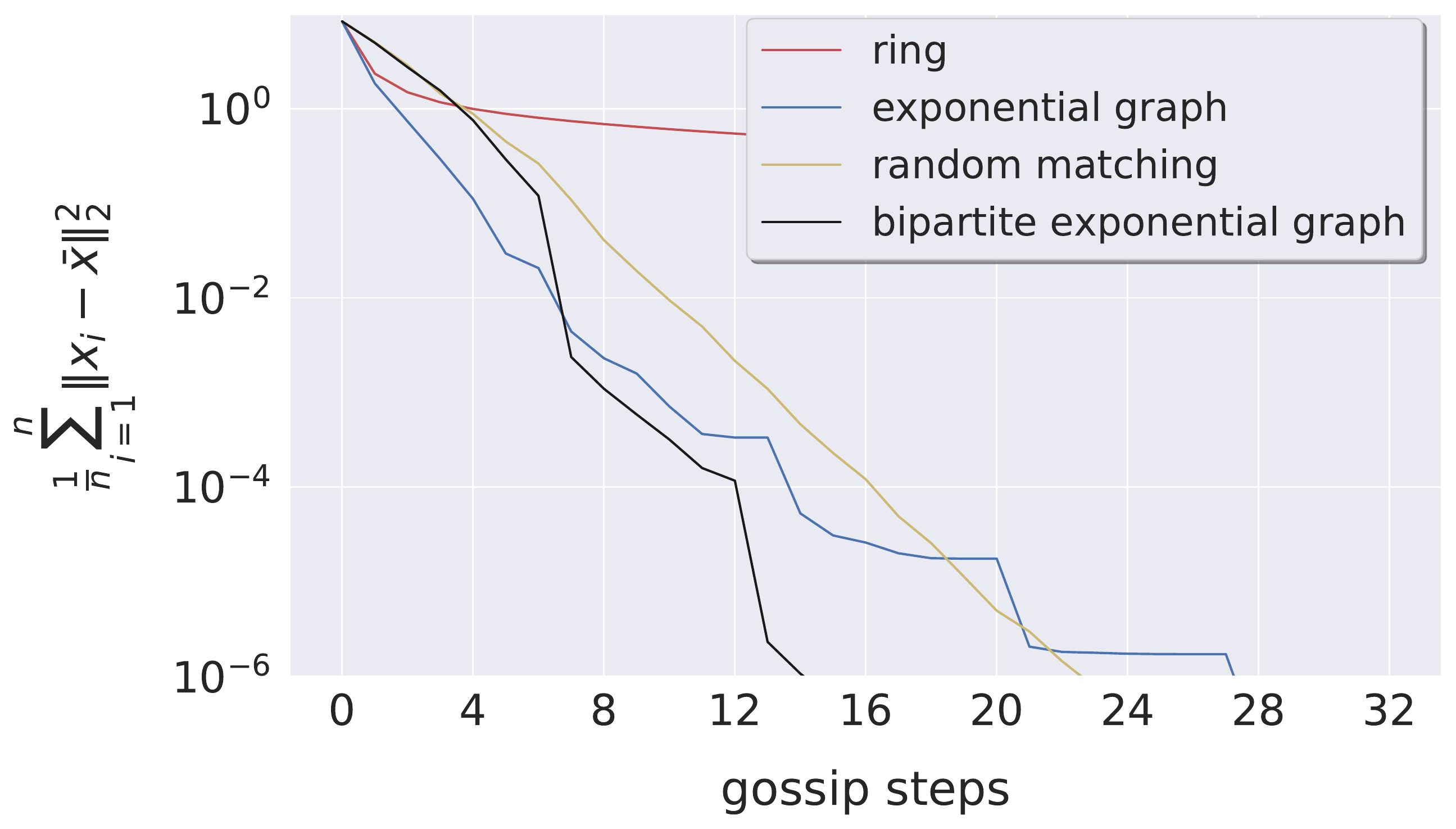}
	}
	\vspace{-1em}
	\caption{\small
		The convergence curves for the consensus averaging problem
		on different communication topologies and different scales (i.e., $n\!=\!16$, $n\!=\!64$ and $n\!=\!128$).
		This figure complements the Figure~\ref{fig:understanding_the_importance_of_communication_topologies} in the main text.
	}
	\label{fig:understanding_on_consensus_averaging_extended}
\end{figure*}

\begin{figure*}[!h]
	\centering
	\subfigure[$n=16$]{
		\includegraphics[width=.475\textwidth,]{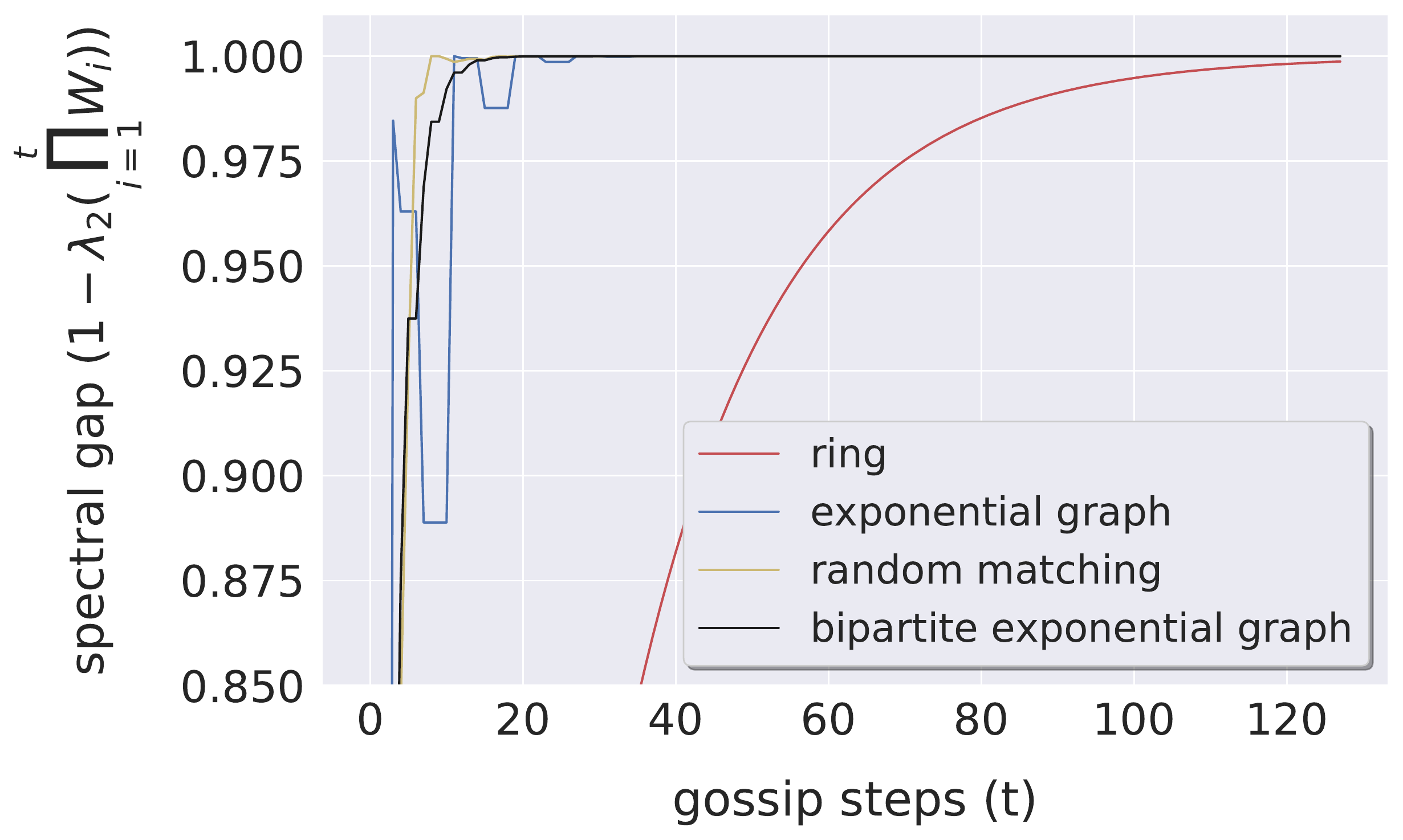}
	}
	\hfill
	\subfigure[$n=32$]{
		\includegraphics[width=.475\textwidth,]{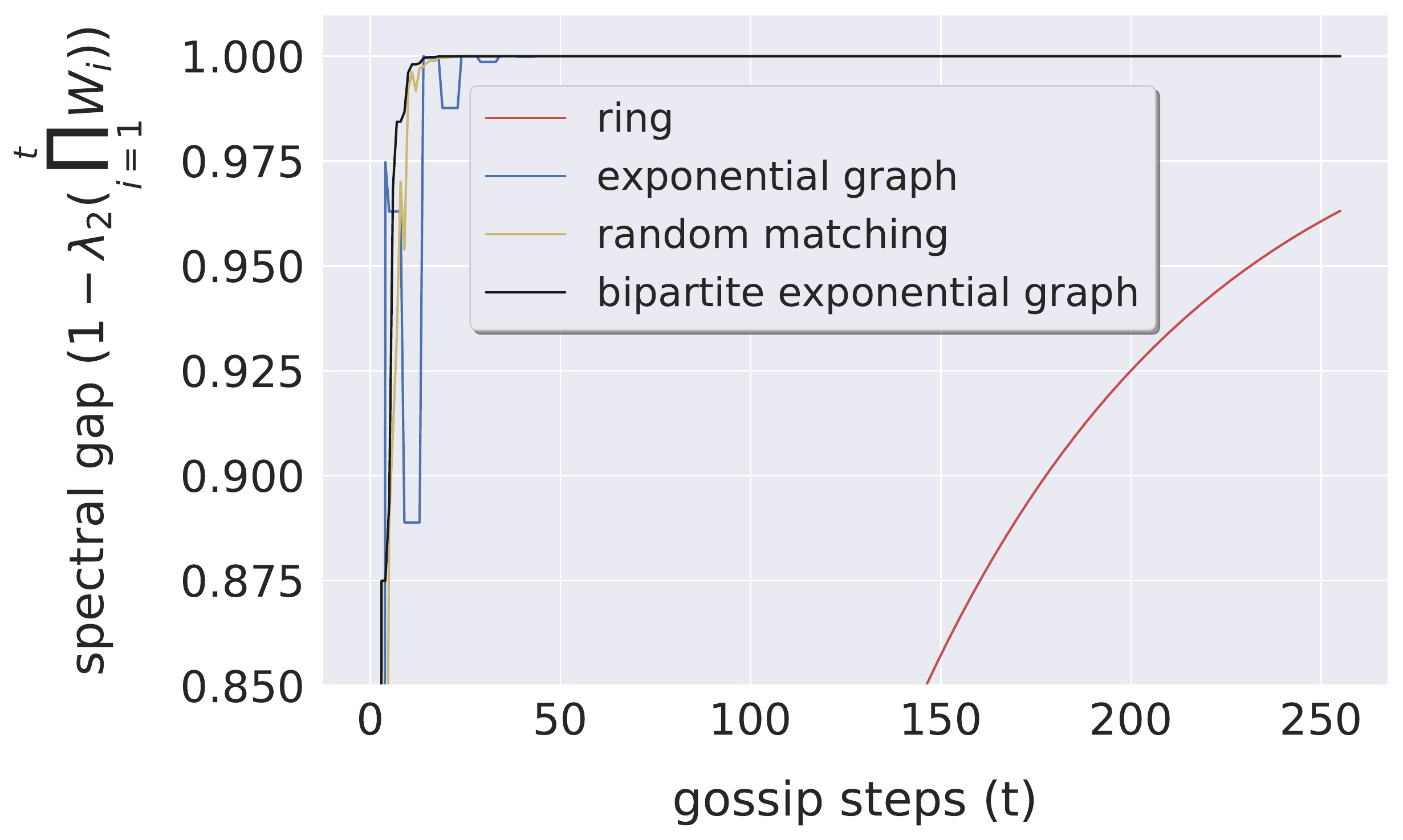}
	}
	\vspace{-1em}
	\caption{\small
		The spectral gap (in expectation) of different communication topologies on different graph scales.
	}
	\label{fig:understanding_on_consensus_averaging_spectral_gap}
\end{figure*}

Table~\ref{tab:understanding_the_impact_of_communication_topologies_on_dl_complete_results}
examines these topologies on a standard deep learning benchmark with different graph scales,
while Figure~\ref{fig:resnet20_cifar10_consensus_distance_vs_comm_rounds}
visualizes the required communication rounds (per gradient update step) for a range of consensus distance targets.

\begin{table*}[!h]
	\centering
		\begin{tabular}{cccccc}
			\toprule
			     & Complete         & Fixed ring       & Exponential graph & Bipartite exponential graph & Random matching  \\ \midrule
			n=16 & $92.91 \pm 0.12$ & $92.51 \pm 0.19$ & $92.63 \pm 0.30$  & $92.76 \pm 0.04$            & $92.65 \pm 0.15$ \\
			n=32 & $92.82 \pm 0.27$ & $91.93 \pm 0.05$ & $92.64 \pm 0.04$  & $92.29 \pm 0.15$            & $92.27 \pm 0.17$ \\
			\bottomrule
		\end{tabular}%
	\vspace{-0.5em}
	\caption{\small
		\textbf{The effect of communication topologies and scales}
		(ResNet-20 on CIFAR-10 with $n\!=\!32$).
		The test top-1 accuracies are over three seeds with fine-tuned learning rates.
	}
	\label{tab:understanding_the_impact_of_communication_topologies_on_dl_complete_results}
\end{table*}

\begin{figure*}[!h]
	\centering
	\includegraphics[width=.6\textwidth,]{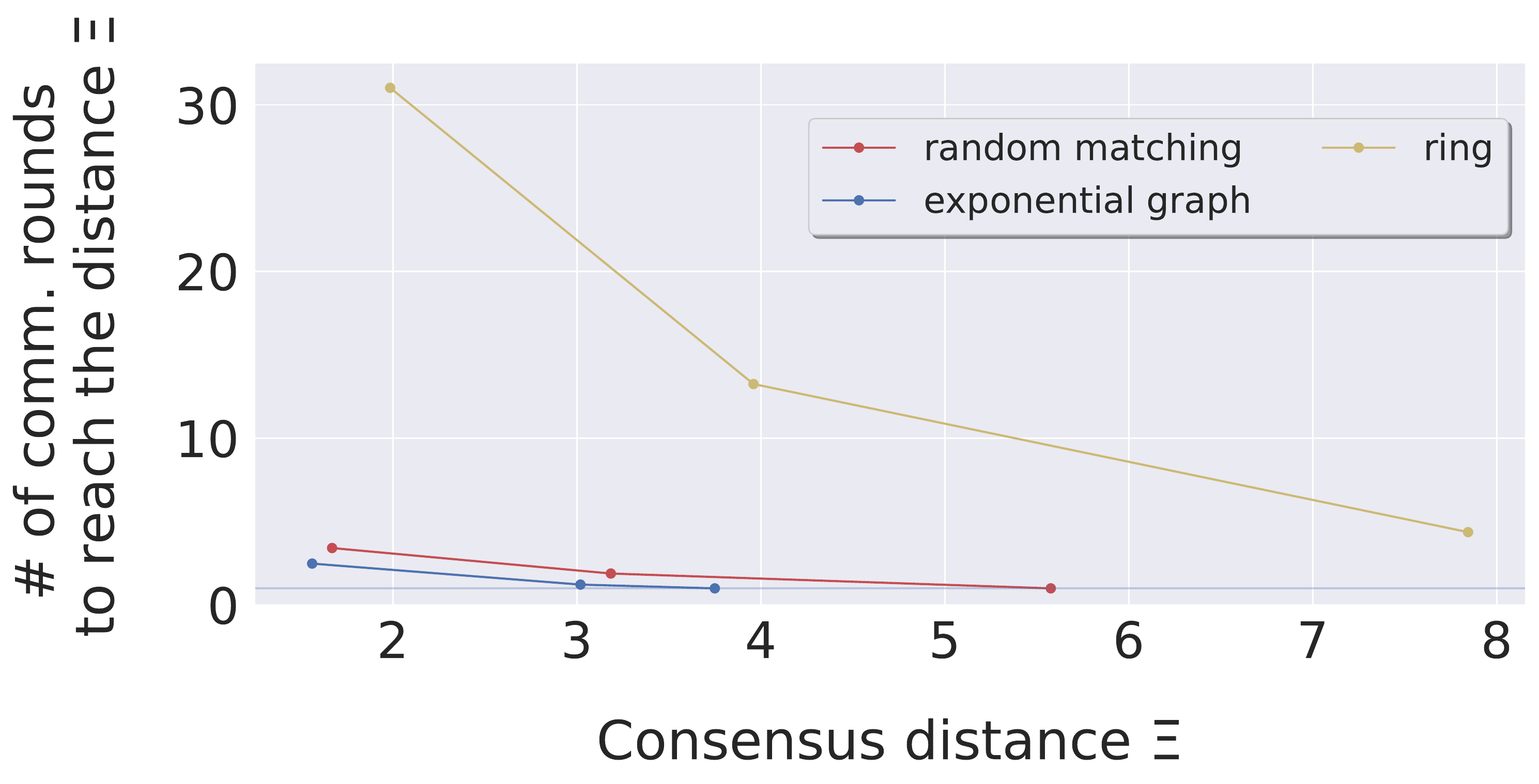}
	\vspace{-1em}
	\caption{\small
		\textbf{Target consensus distance v.s.\ the required communication rounds} (per gradient update step),
		for training ResNet-20 on CIFAR-10 with different communication topologies.
		We focus on the setup of dec-phase-1
		and vary the target consensus distance for different communication topologies.
		Due to the changing consensus distance over the training (of the interested phase-1),
		we consider the averaged consensus distance.
		The topologies of exponential graph and random matching,
		empower the capability of fast convergence in gossip averaging
		and thus only a few steps are required to reach the target consensus distance.
	}
	\label{fig:resnet20_cifar10_consensus_distance_vs_comm_rounds}
\end{figure*}

\subsection{Understanding the Decentralized Deep Learning Training for CV Tasks} \label{sec:more_understanding}
We use ring as our underlying decentralized communication topology in this subsection.

\paragraph{Elaborated results on consensus distance control.}
Table~\ref{tab:elaborated_resnet20_cifar10_different_consensus_distances_and_phases_by_constant_on_ring}
is the elaborated version of Table~\ref{tab:resnet20_cifar10_different_consensus_distances_and_phases_by_constant_on_ring}
with more evaluated consensus distances.

\begin{table}[!h]
	\centering
	\caption{\small
		\textbf{The impact of consensus distance of different phases on generalization performance} (test top-1 accuracy)
		of training ResNet-20 on CIFAR-10.
		The centralized baseline performance for $n\!=\!32$ and $n\!=\!64$
		are $92.82 \pm 0.27$ and $92.71 \pm 0.11$ respectively.
		The performance of decentralized training (all phases on a fixed ring and w/o consensus distance control)
		for $n\!=\!32$ and $n\!=\!64$ are $91.74 \pm 0.15$ and $89.87 \pm 0.12$ respectively.
	}
	\vspace{-1em}
	\label{tab:elaborated_resnet20_cifar10_different_consensus_distances_and_phases_by_constant_on_ring}
	\resizebox{1.\textwidth}{!}{%
		\huge
		\begin{tabular}{c|ccccc|cccc|ccc|ccc}
			\toprule
			& \multicolumn{5}{|c|}{dec-phase-1}            & \multicolumn{4}{c|}{dec-phase-2}                          & \multicolumn{3}{c|}{dec-phase-3}             & \multicolumn{3}{c}{dec-phase-2 + dec-phase-3}         \\ \midrule
			     & $\Ximax$         & 1/2 $\Ximax$     & 1/4 $\Ximax$     & 1/8 $\Ximax$     & 1/16 $\Ximax$    & $\Ximax$         & 1/2 $\Ximax$     & 1/4 $\Ximax$      & 1/40 $\Ximax$    & $\Ximax$         & 1/2 $\Ximax$     & 1/4 $\Ximax$     & $\Ximax$         & 1/2 $\Ximax$     & 1/4 $\Ximax$     \\ \midrule
			n=32 & $91.78 \pm 0.35$ & $92.36 \pm 0.21$ & $92.74 \pm 0.10$ & $92.77 \pm 0.25$ & $92.72 \pm 0.05$ & $93.04 \pm 0.01$ & $92.99 \pm 0.30$ & $92.87 \pm 0.11$  & $92.84 \pm 0.27$ & $92.60 \pm 0.00$ & $92.82 \pm 0.21$ & $92.85 \pm 0.24$ & $92.94 \pm 0.07$ & $93.03 \pm 0.24$ & $92.93 \pm 0.15$ \\
			n=64 & $90.31 \pm 0.12$ & $92.18 \pm 0.07$ & $92.45 \pm 0.17$ & -                & -                & $93.14 \pm 0.04$ & $92.94 \pm 0.10$ & $92.79 \pm 0.07 $ & -                & $92.23 \pm 0.12$ & $92.50 \pm 0.09$ & $92.60 \pm 0.10$ & $92.95 \pm 0.07$ & $92.83 \pm 0.12$ & $92.66 \pm 0.07$ \\
			\bottomrule
		\end{tabular}
	}
\end{table}

\paragraph{SlowMo cannot fully address the decentralized optimization/generalization difficulty.}
Table~\ref{tab:resnet20_cifar10_the_impact_of_slowmo} studies the effectiveness of using SlowMo for better decentralized training.
We can witness that even though the performance of decentralized training can be boosted to some extent,
it cannot fully address the quality loss issue brought by decentralized training.

\begin{table}[!h]
	\centering
	\caption{\small
		\textbf{The effect of SlowMo for decentralized learning}, for training ResNet20 on CIFAR-10 ($n=32$).
		The results (over three random seeds) use the tuned hyper-parameter of SlowMo
		mentioned in the original paper~\citep{Wang2020SlowMo}.
		The centralized baseline performance is $92.82 \pm 0.27$.
	}
	\vspace{-1em}
	\label{tab:resnet20_cifar10_the_impact_of_slowmo}
	\begin{tabular}{ccc}
		\toprule
		topology          & w/o SlowMo       & w/ SlowMo        \\ \midrule
		exponential graph & $92.63 \pm 0.22$ & $92.42 \pm 0.36$ \\
		ring              & $91.74 \pm 0.15$ & $92.53 \pm 0.10$ \\
		\bottomrule
	\end{tabular}
\end{table}

\paragraph{On the ineffectiveness of tuning learning rate.}
Table~\ref{tab:resnet20_cifar10_phase1_finetuned_learning_rates_on_ring}
displays the results of training ResNet-20 on CIFAR-10 (32 nodes),
with fine-tuned learning rate on phase-1;
learning rate tuning cannot address the test quality loss issue caused by the large consensus distance (i.e.\ over the CCD).

\begin{table}[!h]
	\centering
	\caption{\small
		\textbf{Phase-1 consensus distance control performance with fine-tuned learning rates} of training ResNet-20 on CIFAR-10 ($n\!=\!32$).
		Setup in this table is identical to that of Table~\ref{tab:resnet20_cifar10_different_consensus_distances_and_phases_by_constant_on_ring},
		except that we fine-tune the learning rate for each case from a grid of linear scaling-up factors $\{ 30, 28, 26, 24, 22 \}$.
		The results are over three seeds.
	}
	\vspace{-1em}
	\label{tab:resnet20_cifar10_phase1_finetuned_learning_rates_on_ring}
	\resizebox{.7\textwidth}{!}{%
		\begin{tabular}{cccc}
			\toprule
			                                 & $\Ximax$         & 1/2 $\Ximax$     & 1/4 $\Ximax$     \\ \midrule
			w/ tuned lr from the search grid & $91.95 \pm 0.26$ & $92.35 \pm 0.24$ & $92.54 \pm 0.08$ \\
			w/ default lr                    & $91.78 \pm 0.35$ & $92.36 \pm 0.21$ & $92.74 \pm 0.10$ \\
			\bottomrule
		\end{tabular}%
	}
\end{table}

\paragraph{Prolonged training for dec-phase-2 and dec-phase-3.}
Table~\ref{tab:resnet20_cifar10_ring_prolonged_phase_2_3} shows the results for prolonged dec-phase-2 and dec-phase-3
on CIFAR-10 with ResNet20. We can observe although longer training duration increases the performance, the improvement is rather small.

\begin{table}[!h]
	\centering
	\caption{\small
		\textbf{The impact of different numbers of training epochs (at phase-2 and phase-3)} on generalization,
		for training ResNet-20 on CIFAR-10 (ring topology with $n\!=\!32$).
		The number of epochs at phase-1 is chosen from $\{ 75, 100, 125 \}$,
		while the rest of the training reuses our default setup.
		Experiments are run over 2 seeds.
	}
	\vspace{-1em}
	\label{tab:resnet20_cifar10_ring_prolonged_phase_2_3}
		\begin{tabular}{cccc|ccc}
			\toprule
			\multirow{2}{*}{\diagbox{\# nodes}{target $\Xi$}}  & \multicolumn{3}{c|}{dec-phase-2}            & \multicolumn{3}{c}{dec-phase-3}  \\  \cmidrule(lr){2-7}
			             & $\Ximax$         & 1/2 $\Ximax$     & 1/4 $\Ximax$     & $\Ximax$          & 1/2 $\Ximax$     & 1/4 $\Ximax$     \\ \midrule
			$75$ epochs  & $93.04 \pm 0.01$ & $92.99 \pm 0.30$ & $92.87 \pm 0.11$ & $92.60 \pm 0.00 $ & $92.82 \pm 0.21$ & $92.85 \pm 0.24$ \\
			$100$ epochs & $93.08 \pm 0.08$ & $93.05 \pm 0.16$ & $92.94 \pm 0.03$ & $92.86 \pm 0.16$  & $92.90 \pm 0.18$ & $92.93 \pm 0.19$ \\
			$125$ epochs & $93.19 \pm 0.16$ & $93.11 \pm 0.17$ & $93.06 \pm 0.07$ & $92.87 \pm 0.23$  & $92.99 \pm 0.25$ & $92.97 \pm 0.20$ \\
			\bottomrule
		\end{tabular}%
\end{table}

\paragraph{The impact of half cosine learning rate schedule.}
Table~\ref{tab:resnet20_cifar10_ring_cosine_lr_annealing}
examines the existence of the critical consensus distance
with half cosine learning schedule
(this scheme is visited in~\citep{he2019bag} as a new paradigm for CNN training).
We can witness from Table~\ref{tab:resnet20_cifar10_ring_cosine_lr_annealing}
that the effect of critical consensus distance can be generalized to this learning rate schedule:
there exists a critical consensus distance in the initial training phase
(as revealed in the inline Figure of Table~\ref{tab:resnet20_cifar10_ring_cosine_lr_annealing})
and ensures good optimization and generalization.

\begin{table}[!h]
	\centering
	\caption{\small
		\textbf{The impact of half cosine learning rate schedule} on generalization,
		for training ResNet20 on CIFAR-10 (ring topology with $n\!=\!32$).
		The inline figure depicts the uncontrolled consensus distance over the whole training procedure
		through the half-cosine learning rate schedule.
		Only one training phase is considered for the consensus distance control
		and the numerical results in the table are averaged over 3 seeds.
	}
	\vspace{-1em}
	\label{tab:resnet20_cifar10_ring_cosine_lr_annealing}
	\includegraphics[width=.5\textwidth,]{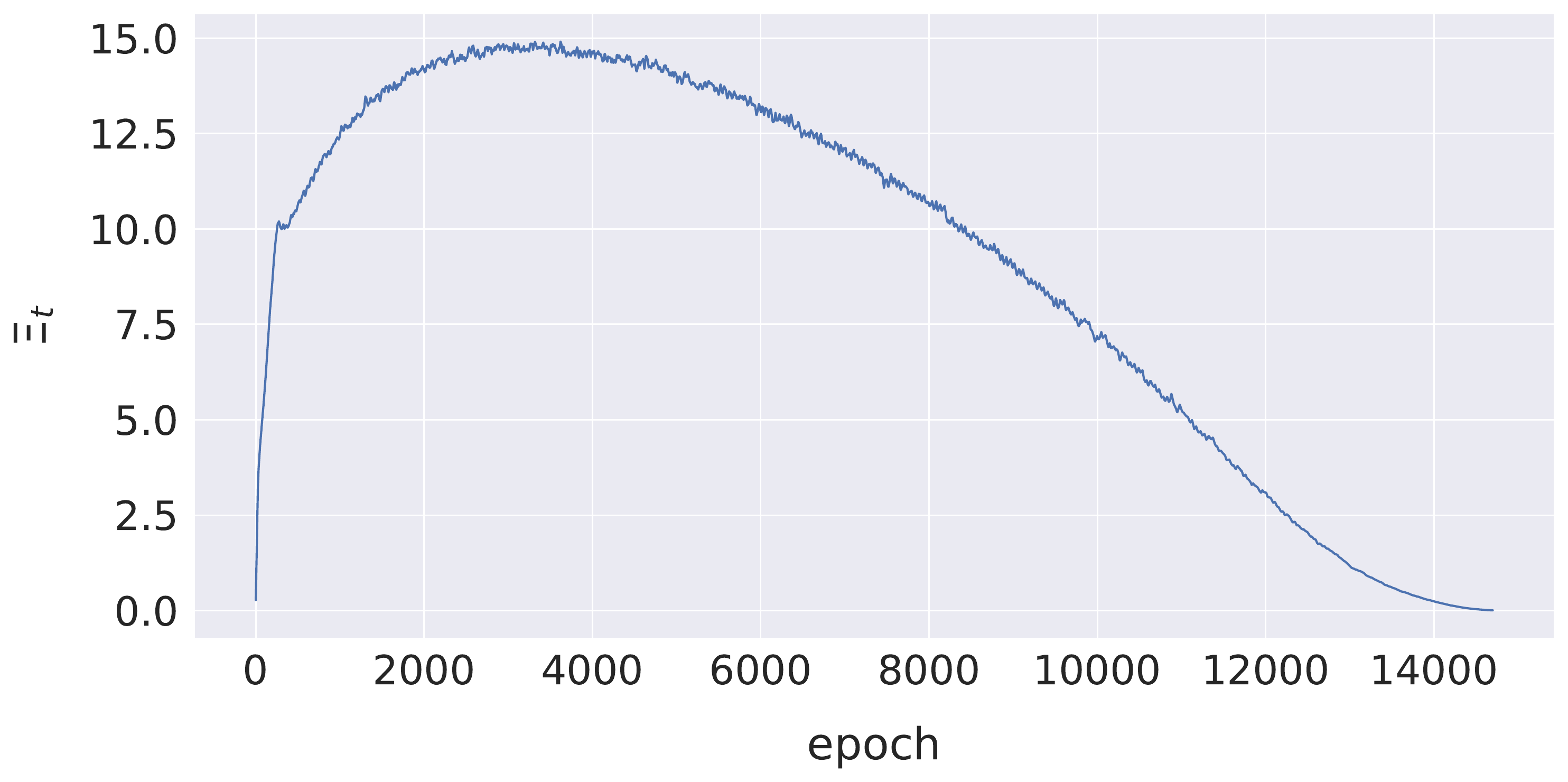}
	\begin{tabular}{ccccc}
		\toprule
		Ring ($\Ximax$)  & Ring ($1/2 \Ximax$) & Ring ($1/4 \Ximax$) & Ring ($1/8 \Ximax$) & Complete         \\ \midrule
		$92.10 \pm 0.06$ & $92.40 \pm 0.10$    & $92.83 \pm 0.11$    & $92.78 \pm 0.05$    & $92.84 \pm 0.22$ \\
		\bottomrule
	\end{tabular}
\end{table}

\clearpage
\subsubsection{Adaptive consensus distance control}~\label{subsec:adaptive_consensus_distance}
In Table~\ref{tab:resnet20_cifar10_different_consensus_distances_and_phases_by_ratio_on_ring},
we apply the adaptive consensus distance control in the experiments.
The observations are consistent with those in constant consensus distance control experiments.

\begin{table}[!h]
	\centering
	\caption{\small
		\textbf{
			The impact of different consensus distances on optimization and/or generalization,
			for different phases} of training ResNet-20 on CIFAR-10 ($n\!=\!32$).
		The table is almost identical to Table~\ref{tab:resnet20_cifar10_different_consensus_distances_and_phases_by_constant_on_ring},
		except the consensus distance is controlled by the (runtime) averaged norm of the local gradients (i.e. adaptive consensus distance).
	}
	\vspace{-1em}
	\label{tab:resnet20_cifar10_different_consensus_distances_and_phases_by_ratio_on_ring}
	\resizebox{.8\textwidth}{!}{%
		\begin{tabular}{c|ccccc}
			\toprule
			        & $\Ximax$         & $4 \phiema_t$    & $2 \phiema_t$    & $\phiema_t$      & $0.5 \phiema_t$  \\ \midrule
			Phase 1 & $91.78 \pm 0.35$ & $91.65 \pm 0.31$ & $92.47 \pm 0.18$ & $92.63 \pm 0.04$ & $92.80 \pm 0.16$ \\
			Phase 2 & $93.04 \pm 0.01$ & $93.05 \pm 0.18$ & $93.01 \pm 0.03$ & $93.03 \pm 0.08$ & $92.95 \pm 0.10$ \\
			Phase 3 & $92.94 \pm 0.07$ & $92.87 \pm 0.18$ & $92.83 \pm 0.20$ & -                & -                \\
			\bottomrule
		\end{tabular}%
	}
\end{table}

\subsection{Consensus control with other topologies} \label{subsec:other_topologies}
In Table~\ref{tab:resnet20_cifar10_phase_interference_on_undirected_exponential_graph},
we exert consensus control with an exponential graph as the base communication topology;
the local update step corresponds to the number of local model update steps per communication round,
and we use it as a way to increase discrepancy (consensus distance) among nodes.
We can observe that our findings from main experiments with a ring base topology are valid.
\begin{table}[!h]
	\centering
	\caption{\small
		The \textbf{impact of quality propagation across phases} (in both phase 1 and phase 2) on an \textbf{undirected time-varying exponential graph} ($n\!=\!32$),
		similar to Table~\ref{tab:resnet20_cifar10_phase_interference_on_ring}.
	}
	\label{tab:resnet20_cifar10_phase_interference_on_undirected_exponential_graph}
	\vspace{-1em}
	\resizebox{1.\textwidth}{!}{%
		\huge
		\begin{tabular}{lcccccccccccc}
			\toprule
			\multirow{2}{*}{\diagbox{phase 1}{phase 2}}  & \multicolumn{4}{c}{local update step = 1} & \multicolumn{4}{c}{local update step = 2}                      & \multicolumn{4}{c}{local update step = 4}                      \\ \cmidrule(lr){2-5} \cmidrule(lr){6-9} \cmidrule(lr){10-13}
			                 & $\Ximax$         & $2 \phiema_t$    & $\phiema_t$      & $0.5 \phiema_t$  & $\Ximax$         & $2 \phiema_t$    & $\phiema_t$      & $0.5 \phiema_t$  & $\Ximax$         & $2 \phiema_t$    & $\phiema_t$      & $0.5 \phiema_t$  \\ \midrule
			$2 \phiema_t$    & $92.43 \pm 0.16$ & $92.44 \pm 0.24$ & $92.36 \pm 0.06$ & $92.45 \pm 0.01$ & -                & -                & -                & -                & -                & -                & -                & -                \\
			$1 \phiema_t$    & $92.58 \pm 0.09$ & $92.37 \pm 0.14$ & $92.63 \pm 0.09$ & $92.51 \pm 0.16$ & -                & -                & -                & -                & -                & -                & -                & -                \\
			$0.5 \phiema_t$  & $92.74 \pm 0.17$ & $92.56 \pm 0.19$ & $92.56 \pm 0.21$ & $92.75 \pm 0.24$ & $92.79 \pm 0.13$ & $92.68 \pm 0.21$ & $92.65 \pm 0.07$ & $92.68 \pm 0.22$ & $92.85 \pm 0.09$ & $92.76 \pm 0.09$ & $92.72 \pm 0.21$ & $92.75 \pm 0.09$ \\
			$0.25 \phiema_t$ & $92.71 \pm 0.13$ & $92.72 \pm 0.08$ & $92.81 \pm 0.20$ & $92.76 \pm 0.24$ & $92.83 \pm 0.21$ & $92.86 \pm 0.16$ & $92.86 \pm 0.13$ & $92.81 \pm 0.26$ & $93.13 \pm 0.09$ & $92.88 \pm 0.16$ & $92.85 \pm 0.26$ & $92.77 \pm 0.23$ \\
			\bottomrule
		\end{tabular}%
	}
\end{table}

\subsubsection{The Existence of the Optimal Consensus Distance for Noise Injection.} \label{subsec:optimal_consensus_for_better_generalization}
Table~\ref{tab:resnet20_cifar10_phase2_on_undirected_time_varying_exponential_graph}
uses a different communication topology (i.e. time-varying exponential graph) for decentralized optimization.
Here exponential graph with large spectral gap is applied to CIFAR-10 dec-phase-2 training.
We apply the adaptive consensus distance control in this set of experiments.
We can observe that increasing consensus distance further by taking local steps improves generalization,
however, too many local steps diminish the performance. For instance, for ratio=2, the performance peaks at local update steps 2
and drops at local update 4.
It points out that an optimal consensus distance is required to inject proper stochastic noise for better generalization.

\begin{table}[!h]
	\centering
	\caption{\small
		\textbf{The impact of different consensus distances at phase 2},
		for training ResNet-20 on CIFAR-10 with time-varying exponential graph ($n\!=\!32$).
		The baseline performance of using exponential graph for the entire decentralized training is $92.64 \pm 0.04$.
		The reported test top-1 accuracies are averaged over three seeds.
	}
	\label{tab:resnet20_cifar10_phase2_on_undirected_time_varying_exponential_graph}
	\vspace{-1em}
	\resizebox{1.\textwidth}{!}{%
		\begin{tabular}{ccccccccccc}
			\toprule
			& \multicolumn{4}{c}{local update step = 1}             & \multicolumn{3}{c}{local update step = 2} & \multicolumn{3}{c}{local update step = 4} \\ \cmidrule(lr){2-5} \cmidrule(lr){6-8} \cmidrule(lr){9-11}
			  & $\Ximax$         & $2 \phiema_t$    & $\phiema_t$      & $0.5 \phiema_t$  & $2 \phiema_t$    & $\phiema_t$      & $0.5 \phiema_t$  & $2 \phiema_t$    & $\phiema_t$      & $0.5 \phiema_t$  \\ \midrule
			  & $92.83 \pm 0.12$ & $92.80 \pm 0.09$ & $92.74 \pm 0.27$ & $92.77 \pm 0.19$ & $93.04 \pm 0.08$ & $92.85 \pm 0.17$ & $92.80 \pm 0.02$ & $92.87 \pm 0.10$ & $92.90 \pm 0.12$ & $92.88 \pm 0.19$ \\
			\bottomrule
		\end{tabular}%
	}
\end{table}

\subsection{Results for Training Transformer on Multi30k} \label{sec:additional_results_on_transformer}
We additionally report the decentralized training results,
for a downsampled transformer models
(by the factor of $2$ w.r.t.\ the base model in~\citet{vaswani2017attention})
on Multi30k~\citep{elliott2016multi30k}.
Figure~\ref{fig:preliminary_transformer_appendix_version}
shows that the straightforward application of Adam in the decentralized manner
does encounter generalization problems,
which are attributed to the fact that the different local moment buffers (in addition to the weights)
become too diverse.
Tuning the learning rate schedule cannot address the issue of decentralized Adam,
as shown in the Figure~\ref{fig:preliminary_transformer_with_different_warmup_steps}.

\begin{figure*}[!h]
	\vspace{-0.5em}
	\centering
	\subfigure[\small
		The limitation of decentralized learning with Adam,
		caused by the different local moment buffers.
	]{
		\includegraphics[width=.475\textwidth,]{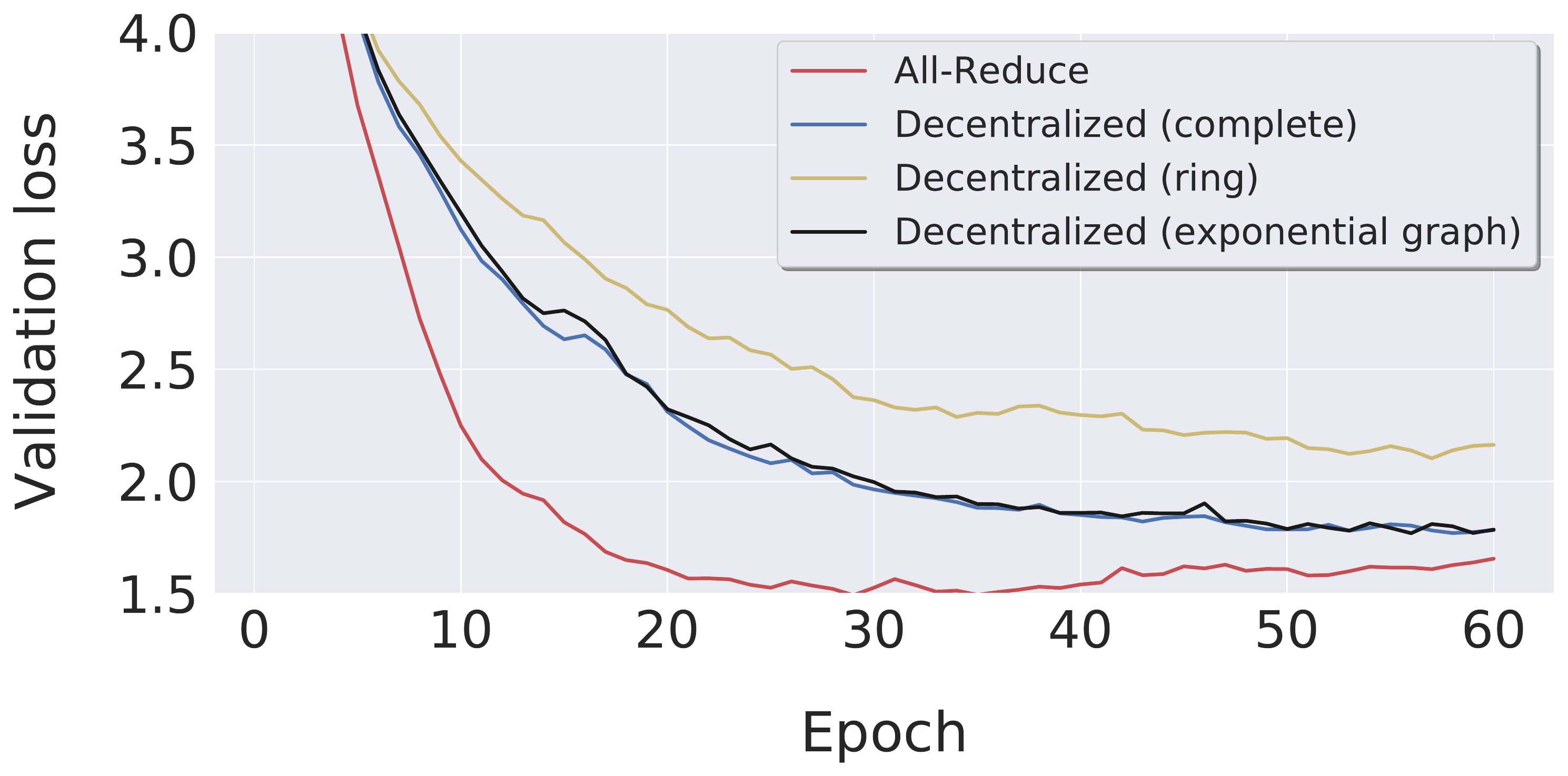}
		\label{fig:transformer_multi30k_baseline}
	}
	\hfill
	\subfigure[\small
		Tuning the learning rate cannot alleviate the issue of decentralized Adam.
	]{
		\includegraphics[width=.475\textwidth,]{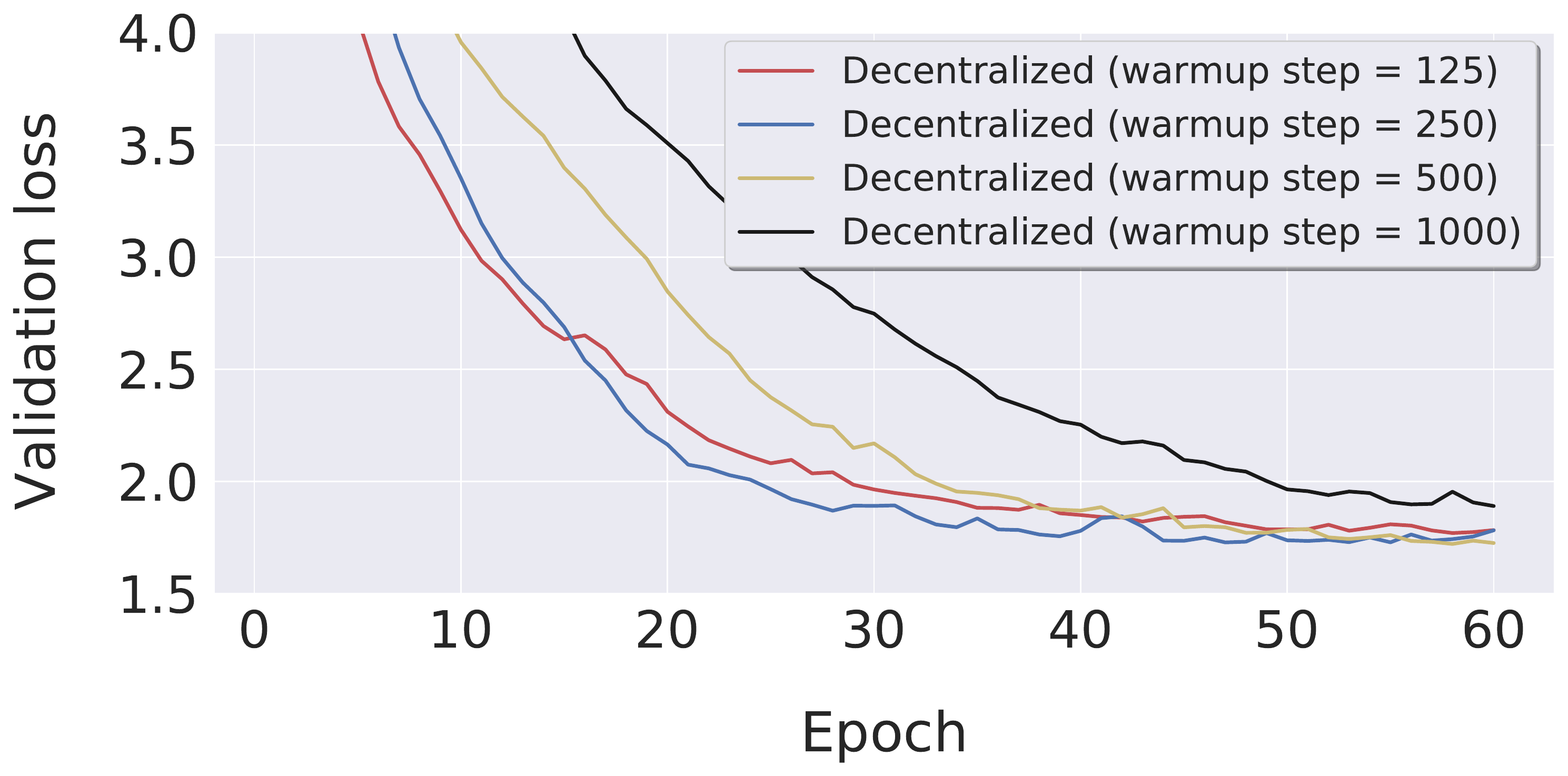}
		\label{fig:preliminary_transformer_with_different_warmup_steps}
	}
	\vspace{-1em}
	\caption{\small
		\textbf{Learning curves for training the transformer model on the Multi30k dataset} ($n\!=\!32$).
		In Figure~\ref{fig:preliminary_transformer_with_different_warmup_steps},
		we tune the the number of warmup steps as as way of tuning the learning rate,
		as the learning rate used in transformer training~\citep{vaswani2017attention}
		is deterministically controlled by the model's dimensionality, the current step index,
		and the number of warmup steps.
	}
	\vspace{-0.5em}
	\label{fig:preliminary_transformer_appendix_version}
\end{figure*}

\end{document}
